\documentclass{article}

\usepackage[final,nonatbib]{neurips_2023}

\usepackage[numbers]{natbib}

\usepackage{times}
\usepackage{url}            %
\usepackage{booktabs}       %
\usepackage{amsfonts}       %
\usepackage{nicefrac}       %
\usepackage{microtype}      %
\usepackage{mkolar_definitions}
\usepackage{xspace}
\usepackage{multirow}
\usepackage{amsmath}
\usepackage{algorithm}
\usepackage{algorithmic}
\usepackage{color}
\usepackage{color, colortbl}
\usepackage{enumitem}
\usepackage{comment}
\usepackage{bm}
\usepackage{subcaption}
\usepackage{wrapfig}

\usepackage{thmtools}
\newtheorem{myexp}{Example}

\usepackage{url}
\usepackage{csquotes}
\usepackage{tikz}

\usepackage[colorlinks=true, linkcolor=blue, citecolor=blue]{hyperref}

\usepackage{centernot}

\newcount\Comments  %
\Comments=1 %
\definecolor{darkgreen}{rgb}{0,0.5,0}
\definecolor{darkred}{rgb}{0.7,0,0}
\definecolor{teal}{rgb}{0.3,0.8,0.8}
\definecolor{orange}{rgb}{1.0,0.5,0.0}
\definecolor{purple}{rgb}{0.8,0.0,0.8}
\newcommand{\kibitz}[2]{\ifnum\Comments=1{\textcolor{#1}{\textsf{\footnotesize #2}}}\fi}
\usetikzlibrary{positioning}

\definecolor{Gray}{gray}{0.9}

\newcommand{\Prr}{\mathrm{Pr}}
\newcommand{\VM}{\mathrm{VM}}

\newcommand{\rI}{\mathrm{I}}

\usepackage[utf8]{inputenc} %
\usepackage{hyperref}       %
\usepackage{url}            %
\usepackage{booktabs}       %
\usepackage{amsfonts}       %
\usepackage{nicefrac}       %
\usepackage{microtype}      %
\usepackage{xcolor}         %

\title{Future-Dependent Value-Based \\ Off-Policy Evaluation in POMDPs}

\author{%
 \vspace{-1em} Masatoshi Uehara \thanks{Co-first author. This work was done at Cornell University}  \\
\vspace{-1em} Genentech  \\ 
  \texttt{uehara.masatoshi@gene.com} \\
  \And 
  \vspace{-1em} Haruka Kiyohara \thanks{Co-first author. This work was done at Tokyo Institute of Technology }\\ 
   \vspace{-1em} Cornell University  \\ 
\vspace{-1em}\texttt{hk844@cornell.edu} \\   
\And 
\vspace{-1em}  Andrew Bennett  \\ 
\vspace{-1em} Morgan Stanley \thanks{This work was done at Cornell University}\\
\vspace{-1em} \texttt{Andrew.Bennett@morganstanley.com} \\ 
 \And 
\vspace{-1em} Victor Chernozhukov \\
\vspace{-1em} MIT \\ 
\vspace{-1em} \texttt{vchern@mit.edu} \\ 
\And 
\vspace{-1em} Nan Jiang \\ 
\vspace{-1em} UIUC \\ 
\vspace{-1em} \texttt{nanjiang@illinois.edu}  \\ 
\And 
  \vspace{-1em} Nathan Kallus \\ 
   \vspace{-1em} Cornell University \\ 
    \vspace{-1em}  \texttt{kallus@cornell.edu} \\ 
    \And
\vspace{-1em}   Chengchun Shi \\ 
\vspace{-1em}  LSE \\ 
\vspace{-1em} \texttt{c.shi7@lse.ac.uk}  \\
\And 
 \vspace{-1em}   Wen Sun  \\
  \vspace{-1em} Cornell University \\
   \vspace{-1em} \texttt{ws455@cornell.edu} \\
}
\usepackage[compact]{titlesec}

\begin{document}

\maketitle

\begin{abstract}
 We study off-policy evaluation (OPE) for partially observable MDPs (POMDPs) with general function approximation. Existing methods such as sequential importance sampling estimators suffer from the curse of horizon in POMDPs.  %
 To circumvent this problem, we develop a novel model-free OPE method by introducing future-dependent value functions that take future proxies as inputs %
 and perform a similar role to that of classical value functions in fully-observable MDPs. We derive a new off-policy Bellman equation for future-dependent value functions as conditional moment equations that use history proxies as instrumental variables. We further propose a minimax learning method to learn future-dependent value functions using the new Bellman equation. We obtain the PAC result, which implies our OPE estimator is close to the true policy value under Bellman completeness, as long as futures and histories contain sufficient information about latent states. Our code is available at \href{https://github.com/aiueola/neurips2023-future-dependent-ope}{https://github.com/aiueola/neurips2023-future-dependent-ope}. 

\end{abstract}

\vspace{-0.1cm}
\section{Introduction} \label{sec:intro}
\vspace{-0.1cm}

Reinforcement learning (RL) has demonstrated success when it is possible to interact with the environment and collect data in an adaptive manner. However, for domains such as healthcare, education, robotics, and social sciences, online learning can be risky and unethical \citep[e.g.,][]{klasnja2015microrandomized,kosorok2019precision,tsiatis2019dynamic,singla2021reinforcement}. To address this issue, a variety of offline RL methods have recently been developed for policy learning and evaluation using historical data in these domains \citep[see][for an overview]{levine2020offline}. In this paper, we focus on the off-policy evaluation (OPE) problem, which concerns estimating the value of a new policy (called the evaluation policy) using offline log data that was generated under a different policy (called the behavior policy) \citep{thomas2016data,jiang2016doubly,tsiatis2019dynamic,chen2022well}. OPE is especially useful in the aforementioned high-stakes domains.

We focus on off-policy evaluation (OPE) in partially observable Markov decision processes (POMDPs). Partial observability is a common phenomenon in practical applications \citep{cassandra1998survey,kaelbling1998planning,xu2020latent}, and it poses a serious challenge for sample-efficient learning. Most existing OPE methods are developed for MDPs and their statistical learning guarantees rely crucially on the Markov assumption. %
To extend these methods to POMDPs, one may 
use the entire history of observations as a \emph{state} to satisfy Markovanity. This allows us to apply sequential importance sampling \citep[SIS,][]{precup2000eligibility,jiang2019entropy,hu2021off} or its variant  \citep[e.g., sequential doubly robust (SDR) methods][]{rotnitzky1998semiparametric,murphy2001marginal,jiang2016doubly,thomas2016data,farajtabar2018more,bibaut2019more} to the transformed data for valid policy evaluation. %
An alternative approach is to employ %
value-based methods by constructing a history-dependent Q-function that takes the entire history as input. These value functions can be estimated via fitted-Q evaluation \citep[FQE,][]{ernst2005tree,munos2008finite,le2019batch} or minimax methods \citep{antos2008learning,chen2019information,feng2019kernel,nachum2019dualdice,uehara2020minimax,uehara2021finite,zanette2022bellman}. However, all aforementioned estimators suffer from the curse of horizon, as their estimation errors grow exponentially with respect to the (effective) horizon and become prohibitively large in long-horizon settings \citep{Liu2018,kallus2020double,kallus2022efficiently}.

The goal of this paper is to devise practical and efficient %
OPE methods %
in large partially observable %
environments, while breaking the curse of horizon. 
As a first step, we restrict our attention to the evaluation of short memory-based policies that take several recent observations instead of the entire history as inputs. Short memory-based policies are widely used in practice, since maintaining the whole previous observation leads to a well-known computational challenge dubbed as the curse of history \citep{pineau2006anytime,golowich2022planning}.  For example, in the deep Q-network (DQN) algorithm, four game frames are stacked together to train the optimal policy in Atari \citep{mnih2013playing}, while Open AI Five uses a window of length 16 \citep{berner2019dota}. Even when considering the evaluation of policies with short-memory or memory-less characteristics (i.e., policies that depend solely on current observations),   na\"ive methods such as SIS, SDR, and FQE still suffer from the curse of horizon\footnote{Notice that even when restricting to short-memory policies, it is necessary to use the entire history instead of the short memory as the state to meet the Markov property.}.%

Our proposal takes a model-free perspective and introduces a new concept called ``future-dependent value functions''. %
The proposed value function does not involve latent states which are unobservable in POMDPs. Nor does it relies on the entire data history. It performs the role of standard value functions by incorporating future observations as proxies for latent states. 
We demonstrate that these future-dependent value functions satisfy a new off-policy Bellman equation for POMDPs, which involves past observations %
to approximate latent states. To estimate these future-dependent value functions from offline data using the new Bellman equation, we propose minimax learning methods that accommodate various function approximation, including deep neural networks, RKHS, and linear models. Notably, our method extends the classical LSTD \citep{lagoudakis2003least} developed in MDPs to the POMDP setting when using linear models. 
To summarize, our contributions are as follows: 
\begin{itemize}[leftmargin=*]
    \item We propose a novel model-free method for OPE that leverages future-dependent value functions. Our key idea is to utilize future and historical observations as proxies for latent states. 
    
    \item We derive a new Bellman equation for learning future-dependent value functions. %
    The proposed estimator can accommodate various types of %
    function approximations.
    \item  We provide PAC guarantees to demonstrate that our method can address the curse of horizon, and conduct numerical experiments to showcase its superiority over existing methods.
\end{itemize}

Note that similar concepts of future-dependent value functions have recently been introduced in the context of OPE in confounded POMDPs \citep{shi2021minimax} and online RL in POMDPs \citep{uehara2022provably}. However, we focus on OPE in \emph{non-confounded} POMDPs. Further detailed discussions %
can be found in Section~\ref{subsec:related}.

\vspace{-0.1cm}
\subsection{Related Literature} \label{subsec:related}
\vspace{-0.1cm}

\vspace{-0.15cm}
\textbf{OPE in confounded POMDPs.} \quad OPE with unmeasured confounders has been actively studied  \citep{zhang2016markov,namkoong2020off,tennenholtz2020off,wang2020provably,kallus2020confounding,liao2021instrumental,nair2021spectral,guo2022provably,shi2022off,xu2022instrumental,lu2022pessimism,chen2023unified,bruns2023robust}. 
Among them, \cite{tennenholtz2020off} adopted the POMDP model to formulate the OPE problem in the presence of unmeasured confounders. %
They borrowed ideas from the causal inference literature on double negative control adjustment \citep{miao2018identifying,miao2018confounding,cui2020semiparametric,kallus2021causal,shi2020multiply} to derive consistent value estimators in tabular settings. Later,  \cite{bennett2021off} and \cite{shi2021minimax} extend their proposal by incorporating confounding bridge functions, thereby enabling general function approximation. (see the difference between these bridge functions and the proposed future-dependent value function in \pref{sec:difference}). However, these methods do not apply to our unconfounded POMDP setting. This is because these methods require the behavior policy to \textit{only} depend on the latent state to ensure certain conditional independence assumptions. These assumptions are violated in our setting, where the behavior policy may depend on the observation -- a common scenario in practical applications. In addition, we show that in the unconfounded setting, it is feasible to leverage multi-step future observations to relax specific rank conditions in their proposal, which is found to be difficult in the confounded setting \citep{nair2021spectral}. %
Refer to Example~\ref{exa:tabular}.   %

\vspace{-0.15cm}
\textbf{Online learning in POMDPs.} \quad In the literature, statistically efficient online learning algorithms with polynomial sample complexity have been proposed in tabular POMDPs 
\citep{guo2016pac,azizzadenesheli2016reinforcement,jin2020sample,liu2022partially}, %
linear quadratic Gaussian setting (LQG) \citep{lale2021adaptive,simchowitz2020improper}, latent POMDPs \citep{kwon2021rl} and reactive POMDPs/PSRs \citep{krishnamurthy2016pac,jiang2017contextual}. All the methods above require certain model assumptions. To provide a more unified framework, researchers have actively investigated online learning in POMDPs with general function approximation, as evidenced by a vast body of work \citep{zhan2022pac,liu2022optimistic,chen2022partially,zhong2022posterior}. As the most relevant work, by leveraging future-dependent value functions, \citep{uehara2022provably} propose an efficient PAC RL algorithm in the online setting. They require the existence of future-dependent value functions and a low-rank property of the Bellman loss. Instead, in our approach, while we similarly require the existence of future-dependent value functions, we do not require the low-rank property of Bellman loss. Instead, we use the invertibility condition in Theorem~\ref{thm:identify}, i.e., we have informative history proxies as instrumental variables. As a result, the strategies to ensure efficient PAC RL in POMDPs are quite different in the offline and online settings.

\vspace{-0.15cm}
\textbf{Learning dynamical systems via spectral learning.}  \quad There is a rich literature on POMDPs by representing them as predictive state representations (PSRs) \citep{jaeger2000observable,littman2001predictive,rosencrantz2004learning,singh2004predictive}. PSRs are models of dynamical systems that represent the state as a vector of predictions about future observable events. They are appealing because they can be defined directly from observable data without inferring hidden variables and are more expressive than Hidden Markov Models (HMMs) and POMDPs \citep{singh2004predictive}. Several spectral learning algorithms have been proposed for PSRs \citep{boots2011closing,hsu2012spectral,boots2013hilbert}, utilizing conditional mean embeddings or Hilbert space embeddings. These approaches provide closed-form learning solutions, which simplifies computation compared to EM-type approaches prone to local optima and non-singularity issues.  %
While these methods have demonstrated success in real-world applications  \citep{boots2011closing} and have seen subsequent improvements  \citep{kulesza2015spectral,hefny2015supervised,venkatraman2016online,downey2017predictive}, it is still unclear how to incorporate more flexible function approximations such as neural networks in a model-free end-to-end manner with a valid PAC guarantee. %
Our proposed approach not only offers a new model-free method for OPE, but it also incorporates these model-based spectral learning methods when specialized to dynamical system learning with linear models (see \pref{sec:modeling}).

\vspace{-0.15cm}
\paragraph{Planning in POMDPs.} 

There is a large amount of literature on planning in POMDPs. Even if the models are known, exact (or nearly optimal) planning in POMDPs is known to be %
NP-hard in the sense that it requires exponential time complexity with respect to the horizon %
\citep{papadimitriou1987complexity,burago1996complexity}. This computational challenge is often referred to as the \emph{curse of history} \citep{pineau2006anytime}. A natural idea to mitigate this issue is to restrict our attention to %
short-memory policies \citep{azizzadenesheli2018policy,mcdonald2020exponential,kara2022near,golowich2022planning}. Practically, %
a short-memory policy can achieve good empirical performance, as mentioned earlier. %
This motivates us to focus on %
evaluating short-memory policies in this paper. %

\vspace{-0.15cm}
\section{Preliminaries}\label{sec:preliminary}
\vspace{-0.15cm}

In this section, we introduce the model setup, describe the offline data and present some notations. %

\vspace{-0.15cm}
\paragraph{Model setup.} We consider an infinite-horizon discounted POMDP  $\Mcal = \langle \Scal,\Acal,\Ocal, r,\gamma, \OO, \TT \rangle$ where $\Scal$ denotes the state space, $\Acal$ denotes the action space, $\Ocal$ denotes the observation space, $\OO: \Scal\to \Delta(\Ocal)$ denotes the emission kernel (i.e., the conditional distribution of the observation given the state), $\TT:\Scal \times \Acal \to \Delta(\Scal)$ denotes the state transition kernel (i.e., the conditional distribution of the next state given the current state-action pair), $r: \Scal \times \Acal \to \RR$ denotes the reward function, and $\gamma \in [0,1)$ is the discount factor. All the three functions $\TT,r,\OO$ are unknown to the learner.

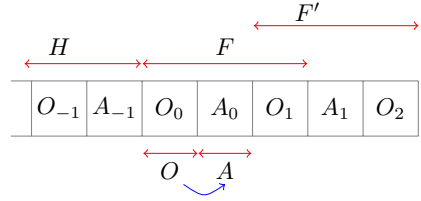
\begin{wrapfigure}{!t}{0.5\textwidth}
\centering
\resizebox{0.4\textwidth}{!}{
\begin{tikzpicture}
\draw[step=0.8cm,gray,very thin] (-1.9,0) grid (4.0,0.8);
\draw (0.4 cm,0.4cm) node  {$O_0$};
\draw (1.2 cm,0.4cm) node  {$A_0$};
\draw (2.0 cm,0.4cm) node  {$O_1$};
\draw (2.8 cm,0.4cm) node  {$A_1$};
\draw (3.6 cm,0.4cm) node  {$O_2$};

\draw (-0.4 cm,0.4cm) node  {$A_{-1}$};
\draw (-1.2 cm,0.4cm) node  {$O_{-1}$};

\draw[red, <->] (-1.7,1.05) -- (-0.02,1.05); 
\draw (-1.2 cm,1.3cm) node  {$H$};

\draw[red, <->] (0.01,1.05) -- (2.39,1.05); 
\draw (1.2 cm,1.3cm) node  {$F$};

\draw[red, <->] (0.02, -0.25) -- (0.8,-0.25); 
\draw (0.4 cm,-0.5cm) node  {$O$};

\draw[red, <->] (0.82, -0.25) -- (1.58,-0.25); 
\draw (1.2 cm,-0.5cm) node  {$A$};

\draw[red, <->] (1.62, 1.6) -- (4.0, 1.6); 
\draw (2.4 cm,1.8cm) node  {$F'$};

\draw[blue, ->] (0.6,-0.7) .. controls (0.9,-0.9) ..(1.2,-0.7); 

\end{tikzpicture}
}
\caption{Case with $M_H=1,M_F=2$. An action $A$ is generated depending on $O$. The extension to memory-based policy is discussed in \pref{sec:history_based}. }\label{fig:pomdp3}
\vspace{-0.2cm}
\end{wrapfigure}

For simplicity, we consider the evaluation of memory-less policies $\pi: \Ocal \to \Delta(\Acal)$ that solely depend on the current observation $O$ in the main text. The extension to policies with short-memory is discussed in \pref{sec:history_based}. 

Next, given a memory-less evaluation policy $\pi^e$, we define the parameter of interest, i.e., the policy value. Following a policy $\pi^e$, the data generating process can be described as follows. First, $S_0$ is generated according to some initial distribution $\nu_{\Scal} \in \Delta(\Scal)$. Next, the agent observes $ O_0 \sim \OO(\cdot \mid S_0)$, executes the initial action $A_0 \sim \pi^e(\cdot \mid O_0)$%
, receives a reward $r(S_0,A_0)$, the environment transits to the next state $S_1 \sim \TT(\cdot \mid S_0,A_0)$, and this process repeats. %
Our objective is to estimate 
$$ \textstyle J(\pi^e):=\EE_{\pi^e}\bracks{\sum_{t=0}^{\infty} \gamma^{t} R_t },$$
where the expectation $\EE_{\pi^e}$ is taken by assuming the data trajectory follows the evaluation policy $\pi^e$.

\paragraph{Offline data.} %
We convert the trajectory data generated by a behavior policy $\pi^b:\Ocal \to \Delta(\Acal)$, into a set of history-observation-action-reward-future transition tuples (denoted by $\mathcal{D}_{\mathrm{tra}}$) and a set of initial observations (denoted by $\Dcal_{\mathrm{ini}}$). The first data subset enables us to learn the reward, emission and transition kernels whereas the second data subset allows us to learn the initial observation distribution. 

Specifically, the dataset $\Dcal_{\mathrm{tra}}$ consists of $N$ data tuples  $\{(H^{(i)},O^{(i)},A^{(i)},R^{(i)},F'^{(i)})\}_{i=1}^N$. We use $(H, O, A, R, F')$ to denote a generic history-observation-action-reward-future tuple where $H$ denotes the $M_H$-step historical observations obtained prior to the observation $O$ and $F'$ denotes the $M_F$-step future observations after $(O,A)$ for some integers $M_H,M_F\geq 1$.  %
Specifically, at a given time step $t$ in the data trajectory, 
we use $(O,A,R)$ to denote $(O_t,A_t,R_t)$, and set
\begin{align*}
    H= (O_{t-M_H:t-1 },A_{t-M_H:t-1})\,\,\hbox{and}\,\,F'=(O_{t+1:t+M_F},A_{t+1:t+M_F-1}).
\end{align*}
We additionally set $F=(O_{t:t+M_F-1},A_{t:t+M_F-2})$. Note we use the prime symbol ' to represent the next time step. 
See Figure \ref{fig:pomdp3} for details when we set $t=0$. 

Throughout this paper, we use uppercase letters such as $(H,S,O,A,R,S',F')$ to denote \emph{random variables} in the offline data, and lowercase letters such as $(h,s,o,a,r,s',f')$ to denote their \emph{realizations}, unless stated otherwise. To simplify the presentation, we assume the stationarity of the environment, i.e., the marginal distributions of $(H,S,F)$ and $(H',S',F')$ are identical.

The dataset $\Dcal_{\mathrm{ini}}$ consists of $N'$ data tuples $\{O^{(i)}_{0:M_F-1},A^{(i)}_{0:M_F-2} \}_{i=1}^{N'}$ which is generated as follows: $ S^{}_0 \sim \nu_{\Scal}$, $O^{}_0 \sim \OO(\cdot \mid S^{}_0)$, 
$A^{}_0 \sim \pi^b(\cdot \mid O^{}_0)$, $S^{}_1 \sim \TT(\cdot \mid S^{}_0,A^{}_0),\cdots$, until we observe $O^{(i)}_{M_F-1}$ and $A^{(i)}_{M_F-1}$. We denote its distribution over 
 $\Fcal=\Ocal^{M_F}\times \Acal^{M_F-1}$ by $\nu_{\Fcal}(\cdot)$. 

\begin{remark}[Standard MDPs]
Consider the setting where $S=O$ and $M_H=0,M_F=1$. In that case, we set $H$ to $S$ instead of histories. Then, $\Dcal_{\mathrm{tra}}=\{O^{(i)},A^{(i)},R^{(i)},O'^{(i)}\}$. We often assume $\nu_{\Ocal}$ ($\nu_{\Fcal}$ in our setting) is known. This yields the standard OPE setting in MDPs \citep{chen2019information,uehara2020minimax}.

\end{remark}

\paragraph{Notation.} We streamline the notation as follows. We define a state value function under $\pi^e$: $V^{\pi^e}(s ) \coloneqq \EE_{\pi^e}[\sum_{k=0}^{\infty} \gamma^{k} R_k \mid S_0=s  ] $  for any $s\in \Scal$. Let $d^{\pi^e}_t(\cdot)$ be the marginal distribution of $S_t$ under the policy $\pi^e$. Then, we define the discounted occupancy distribution $d_{\pi^e}(\cdot) \coloneqq (1-\gamma)\sum_{t=0}^{\infty} \gamma^t d^{\pi^e}_t(\cdot) $. We denote the domain of $F,H$ by $\Fcal = (\Ocal\times \Acal)^{M_F-1}\times \Ocal, \Hcal = (\Ocal\times \Acal)^{M_H}$, respectively. The notations $\mathbb{E}, \mathbb{E}_{\Dcal}$ (without any subscripts) represent the population or sample average over the offline data $\Dcal=\Dcal_{\mathrm{tra}}\cup \Dcal_{\mathrm{ini}}$, respectively. We denote the distribution of offline data by $P_{\pi^b}(\cdot)$. We define the marginal density ratio $w_{\pi^e}(S):= d_{\pi^e}(S)/P_{\pi^b}(S)$. Given a matrix $C$, we denote its Moore-Penrose inverse by $C^{+}$ and its smallest singular value by $\sigma_{\min}(C)$. For a given integer $m>0$, let $I_{m}$ denote an $m\times m$ identity matrix. Let $\otimes$ denote outer product and $[T]=\{0,\cdots,T\}$ for any integer $T>0$.

\vspace{-0.1cm}
\section{Identification via Future-dependent Value Functions}\label{sec:identification}
\vspace{-0.1cm}

In this section, we present our proposal to identify policy values under partial observability by introducing future-dependent value functions. We remark that the target policy's value is identifiable from the observed data via SIS or SDR. Nonetheless, as commented earlier in \pref{sec:intro}, these methods suffer from the curse of horizon. Here, we propose an alternative identification approach that can possibly circumvent the curse of horizon. It serves as a building block to motivate the proposed estimation methods in Section \ref{sec:algorithm}. 

In fully observable MDPs, estimating a policy's value is straightforward using the value function-based method $J(\pi^e)=\EE_{s \sim \nu_{\Scal}}[V^{\pi^e}(s)]$. However, in partial observable environments where the latent state is inaccessible, the state value function $V^{\pi^e}(s)$ is unidentifiable.  
To address this challenge, we propose the use of \textit{future-dependent value functions} that are defined based on observed variables and serve a similar purpose to state value functions in MDPs.

\begin{definition}[Future-dependent value functions]\label{def:valuelink}
Future-dependent value functions $g_V \in [\Fcal \to \RR]$  are defined such that the following holds almost surely,
\begin{align*}
     \EE[g_V(F) \mid S] = V^{\pi^e}(S ). 
\end{align*}
Recall that the expectation is taken with respect to the offline data generated by $\pi^b$. 
\end{definition}

{ Crucially, the future-dependent value functions mentioned above may not always exist, and they need not be unique. Existence is a vital assumption in our framework, although we don't insist on uniqueness. We will return to the topics of existence and uniqueness after demonstrating their relevance in offline policy evaluation.} 

Hereafter, we explain the usefulness of future-dependent value functions. Future-dependent value functions are defined as embeddings of value functions on latent states onto multi-step futures. Notice that $J(\pi^e)=\EE_{s \sim \nu_{\Scal}}[V^{\pi^e}(s)]=\EE_{f \sim \nu_{\Fcal}}[g_{V}(f)]$. These future-dependent value functions are useful in the context of OPE as they enable us to accurately estimate the final policy value. However, the future-dependent value function itself cannot be identified since its definition relies on unobserved states. To overcome this challenge, we introduce a learnable counterpart called the \textit{learnable future-dependent value function}. This learnable version is defined based on observed quantities and thus can be identified.

\begin{definition}[Learnable future-dependent value functions]\label{def:learnable}
Define $\mu(O,A):=\pi^e(A\mid O)/\pi^b(A\mid O)$.
Learnable future-dependent value functions $b_V \in [\Fcal \to \RR]$ are defined such that the following  holds almost surely, 
\begin{align}\label{eq:learnable}
    0 = \EE\bracks{\mu(O,A) \{R+\gamma b_V (F')     \} - b_V (F )  \mid H}. 
\end{align}
Recall that the expectation is taken with respect to the offline data generated by $\pi^b$. We denote the set of solutions by $\Bcal_V$. 
\end{definition}

To motivate this definition, we recall that the off-policy Bellman equation in MDPs \citep{dann2014policy} can be expressed as follows:
\begin{align}\label{eq:standard_bellman}
     V^{\pi^e}(S )%
     = \EE\bracks{ \mu(O,A)(R  +  \gamma V^{\pi^e}(S')) \mid S }. 
\end{align}
Then, by the definition of future-dependent value functions and certain conditional independence relations ($F'\perp (O,A) \mid S'  $), we obtain that 
\begin{align}\label{eq:easy}
    0= \EE\bracks{ \mu(O,A)\braces{R  +  \gamma g_V(F')}-  g_V(F ) \mid S }. 
\end{align}
Since $H \perp (O,A,F') \mid S $, taking another conditional expectation given $H$ on both sides yields that
\begin{align}\label{eq:learnable2}
    0= \EE\bracks{ \mu(O,A)\braces{R  +  \gamma g_V(F')}-  g_V(F ) \mid H }. 
\end{align}
Therefore, \eqref{eq:learnable2} can be seen as an off-policy Bellman equation for future-dependent value functions, analogous to the Bellman equation \eqref{eq:standard_bellman} in MDPs. Based on the above discussion, we present the following lemma.

\begin{lemma}\label{lem:lemma_value}
Future-dependent value functions are learnable future-dependent value functions.
\end{lemma}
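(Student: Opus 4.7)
The plan is to derive the learnable-future-dependent moment equation from the standard Bellman equation, pushing the conditioning first inside the state--history pair and then down to the observable history $H$. The argument is essentially sketched in the surrounding discussion; my job is to fill in the two conditional-independence invocations carefully.

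First I would write down the importance-sampled Bellman equation at $(Z,S)$:
\begin{align*}
    V^{\pi^e}(Z,S) \;=\; \EE\bracks{\mu(Z,O,A)\prns{R + \gamma V^{\pi^e}(Z',S')} \,\big|\, Z,S},
\end{align*}
using that the behavior policy and evaluation policy are both $M$-memory and differ only in the action channel so that the change of measure reduces to the factor $\mu(Z,O,A)$. Substituting the definition $V^{\pi^e}(Z,S) = \EE[g_V(Z,F)\mid Z,S]$ handles the left-hand side. For the right-hand side I want to replace $V^{\pi^e}(Z',S')$ by $g_V(Z',F')$ inside the conditional expectation. This needs the Markov-type conditional independence
\begin{align*}
    F' \;\bigCI\; (Z,S,O,A,R) \,\big|\, (Z',S'),
\end{align*}
which holds because, given $S'=S_{t+1}$ and $Z'=(O_{t-M+1:t},A_{t-M+1:t})$, the distribution of $(O_{t+1:t+M_F},A_{t+1:t+M_F-1})$ is fully determined by the emission, transition, and the $M$-memory behavior policy. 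Combined with $\EE[g_V(Z',F')\mid Z',S']=V^{\pi^e}(Z',S')$ and the tower property, this lets me substitute inside. The result is the intermediate moment
\begin{align*}
    0 \;=\; \EE\bracks{\mu(Z,O,A)\braces{R+\gamma g_V(Z',F')} - g_V(Z,F) \,\big|\, Z,S},
\end{align*}
which is exactly equation \eqref{eq:easy} in the text.

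The remaining step lifts the conditioning from $(Z,S)$ to $H$. Since $M_H>M$ the tuple $Z$ is a measurable function of $H$, so I can apply the tower property via the enlarged $\sigma$-algebra generated by $(H,S)$:
\begin{align*}
    \EE[X \mid H] \;=\; \EE\bracks{\EE[X\mid H,S] \,\big|\, H},
\end{align*}
where $X \defeq \mu(Z,O,A)\{R+\gamma g_V(Z',F')\} - g_V(Z,F)$. I then invoke the second conditional independence
\begin{align*}
    (O,A,R,Z',F',F) \;\bigCI\; H \,\big|\, (Z,S),
\end{align*}
which is the standard statement that, conditional on the current latent state $S$ and the $M$-memory window $Z$ on which the behavior policy depends, everything from time $t$ onward is independent of the strictly earlier history $H\setminus Z$. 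This gives $\EE[X\mid H,S]=\EE[X\mid Z,S]=0$, and the outer expectation produces $\EE[X\mid H]=0$, which is Definition~\ref{def:learnable}.

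The only real subtlety is formalizing the two conditional independencies above from the POMDP dynamics and the $M$-memory structure of $\pi^b$; everything else is the Bellman equation plus tower property. If the paper prefers, these independencies can be read directly off the graphical model in Figure~\ref{fig:pomdp2}, so I would simply cite the figure rather than verify d-separation in detail.
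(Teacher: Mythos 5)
Your proof is correct and follows essentially the same route as the paper's: importance-sampled Bellman equation, substitution of $g_V$ via the two conditional independencies $F' \bigCI (Z\setminus Z'),S,O,A \mid Z',S'$ and $(H\setminus Z) \bigCI (O,A,R,F') \mid Z,S$, then the tower property to lift the conditioning from $(Z,S)$ to $H$. The only detail the paper makes explicit that you leave implicit is that the identity $\EE[g_V(Z',F')\mid Z',S']=V^{\pi^e}(Z',S')$ for the \emph{primed} variables relies on the stationarity assumption (Definition~\ref{def:valuelink} only constrains $g_V$ at the unprimed time step), so you should cite that assumption at that substitution.
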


\begin{remark}[Non-stationary case] When the offline data is non-stationary, i.e, the pdfs of $(H,S)$ and $(H',S')$  are different, we need to additionally require $\EE[g_V(F') \mid S'] = V^{\pi^e}(S' )$ in the definition. %
 \end{remark}

 \begin{remark}[Comparisons with related works]

Similar concepts have been recently proposed in the context of confounded POMDPs \citep[see e.g.,][]{shi2021minimax,bennett2021}. However, our proposal significantly differs from theirs. First, their confounded setting does not cover our unconfounded setting because their behavior policies $\pi^b$ is \emph{not} allowed to depend on current observations as mentioned in Section~\ref{subsec:related}. Secondly, their proposal
heir proposal overlooks the significant aspect of incorporating multi-step future observations, which plays a pivotal role in facilitating the existence 
of future-dependent value functions, as will be discussed in Example~\ref{exa:tabular}. For a detailed discussion, refer to \pref{sec:difference}. 
\end{remark}

Finally, we present a theorem to identify the policy value.

\begin{theorem}[Identification] \label{thm:identify}
Suppose (\ref{thm:identify}a) 
the existence of learnable future-dependent value functions (need not be unique); (\ref{thm:identify}b) the invertiblity condition, i.e., any $g:  \Scal \to \RR$ that satisfies $\EE[g(S)\mid H]=0$ must also satisfy $g(S)=0$ (
i.e., $g(s)=0$ for almost every $s$ that belongs to the support of $S$), (\ref{thm:identify}c) the overlap condition $w_{\pi^e}(S):=d_{\pi^e}(S)/P_{\pi^b}(S)<\infty, \mu(O,A)<\infty$. Then, for any $b_V\in \Bcal_V$, %
     \begin{align}
         J(\pi^e) = \EE_{f \sim \nu_{ \Fcal}}[b_V(f) ]. 
    \end{align}
\end{theorem}

We assume three key conditions: the observability condition (i.e., $\Bcal_V \neq \emptyset$), the invertibility condition, and the overlap condition. We call Condition (\ref{thm:identify}a) the observability condition since it is reduced to the well-known concept of observability in the LQG control theory, as we will see in \pref{sec:hse}. While Condition (\ref{thm:identify}a) itself is concerned with learnable future-dependent value functions, it is implied by the existence of \textit{unlearnable} future-dependent value functions according to \pref{lem:lemma_value} %
which can be verified using Picard's theorem %
in functional analysis \citep{carrasco2007linear}. In general, the observability condition requires the future proxy $F$ to contain sufficient information about $S$ and is likely to hold when $F$ consists of enough future observations.
We will see more interpretable conditions in the tabular POMDPs (Example~\ref{exa:tabular}) and  POMDPs with Hilbert space embeddings  (HSE-POMDPs) where the underlying dynamical systems have linear conditional mean embeddings (Example~\ref{exa:hse_pomdps} in \pref{sec:hse}). 

The invertibility condition is imposed to ensure that a learnable future-dependent value function $b_V$ satisfies \eqref{eq:easy} (note that right hand side of Eq.~\ref{eq:easy} is a function of $Z, S$ instead of $H$). Again, we will present more interpretable conditions in the tabular and linear settings below and in \pref{sec:hse}. Roughly speaking, it requires $H$ to retain sufficient information about $S$. In that sense, the history proxy serves as an instrumental variable (IV), which is widely used in economics \citep{horowitz2011applied,newey2013nonparametric} \footnote{The observation that history can serve as an instrumental variable in POMDPs is mentioned in \citep{hefny2015supervised,venkatraman2016online}. However, these works aim to learn the system dynamics instead of policy evaluation or learning. Hence, concepts like future-dependent value functions do not appear in these works.}.  

Finally, the overlap condition is a standard assumption in OPE \citep{uehara2021finite}.

\begin{myexp}[Tabular Setting]\label{exa:tabular}
In the tabular case, abstract assumptions in \pref{thm:identify} are reduced to certain rank assumptions. We first define $\Scal_b = \{s \in \Scal:P_{\pi^b}(s)>0\}$. 
We define a matrix $\Prr_{\pi^b}( \Fb \mid  \Sbb_b) \in \RR^{|\Fcal| \times |\Scal|}$ whose $(i,j)$-th element is $\Prr_{\pi^b}( F=x_i\mid  S=x'_j)$ where $x_i$ is the $i$th element in $\Fcal$, and $x'_j$ is the $j$th element in $\Scal_b$.  We similarly define another matrix $\Prr_{\pi^b}( \Sbb_b, \Hb) $  whose $(i,j)$-th element is $\Prr_{\pi^b}(S=x'_i,  H=x_j^{''})$ where %
$x_j^{''}$ denotes the $j$th element in $\Hb$.

\begin{lemma}[Sufficient conditions for observability and invertibility ] \label{lem:equivalent} (a)
When $\rank(\Prr_{\pi^b}( \Fb \mid  \Sbb_b))=|\Scal_b|$, future-dependent value functions exist. Then, from \pref{lem:lemma_value}, learnable future-dependent value functions exist. (b) The invertiblity is satisfied when $\rank(\Prr_{\pi^b}( \Sbb_b, \Hb))=|\Scal_b|$.
\end{lemma}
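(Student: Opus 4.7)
The plan is to translate each of the two conditional-expectation conditions into a linear-algebraic statement about the matrices appearing in the hypotheses, at which point each claim reduces to a one-line rank argument.

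For the first bullet, I view a candidate future-dependent value function $g_V$ as a column vector $\bm{g}\in\RR^{|\bar\Fcal|}$ whose coordinates are the values $g_V(\bar f)$, and I view $V^{\pi^e}$ restricted to the support $\bar\Scal_b$ as a vector $\bm{v}\in\RR^{|\bar\Scal_b|}$. Since the defining equation in \pref{def:valuelink} is only required to hold almost surely, values of $V^{\pi^e}$ outside $\bar\Scal_b$ do not need to be matched. Unrolling the conditional expectation coordinatewise, the defining equation becomes exactly the linear system
\begin{align*}
\Prr_{\pi^b}(\bar\Fb\mid\bar\Sbb_b)^\top \bm{g} \;=\; \bm{v}.
\end{align*}
Full column rank of $\Prr_{\pi^b}(\bar\Fb\mid\bar\Sbb_b)$, i.e.\ rank equal to $|\bar\Scal_b|$, makes the transpose surjective onto $\RR^{|\bar\Scal_b|}$, so a solution $\bm{g}$ exists for every right-hand side, in particular for the one coming from $V^{\pi^e}$. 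Existence of a future-dependent value function follows, and invoking \pref{lem:lemma_value} upgrades it to a learnable future-dependent value function.

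For the second bullet, I take an arbitrary $g:\bar\Zcal\times\Scal\to\RR$ with $\EE[g(Z,S)\mid H]=0$ almost surely, and encode its restriction to $\bar\Scal_b$ as a vector $\bm{g}\in\RR^{|\bar\Scal_b|}$. For each $h$ in the support of $H$, multiplying the identity $\sum_{\bar s\in\bar\Scal_b} g(\bar s)\,\Prr(\bar S=\bar s\mid H=h)=0$ by the positive scalar $\Prr(H=h)$ converts it into a statement about the joint probabilities; for $h$ outside the support the same identity holds trivially since all the joint probabilities vanish. Collecting over $h\in\Hcal$ packages these equalities into the single matrix equation $\Prr_{\pi^b}(\bar\Sbb_b,\Hb)^\top \bm{g}=0$. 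The rank hypothesis says this transpose has linearly independent columns, so $\bm{g}=0$, i.e.\ $g(\bar s)=0$ for every $\bar s\in\bar\Scal_b$, which is exactly the invertibility condition as formulated in \pref{thm:identify}.

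I do not anticipate any substantive technical obstacle: both parts are essentially rank-nullity arguments once the conditional expectation operator has been identified with matrix multiplication. The only place that requires care is the bookkeeping around the support $\bar\Scal_b$ — the ``almost surely'' qualifiers in \pref{def:valuelink} and in the invertibility condition allow me to ignore states outside $\bar\Scal_b$ on both sides of the argument, which also explains why the hypothesis matrices are formulated in terms of $\bar\Scal_b$ rather than all of $\bar\Scal$.
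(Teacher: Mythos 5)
Your proof is correct and follows essentially the same route as the paper's: the paper likewise reduces the first bullet to solving $x^{\top}\Prr_{\pi^b}(\bar \Fb \mid \bar \Sbb_b)=y$ for $x$ and the second to the implication $x^{\top}\Prr_{\pi^b}(\bar \Sbb_b \mid \Hb)=0 \Rightarrow x=0$ under the joint-matrix rank condition, treating both as immediate matrix algebra. Your write-up merely fills in the support/almost-sure bookkeeping that the paper leaves implicit, and does so correctly.
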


The first two conditions require that the cardinalities of $\Fcal$ and $\Hcal$ must be greater than or equal to $\Scal_b$, respectively. 
The proof of Lemma~\ref{lem:equivalent} is straightforward, as integral equations reduce to matrix algebra in the tabular setting. We note that similar conditions have been assumed in the literature on HMMs and POMDPs \citep{song2010hilbert,boots2011closing,boots2013hilbert}. In particular, $\Prr_{\pi^b}(\Ob \mid \Sbb_b)=|\Scal_b|$ has been imposed in previous works \citep{nair2021spectral,shi2021minimax} in the context of confounded POMDPs. Nonetheless, our condition $\Prr_{\pi^b}(\Fb \mid \Sbb_b)=|\Scal_b|$ is strictly weaker when $\Fcal$ includes multi-step future observations, %
demonstrating the advantage of incorporating multi-step future observations compared to utilizing only the current observation. A more detailed discussion can be found in Appendix~\ref{subsec:tabular}.

\end{myexp}

\vspace{-0.1cm}
\section{Estimation with General Function Approximation}\label{sec:algorithm}
\vspace{-0.1cm}

\begin{algorithm}[!t]
\caption{ Minimax OPE on POMDPs }\label{alg:main_version}
\begin{algorithmic}[1] 
    \REQUIRE Dataset $\Dcal$, function classes $\Qcal \subset [ \Fcal \to \RR],\Xi \subset [\Hcal \to \RR]$, hyperparameter $\lambda\geq 0$  
    \STATE $
    \hat b_V = \argmin_{ q \in \Qcal} \max_{\xi \in \Xi} \EE_{\Dcal_{\mathrm{tra}}}[\{\mu(A,O)\{R + \gamma q(  F') \} - q( F) \} \xi(H)- \lambda \xi^2(H)]. $
    \RETURN{$\hat J_{\VM} = \EE_{\mathcal{D}_{\mathrm{ini}}}[\hat b_V( f)]$ }
\end{algorithmic}
\end{algorithm}

In this section, we demonstrate how to estimate the value of a policy based on the results presented in Section~\ref{sec:identification}. We begin by outlining the proposed approach for estimating $b_V(\cdot)$. The key observation is that it satisfies $\EE[  L(b_V,\xi)]=0$ for any $\xi:\Hcal \to \RR$ where $  L(q,\xi )$ is defined as 
\begin{align*}
      L(q,\xi ):= \{\mu(A,O)\{R + \gamma q(  F') \} - q( F) \} \xi(H) 
\end{align*}
for $q: \Fcal \to \RR$ and $\xi:\Hcal \to \RR$. Given some constrained function classes $\Qcal \subset  [ \Fcal \to \RR] $ and $\Xi \subset [\Hcal \to \RR]$ and a hyperparameter $\lambda\geq 0$, the estimator is computed according to Line 1 of \pref{alg:main_version}.  When the realizability $\Bcal_V \cap \Qcal \neq \emptyset$ holds and $\Xi$ is unconstrained, we can easily show that the population-level minimizers $  \argmin_{ q \in \Qcal}\max_{\xi \in \Xi} \EE[L(q,\xi)- 0.5 \lambda \xi^2(H)] $ are all learnable future-dependent value functions. This is later formalized in  \pref{thm:bellman}. 

We can use any function classes such as neural networks, RKHS, and random forests to parameterize $\Qcal$ and $\Xi$. Here, the function class $\Xi$ plays a critical role in measuring how $q$ deviates from the ground truth $b_V$. 
The hyperparameter $\lambda$ is introduced to obtain a fast convergence rate. We call it a stabilizer instead of a regularizer since $\lambda$ does not need to shrink to zero as $n$ approaches infinity. Note that regularizers are needed %
when %
we penalize the norms of $q$ and $\xi$, 
i.e., 
\begin{align*}
    \hat b_V = \argmin_{ q \in \Qcal}\max_{\xi\in \Xi} \EE_{\Dcal}[L(q,\xi)- 0.5\lambda \xi^2(H)] + 0.5\alpha' \|q\|^2_{\Qcal}- 0.5\alpha \|\xi\|^2_{\Xi},
\end{align*}
for certain function norms $\|\cdot\|_{\Qcal}$ and  $\|\cdot\|_{\Xi}$ and hyperparameters $\alpha',\alpha>0$.

In the remainder of this section, we present three concrete examples using linear models, RKHSs and neural networks. Let $\|\cdot\|_{\Qcal}$ and $\|\cdot\|_{\Xi}$ denote $L^2$-norms when we use linear models and RKHS norms when using RKHSs. 
\begin{myexp}[Linear models]\label{exa:linear}
Suppose $\Qcal,\Xi$ are linear, i.e., $\Gcal =\{\theta^{\top} \phi_{ \Fcal}(\cdot) \mid \theta \in  \RR^{d_{ \Fcal}}\},\Xi =\{\theta^{\top} \phi_{\Hcal}(\cdot) \mid \theta \in \RR^{d_{\Hcal}}\} $ with features $\phi_{ \Fcal}: \Fcal \to \RR^{d_{ \Fcal}}, \phi_{\Hcal}:\Hcal \to \RR^{d_{\Hcal}} $. Then,
\begin{align*}
    &\hat b_V(\cdot) = \phi^{\top}_{ \Fcal}(\cdot)\braces{ \Mb^{\top}_2\{\alpha I_{d_{\Hcal}} + \lambda \Mb_3\}^{-1}\Mb_2+ \alpha' I_{d_{ \Fcal}} }^{-1} \Mb^{\top}_2\{\alpha I_{d_{\Hcal}} + \lambda \Mb_3\}^{-1}\Mb_1 , \\
    &\Mb_1 = \EE_{\Dcal}[ \mu(O,A) R\phi_{\Hcal}(H)  ],\,\Mb_2 = \EE_{\Dcal}[\phi_{\Hcal}(H)\{\phi^{\top}_{ \Fcal}( F) - \gamma  \mu(O,A)\phi^{\top}_{ \Fcal}( F') \} ], \Mb_3=  \EE_{\Dcal}[ \phi_{\Hcal}(H) \phi^{\top}_{\Hcal}(H) ]. 
\end{align*}
When $\alpha'=0,\alpha=0,\lambda=0$, the value estimators boil down to 
\begin{align*}
     \hat b_V(\cdot) =    \phi^{\top}_{ \Fcal}(\cdot)\Mb^{+}_2\Mb_1,\quad \hat J_{\mathrm{VM}} = \EE_{f\sim \nu_{\Fcal}}[\phi^{\top}_{ \Fcal}(f )]\Mb^{+}_2\Mb_1. 
\end{align*}

The above estimators are closely related to the LSTD estimators in MDPs 
\citep{lagoudakis2003least}. Specifically, the off-policy LSTD estimator for state-value functions \citep{dann2014policy,uehara2020minimax} is given by 
\begin{align}\label{eq:existing_formula}
    \EE_{s \sim \nu_{\Scal}}[\phi^{\top}_{\Scal}(s)]\EE_{\Dcal}[ \phi_{\Scal}(S)\{ \phi^{\top}_{\Scal}(S) -\gamma\mu(S,A)\phi^{\top}_{\Scal}(S')\}]^{+}\EE_{\Dcal}[\mu(S,A)R\phi_{\Scal}(S)]. 
\end{align}
Our new proposed estimator $\hat J_{\textrm{VM}}$ is   
\begin{align*}
    \EE_{f \sim \nu_{ \Fcal}}[\phi^{\top}_{\Fcal}(f)]\EE_{\Dcal}[ \phi_{\Hcal}(H)\{ \phi^{\top}_{\Fcal}(F) -\gamma\mu(O,A)\phi^{\top}_{\Fcal}(F')\}]^{+}\EE_{\Dcal}[\mu(O,A)R \phi_{\Hcal}(H)], 
\end{align*}
which is very similar to \eqref{eq:existing_formula}. The critical difference lies in that we use futures (including current observations) and histories as proxies to infer the latent state $S$ under partial observability. 

\end{myexp}

\begin{myexp}[RKHSs]
Let $\Qcal,\Xi$ be RKHSs with kernels $k_{ \Fcal}(\cdot,\cdot): \Fcal \times  \Fcal \to \RR, k_{\Hcal}(\cdot,\cdot):\Hcal \times \Hcal \to \RR$. Then, 
\begin{align*}
    \hat b_V(\cdot) = \kb_{ \Fcal}(\cdot)^{\top} \braces{ \{\Kb'_{ \Fcal}\}^{\top} \{ \alpha I_n + \Kb_{\Hcal}\}^{-1}\Kb'_{ \Fcal}   + \alpha' I_n}^{-1} \{\Kb'_{ \Fcal}\}^{\top} \Kb^{1/2}_{\Hcal}\{ \alpha I_n + \Kb_{\Hcal}\}^{-1}\Kb^{1/2}_{\Hcal} \Yb 
\end{align*}
where $ \Yb \in \RR^n,  k(\cdot)\in \RR^n, \Kb_{\Hcal} \in \RR^{n\times n}, \Kb_{ \Fcal} \in \RR^{n\times n} $ such that
\begin{align*}
         &\{\Kb_{\Hcal}\}_{(i,j)}=k_{\Hcal}(H^{(i)},H^{(j)}),\,  \{\Kb_{ \Fcal}\}_{(i,j)}=k_{ \Fcal}( F^{(i)}, F^{(j)}), \{ \Yb\}_i = \mu(O^{(i)}, A^{(i)}) R^{(i)},  \\ & \{\Kb'_{ \Fcal}\}_{(i,j)}= k_{ \Fcal}( F^{(i)}, F^{(j)})- \gamma k_{ \Fcal}( F'^{(i)}, F^{(j)}) ,  \{\kb_{ \Fcal}(\cdot)\}_{i}= k_{ \Fcal}(\cdot,  F^{(i)} ). 
\end{align*}
\end{myexp}

\begin{myexp}[Neural Networks]\label{exa:neural}
We set $\Qcal$ to a class of neural networks and recommend to set $\Xi$ to a linear or RKHS class so that
the minimax optimization is reduced to single-stage optimization. The resulting optimization problem can be solved by off-the-shelf stochastic gradient descent (SGD) methods. Specifically, when we set $\Xi$ to a linear model, $\hat b_V$ is reduced to
\begin{align*}\textstyle
 \hat b_V &=\argmin_{q \in \mathbb{Q}}Z(q)^{\top}\{\alpha I + \lambda \EE_{\Dcal}[\phi_{\Hcal}(H)\phi^{\top}_{\Hcal}(H)  ]\}^{-1}Z(q),\, \\
  Z(q)&:= \EE_{\Dcal}[\mu(A,O)  \{R + \gamma q( F')\}-q( F)\}\phi_{\Hcal}(H) ]. 
\end{align*} 
When we set $\Xi$ to be an RKHS, $\hat b_V$ is reduced to
\begin{align*}
    \hat b_V &=   \argmin_{q \in \mathbb{Q}}    \{Z'(q)\}^{\top}\Kb^{1/2}_{\Hcal}\{ \alpha I_n + \lambda \Kb_{\Hcal}\}^{-1}\Kb^{1/2}_{\Hcal}Z(q),  \\ 
    \{Z'(q)\}_i &:= \mu(A^{(i)},O^{(i)})  \{R^{(i)} + \gamma q( F'^{(i)})\}-q( F^{(i)}). 
\end{align*}
\end{myexp}

\begin{remark}[Comparison with minimax methods for MDPs]

Our proposed methods are closely related to the minimax learning approach (a.k.a. Bellman residual minimization) for RL \citep{antos2008learning}. More specifically, in MDPs where the evaluation policy is memory-less, \cite{uehara2021finite,tang2020harnessing} proposed minimax learning methods based on the following equations,
\begin{align*}
    \EE[h(S)\{\mu(S,A)\{\gamma V^{\pi^e}(S') + R\}-V^{\pi^e}(S)\} ] =0, \,\,\,\,\forall h:\Scal \to \RR.
\end{align*}
These methods are no longer applicable in POMDPs since we are unable to directly observe the latent state. Although substituting the latent state with the entire history can still provide statistical guarantees, it is susceptible to the curse of horizon.

\end{remark}

\begin{remark}[Modeling of system dynamics]

Our minimax estimators can be extended to learning system dynamics. In particular, for tabular POMDPs, these results are closely related to the literature on spectral learning in HMMs and POMDPs \citep{song2010hilbert,boots2011closing,boots2013hilbert}. Refer to Section~\ref{sec:modeling},\ref{sec:connection}.

\end{remark}

\begin{remark}[Finite horizon setting]
The extension to the finite horizon setting is discussed in \pref{sec:finitez_horizon2}  
\end{remark}

\vspace{-0.1cm}
\section{Finite Sample Results}\label{sec:finite}
\vspace{-0.1cm}

We study the finite sample convergence rate of the proposed estimators in this section. To simplify the technical analysis, we impose three assumptions. First, we 
assume the function classes are bounded, i.e.,   
$\| \Qcal  \|_{\infty} \leq C_{\Qcal}, \|\Xi \|_{\infty}\leq C_{\Xi}$ for some $C_{\Qcal}$, $C_{\Xi}>0$. Second, we assume that the offline data are i.i.d.\footnote{Without the independence assumption, similar results can be similarly established by imposing certain mixing conditions; see e.g., \citep{shi2020statistical,liao2020batch,kallus2022efficiently}.} Third, we assume $\Qcal,\Xi$ are finite hypothesis classes. Meanwhile, our results can be extended to infinite hypothesis classes using the global/local Rademacher complexity theory; see e.g., \citep{uehara2021finite}. To simplify the presentation, following standard convention in OPE, we suppose the initial distribution is known, i.e., $|\mathcal{D}_{\text{ini}}|=\infty$.   

\paragraph{Accuracy of $\hat b_V$.}

We first demonstrate that %
$\hat b_V$ %
consistently estimates the learnable future-dependent value bridge functions. To formalize this, we introduce the following Bellman operators. 

\begin{definition}[Bellman operators]
The Bellman residual operator onto the history is defined as 
\begin{align*}
    \Tcal: [ \Fcal \to \RR] \ni q(\cdot) \mapsto  \EE[\mu(O,A) \{R +\gamma q( F')\}-q( F) \mid H=\cdot ], %
\end{align*}
and the Bellman residual error onto the history is defined as $\EE[ (\Tcal q)^2(H)]$. Similarly, the Bellman residual operator onto the latent state, $\Tcal^{S}$ is defined as 
\begin{align*}
    \Tcal^S: [ \Fcal \to \RR] \ni q(\cdot)  \mapsto  \EE[\mu(O,A) \{R +\gamma q( F')\} -q( F) \mid  S=\cdot ],  %
\end{align*}
and the Bellman residual error onto the latent state is defined as $\EE[ \{\Tcal^S(q)\}^2( S)]$. The conditional expectations equal to zero when $h$ and ${s}$ lie outside the support of $H$ and $ S$, respectively. 
\end{definition}

The Bellman residual error onto the history is zero for any learnable future-dependent value function, i.e., $\EE[(\Tcal b_V)^2(H)]=0$. Thus, this is a suitable measure to assess how well value function-based estimators approximate the true learnable future-dependent value functions.

\begin{theorem}[Finite sample property of $\hat b_V$]\label{thm:bellman}
Set $\lambda>0$. Suppose (2a) $\Bcal_V \cap \Qcal \neq 0$ (realizability) and (2b) $\Tcal \Qcal \subset \Xi$ (Bellman completeness). With probability $1-\delta$, we have $\EE[(\Tcal \hat b_V)^2(H)]^{1/2} \leq c\{1/\lambda+\lambda\} \max(1,C_{\Qcal}, C_{\Xi})  \sqrt{\frac{\ln (  |\Qcal| |\Xi|c /\delta)}{n} } $ where $c$ is some universal constant. 
\end{theorem}

Note (2a) and (2b) are commonly assumed in the literature on MDPs \citep{chen2019information,uehara2021finite} as well. In particular, Bellman completeness means that the function class $\Xi$ is sufficiently rich to capture the Bellman update of functions in $\Qcal$. { For instance, these assumptions are naturally met in HSE-POMDPs.} We require $\lambda>0$ in the statement of Theorem \ref{thm:bellman} to obtain a parametric convergence rate. When $\lambda=0$, although we can obtain a rate of $O_p(n^{-1/4})$, it is unclear whether we can achieve $O_p(n^{-1/2})$. 

\paragraph{Accuracy of $\hat J_{\mathrm{VM}}$.}

We derive the finite sample guarantee for $\hat J_{\mathrm{VM}}$. 
\begin{theorem}[Finite sample property of $\hat J_{\mathrm{VM}}$] \label{thm:final_error2}
Set $\lambda>0$. Suppose (\ref{thm:bellman}a), (\ref{thm:bellman}b), (\ref{thm:bellman}c) any element in $q \in \Qcal$ that satisfies $
 \EE[\{\Tcal^S(q)\}( S) \mid H] = 0 $ also satisfies $\Tcal^S(q)( S)=0$. 
(\ref{thm:bellman}d) the overlap $\mu(O,A)<\infty$ and any element in $q \in \Qcal$ that satisfies $
    \Tcal^S(q)( S) =0$ also satisfies $\Tcal^S(q)( S^{\diamond}) =0$ 
where $S^{\diamond} \sim d_{\pi^e}(s)$. With probability $1-\delta$,  we have 
\begin{align}\label{eq:first_final_error2}
    |J(\pi^e) - \hat J_{\VM}| &\leq  \frac{c(1/\lambda + \lambda)}{(1-\gamma)^2}\max(1,C_{\Qcal}, C_{\Xi}) \mathrm{IV}_1(\Qcal) \mathrm{Dr}_{\Qcal}[d_{\pi^e},P_{\pi^b}] \sqrt{\frac{\ln (  |\Qcal| |\Xi|c /\delta)}{n} }, 
\end{align}
where 
\begin{align}\label{eqn:ratio}
     \mathrm{IV}^2_1(\Qcal)&:= \sup_{\{q \in \Qcal; \EE[\{\Tcal(q)(H)\}^2]\neq 0\} }\frac{\EE[\{\Tcal^{\Scal}(q)( S)\}^2]  }{ \EE[\{\Tcal(q)(H)\}^2] }, \quad \\
     \mathrm{Dr}^2_{\Qcal}[d_{\pi^e},P_{\pi^b}]& :=  \sup_{ \{q\in \Qcal; \EE_{ s \sim P_{\pi^b}}[\{\Tcal^{\Scal}(q)( s)\}^2]\neq 0  \}  } \frac{\EE_{ s \sim d^{\pi^e}}[\{\Tcal^{\Scal}(q)( s)\}^2  ]  }{\EE_{ s \sim P_{\pi^b}}[\{\Tcal^{\Scal}(q)( s)\}^2] }. 
\end{align}

\end{theorem}

On top of (\ref{thm:bellman}a) and (\ref{thm:bellman}b) that are assumed in \pref{thm:bellman}, when $\mathrm{IV}_1(\Qcal)<\infty, \mathrm{Dr}_{\Qcal}(d_{\pi^e},P_{\pi^b})<\infty$, (\ref{thm:bellman}c) and (\ref{thm:bellman}d) hold, %
we have the non-vacuous PAC guarantee. The condition $\mathrm{Dr}_{\Qcal}(d_{\pi^e},P_{\pi^b})<\infty$ and (\ref{thm:bellman}d) are the overlap conditions, which are adaptive to the function class $\Qcal$ and are weaker than (\ref{thm:identify}c). These are also used in MDPs \citep{xie2021bellman}. Here, $\mathrm{Dr}_{\Qcal}(d_{\pi^e},P_{\pi^b})$ is a refined version of the density ratio and satisfies $\mathrm{Dr}_{\Qcal}(d_{\pi^e},P_{\pi^b})\leq w_{\pi^e}(S)$. The condition $\mathrm{IV}_1(\Qcal)<\infty$ and (\ref{thm:bellman}c) are characteristic conditions in POMDPs that quantify how much errors are properly translated from on $H$ to on $S$. Similar assumptions are frequently imposed in IV literature \citep{DikkalaNishanth2020MEoC,chen2012estimation}.

The upper error bound in \eqref{eq:first_final_error2} does not have explicit exponential dependence on the effective horizon $1/(1-\gamma)$. 
In particular, as shown in \pref{sec:hse}, for tabular POMDPs and HSE-POMDPs, the terms $\mathrm{Dr}_{\Qcal}(d_{\pi^e},P_{\pi^b})$ and $\kappa(\Qcal)$ can be reduced to certain condition numbers associated with covariance matrices spanned by feature vectors; see \eqref{leq:linear_ex2} and \eqref{eq:overlap_linear} in Appendix~\ref{sec:hse}. Hence, unlike SIS-based methods, we are able to break the curse of horizon.  

The numbers of future and history proxies included in $F$ and $H$ represents a tradeoff. Specifically, if $F$ contains enough observations, it is likely that (\ref{thm:bellman}a) will hold. Meanwhile, if $H$ contains enough observations, it is more likely that (\ref{thm:bellman}b) will hold. These facts demonstrate the benefits of including a sufficient number of observations in $F$ and $H$. However, the statistical complexities $\ln(|\Qcal||\Xi|)$ will increase with the number of observations in $F$ and $H$. 

{ %
 Lastly, it's worth noting that while our methods effectively address the curse of horizon, they may incur exponential growth concerning the number of future proxies used. This also applies to history proxies, which should be longer than the length of short memory policies. Here, we focus on the explanation of future proxies. For instance, in the tabular case, $\log |\Omega|$ might scale with $|\Ocal|^{M_F}$ when considering $\Omega$ as the set of all functions on $\Ocal^{M_F}$. However, this situation differs significantly from the curse of horizon, a challenge that naive methods like replacing states with the entire history encounter. These methods would necessitate the entire history to achieve Markovianity, whereas we only require a shorter length of future observations to establish the conditions outlined in Theorem~\ref{thm:identify} (existence), which can be much shorter. Specific examples are provided throughout the paper, including Example 1, which discusses the tabular setting and demonstrates that we essentially need as many future proxies as states, as long as there is sufficient statistical dependence between them.
 }

\vspace{-0.15cm}
\section{Experiment} \label{sec:experiment}
\vspace{-0.15cm}

This section empirically evaluates the performance of the proposed method on a synthetic dataset.\footnote{Our code is available at \href{https://github.com/aiueola/neurips2023-future-dependent-ope}{https://github.com/aiueola/neurips2023-future-dependent-ope}}.

\begin{figure}[!t]
  \centering
  \includegraphics[clip, width=0.7\linewidth]{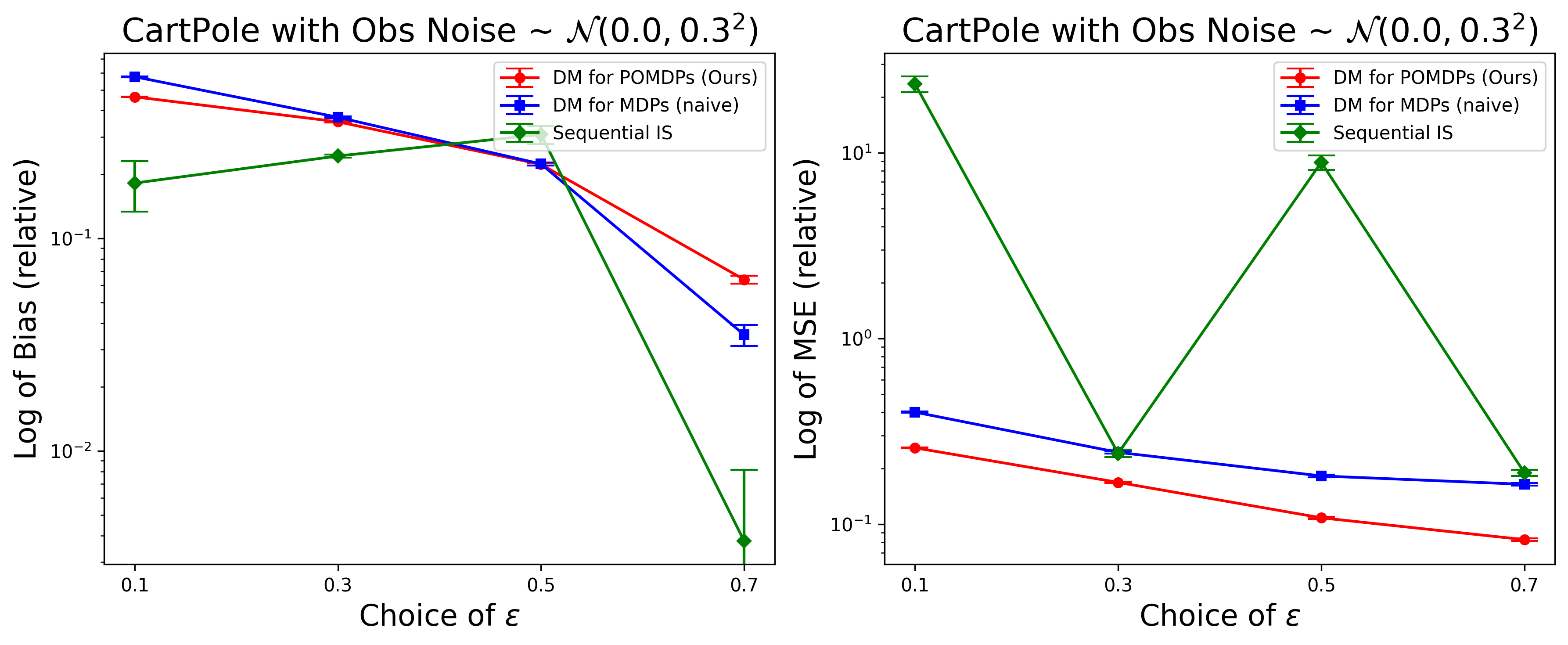}
  \caption{Logarithms of relative biases (left) and MSEs (right) of the proposed and the baseline estimators for various values of $\epsilon$, which specify the evaluation policy. The confidence intervals are obtained through 100 Monte Carlo simulations.} \label{fig:cartpole_main}
  \vspace{-0.3cm}
\end{figure}

We use the CartPole environment provided by OpenAI Gym~\citep{brockman2016openai}, which is commonly employed in other OPE studies \citep{shi2021minimax,farajtabar2018more}. By default, this non-tabular environment consists of 4-dimensional states, which are fully observable. Following~\citep{shi2021minimax}, we create partial observability by adding independent Gaussian noises to each dimension of the state as 
$O^{(j)} = S^{(j)} (1 + \mathcal{N}(1 + 0.3^2)), 1 \leq j \leq 4$. To define behavior and evaluation policies, we first train an expert policy using DDQN~\citep{van2016deep} on latent states $S$. Subsequently, we apply Behavior Cloning (BC) on two datasets, one containing pairs of latent state and action ($S,A$) and the other containing pairs of observation and action ($O,A$), respectively.
Then, we use the base policy obtained by BC on the state-action pairs $(S,A)$ to define an $\epsilon$-greedy behavior policy, where we set $\epsilon = 0.3$. \footnote{When collecting offline data from the behavior policy, the behavior policy takes the observation $O$ as input, noting that the dimension of $S$ and $O$ is the same in our experimental setting. } Similarly, the evaluation policy is also an $\epsilon$-greedy policy, based on the base policy obtained by BC on the observation-action pairs $(O,A)$, with different values of $\epsilon \in [0.1, 0.3, 0.5, 0.7]$. We conduct the experiment with $100$ random seeds, and for each simulation, we collect logged data consisting of $1000$ trajectories, each containing $100$ steps.

We compare our proposal with Sequential Importance Sampling (SIS)~\citep{precup2000eligibility} and the naive minimax OPE~\citep{uehara2020minimax}, which is designed for fully-observable MDPs and does not account for partial observability. The naive minimax estimator is defined as if the environment was fully observable, %
replacing $H$ and $\bar{F}$ in Algorithm~\ref{alg:main_version} with the current observation $O$. In contrast, our proposed method uses a 3-step history as $H$ and a one-step future as $F$ to address partial observability. Both our proposed method and the naive approach use two-layer neural networks for the function $\Qcal$ and RKHSs for $\Xi$, as detailed in Example~\ref{exa:neural}.

We present the results in Figure~\ref{fig:cartpole_main}, which demonstrate the superior accuracy of our proposed estimator compared to the baselines (SIS and naive minimax estimator) in terms of mean square errors (MSEs). Additional experimental details and ablation results, including the variations in the length of $H$, $\bar{F}$, and the choice of RKHSs, can be found in Appendix~\ref{app:cartpole}.

\vspace{-0.1cm}
\section{Conclusion}
\vspace{-0.2cm}

We present a novel approach for OPE of short-memory policies in POMDPs. Our method involves introducing future-dependent value functions and the associated off-policy Bellman equations, followed by proposing a minimax estimator based on these equations. This is the first model-free method that allows for general function approximation and mitigates the curse of horizon. {Our proposal is grounded in three interpretable key assumptions: observability, which asserts the presence of (short) future observations retaining adequate information about latent states, invertibility, which posits the existence of (short) histories preserving ample information about latent states; and the overlap between evaluation policies and behavior policies.}

{ We have several avenues for enhancing our proposals. Firstly, automatically determining the appropriate lengths of futures and histories holds practical significance. Additionally, exploring recent attention mechanisms that extend beyond selecting the most recent history proxies or the nearest future proxies shows promise. Secondly, while we establish Bellman equations for POMDPs and use a simple minimax approach akin to \citep{DikkalaNishanth2020MEoC}, we may benefit from leveraging more refined methods introduced in recent research for solving conditional moment equations \citep{bennett2023source,bennett2023inference,pmlr-v195-bennett23b}. } 

\section*{Acknowledgements}

This material is based upon work supported by the National Science Foundation under Grant Nos. IIS 2112471, IIS 2141781, IIS 1846210, IIS 2154711 and an Engineering and Physical Sciences Research Council grant EP/W014971/1.

\bibliography{rl}
\bibliographystyle{alpha}

\appendix 
\newpage

\section{Additional Literature review} \label{sec:literature_further}

\paragraph{Minimax learning.} %
Minimax learning (also known as adversarial learning) has been widely applied to a large variety of problems ranging from instrumental variable estimation \citep{NIPS2019_8615,DikkalaNishanth2020MEoC} to policy learning/evaluation in contextual bandits/MDPs \citep{hirshberg2017augmented,ChernozhukovVictor2018LLCR,foster2019orthogonal,feng2019kernel,Liu2018,ChernozhukovVictor2020AEoR,uehara2021finite}. For example, in OPE problems with fully-observable environments, minimax learning methods have been developed to learn q-functions and marginal ratios that are characterized as solutions to certain conditional moment equations \citep{uehara2020minimax}. %
The solutions to these conditional moment equations are uniquely defined. On the other hand, our case is more challenging since the solutions to conditional moment equations are not uniquely defined. Although the solutions are \emph{not} uniquely defined and hence cannot be identified, our estimands, i.e., policy values, \emph{can} be still identified. This requires significantly distinctive analysis, which is not seen in standard IV settings or MDP settings.

\section{Comparison to Analogue of Future-dependent Value Functions}\label{sec:difference}

Analogs of our future-dependent value functions have been introduced in confounded contextual bandits and confounded POMDPs as bridge functions \citep{miao2018identifying,cui2020semiparametric,tennenholtz2020off,shi2021minimax,bennett2021}. These works consider confounded settings where actions depend on unobserved states and introduce bridge functions to deal with confounding. Instead, we introduce future-dependent value functions to deal with the curse of the horizon while there is no confounding issue. Existing definitions of bridge functions in confounded POMDPs do not work in standard POMDPs. In the definition of existing bridge functions, behavior policies cannot depend on observations $O$ since observations $O$ are used as so-called negative controls, which do not affect action $A$ and are not affected by action $A$. In our setting, $O$ does \emph{not} serve as negative controls unlike their works since $A$ clearly depends on $O$. Instead, $O$ just play a role in covariates. See \pref{fig:comparison_whole}. 
Due to this fact, we can further add $F$ as input domains of future-dependent value functions, unlike bridge functions by regarding $F$ as just covariates. This is impossible in the definition of existing bridge functions without further assumptions as mentioned in \citep{nair2021spectral}. In this sense, our setting does \emph{not} fall into the category of the so-called proximal causal inference framework. At the same time, our definition does not work in these confounded settings since \pref{def:learnable} explicitly includes behavior policies.   

We finally remark the observation that history can serve as an instrumental variable in POMDPs is mentioned in \citep{hefny2015supervised,venkatraman2016online}. However, they did not propose future-dependent value functions; their goal is to learn system dynamics.

\begin{figure}[!h]
\centering
\begin{subfigure}[b]{.32\textwidth}
  \centering
 \resizebox{\textwidth}{!}{
 \begin{tikzpicture}
\label{fig:comparison}
\node[draw, circle, text centered, minimum size=1.0cm, line width= 1] (s0) {$S$};
\node[draw, circle, below left=3.0 and 0.5 of s0, text centered, minimum size=1.0cm, line width= 1] (a0) {$A$};
\node[draw, circle, below =1.0 of s0, text centered, minimum size=1.0cm, line width= 1] (x0) {$X$};
\node[draw, circle, below left=1.0 and 1.5 of s0, text centered, minimum size=1.0cm, line width= 1] (h0) {$H$};
\node[draw, circle, below right=1.0 and 1.5 of s0, text centered, minimum size=1.0cm, line width= 1] (b0) {$B$};
\node[draw, circle, below right=3.0 and 0.5 of s0, text centered, minimum size=1.0cm, line width= 1] (r0) {$R$};
\path[->]
(s0) edge (a0) 
(a0) edge (r0) 
(s0) edge (r0) 
(x0) edge (a0)
(x0) edge (r0)
(h0) edge (a0)
(b0) edge (r0)
(h0) edge (s0) 
(h0) edge (x0)
(s0) edge (x0)
(x0) edge (b0)
(s0) edge (b0)
;
\path[<->][red]
(a0) edge node[pos=0.25] {\textbackslash\textbackslash} (b0);
\end{tikzpicture}
}
\caption{Standard negative control setting}
\end{subfigure}%
  \hfill
\begin{subfigure}[b]{.28\textwidth}
 \centering
  \resizebox{\textwidth}{!}{
  \begin{tikzpicture}
\label{fig:comparison2}
\node[draw, circle, text centered, minimum size=1.0cm, line width= 1] (s0) {$S$};
\node[draw, circle, below left=3.0 and 0.5 of s0, text centered, minimum size=1.0cm, line width= 1] (a0) {$A$};
\node[draw, circle, below =1.0 of s0, text centered, minimum size=1.0cm, line width= 1] (x0) {$F$};
\node[draw, circle, below left=1.0 and 1.5 of s0, text centered, minimum size=1.0cm, line width= 1] (h0) {$H$};
\node[draw, circle, below right=3.0 and 0.5 of s0, text centered, minimum size=1.0cm, line width= 1] (r0) {$R$};
\path[->]
(s0) edge (x0) 
(a0) edge (r0) 
(s0) edge (r0) 
(h0) edge (a0)
(h0) edge (s0) 
;
\path[<->]
(x0) edge (a0)
;
\end{tikzpicture}
}
\caption{Our POMDP setting with $\gamma=0$}
\end{subfigure}
  \hfill
\begin{subfigure}[b]{.32\linewidth}
 \centering
  \resizebox{\textwidth}{!}{
  \begin{tikzpicture}
\label{fig:comparison3}
\node[draw, circle, text centered, minimum size=1.0cm, line width= 1] (s0) {$S$};
\node[draw, circle, below left=3.0 and 0.5 of s0, text centered, minimum size=1.0cm, line width= 1] (a0) {$A$};
\node[draw, circle, below left=1.0 and 1.5 of s0, text centered, minimum size=1.0cm, line width= 1] (h0) {$H$};
\node[draw, circle, below right=1.0 and 1.5 of s0, text centered, minimum size=1.0cm, line width= 1] (u0) {$O$};
\node[draw, circle, below right=3.0 and 0.5 of s0, text centered, minimum size=1.0cm, line width= 1] (r0) {$R$};
\path[->]
(s0) edge (u0) 
(s0) edge (a0) 
(a0) edge (r0) 
(s0) edge (r0) 
(h0) edge (a0)
(h0) edge (s0) 
;
;
\path[<->][red]
(a0) edge node[pos=0.25] {\textbackslash\textbackslash} (b0);
\end{tikzpicture}
}
\caption{Confounded POMDP setting with $\gamma=0$}
\end{subfigure}

\caption{Comparisons of three DAGs. \textcolor{red}{\textbackslash\textbackslash} means no arrows. For simplicity, we consider the case with $\gamma=0$, i.e., we do not need the next transitions. The first graph is a graph used in \citep{cui2020semiparametric}. 
Note $H,B,X$ are an action negative control, a reward negative control, and a covariate, respectively. We need $(H,A) \perp B \mid S,X$ and $H \perp Y \mid S,X,A$. The graph is one instance satisfying this condition. The second graph corresponds to the contextual bandit version ($\gamma=0$) of our setting. Future proxies $F$ just serve as a covariate and $H$ serves as an action negative control. There are no nodes that correspond to reward negative controls. The third graph corresponds to the contextual bandit version ($\gamma=0$) of confounded POMDPs  \citep{tennenholtz2020off,shi2021minimax,bennett2021}. A node $O$ corresponds to a reward negative control, and $H$ corresponds to an action negative control that satisfies $(H,A) \perp O \mid S$. Thus, $O$ cannot include futures proxies ($F$) since then we cannot ensure $(H,A) \perp F \mid S$ since there is an arrow from $A$ to $F$.    }
\label{fig:comparison_whole}
\end{figure}
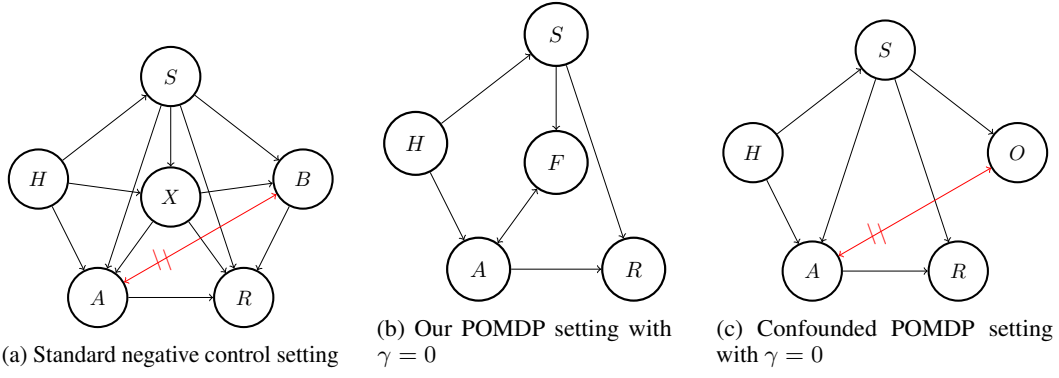

\section{Off-Policy Evaluation for Memory-Based Policies }
\label{sec:history_based}

So far, we have discussed how to evaluate memoryless policies to simplify the notation. In this section, we will now turn our attention to the evaluation of memory-based policies.

\subsection{Settings}

We consider $M$-memory policies $\pi:\Zcal \times \Ocal \to \Delta(\Acal)$ that are functions of the current observation $O_t$ and past observation-action pairs at time point $t,t-1,\cdots,t_M$ denoted by $Z_t=(O_{t-M:t-1},A_{t-M:t-1})\in \Zcal = \Ocal^M \times \Acal^M$, for some integer $M>1$. We assume the existence of $M$ observation pairs obtained prior to the initial time point (denoted by $Z_0$). Following an $M$-memory policy $\pi$, the data generating process can be described as follows. First, $Z_0$ and $S_0$ are generated according to some initial distribution $\nu_{\bar \Scal} \in \Delta(\bar \Scal)$ where $\bar \Scal = \Zcal \times \Scal$. Next, the agent observes $O_0 \sim \mathbb{O}(\cdot \mid S_0)$, executes the initial action $A_0 \sim \pi(\cdot \mid Z_0,O_0)$, receives a reward $r(S_0,A_0)$, the environment transits to the next state $S_1\sim \mathbb{T}(\cdot \mid S_0,A_0)$, and this process repeats. See \pref{fig:pomdp2} for a graphical illustration of the data-generating process. We assume that both the behavior and evaluation policies are $M$-memory. 

\begin{figure}[!h]
\vspace{-0.2cm}
\centering
\resizebox{0.6\textwidth}{!}{
\begin{tikzpicture}
\node[draw, circle, text centered, minimum size=1.0cm, line width= 1] (o0) {$O_{0}$};
\node[draw, circle,  left=0.1 of o0, text centered, minimum size=1.0cm, line width= 1] (a-1) {$A_{-1}$};
\node[draw, circle,  right=0.1 of o0, text centered, minimum size=1.0cm, line width= 1] (a0) {$A_{0}$};
\node[draw, circle,  above=3.0 of a0, text centered, minimum size=1.0cm, line width= 1] (r0) {$R_{0}$};
\node[draw, circle,  right=0.1 of a0, text centered, minimum size=1.0cm, line width= 1] (o1) {$O_{1}$};
\node[draw, circle,  right=0.1 of o1, text centered, minimum size=1.0cm, line width= 1] (a1) {$A_{1}$};
\node[draw, circle,  above=3.0 of a1, text centered, minimum size=1.0cm, line width= 1] (r1) {$R_{1}$};
\node[draw, circle,  right=0.1 of a1, text centered, minimum size=1.0cm, line width= 1] (o2) {$O_2$};
\node[draw, circle,  right=0.1 of o2, text centered, minimum size=1.0cm, line width= 1] (a2) {$A_2$};
\node[draw, circle,  above=3.0 of a2, text centered, minimum size=1.0cm, line width= 1] (r2) {$R_2$};
\node[draw, circle,  fill=gray!40, above=1.0 of o2, text centered, minimum size=1.0cm, line width= 1] (s2) {$S_2$};
\node[draw, circle,  left=0.1 of a-1, text centered, minimum size=1.0cm, line width= 1] (o-1) {$O_{-1}$};
\node[draw, circle,  left=0.1 of o-1, text centered, minimum size=1.0cm, line width= 1] (a-2) {$A_{-2}$};
\node[draw, circle,  left=0.1 of a-2, text centered, minimum size=1.0cm, line width= 1] (o-2) {$O_{-2}$};
\node[draw, circle,  fill=gray!40, above=1 of o0,text centered, minimum size=1.0cm, line width= 1] (s0) {$S_0$};
\node[draw, circle,  fill=gray!40, above=1 of o1,text centered, minimum size=1.0cm, line width= 1] (s1) {$S_1$};

\path[->]
(s0) edge (s1) 
(s0) edge (r0) 
(a0) edge (s1)
(s0) edge (o0)
(a0) edge (r0)
(s1) edge (s2)
(s1) edge (r1)
(s1) edge (o1)
(a1) edge (r1)
(s2) edge (o2)
(a1) edge (s2) 
(s2) edge (r2)
(a2) edge (r2) 
(o-1) edge [out=-80,in=-100,looseness=1] (a0)
(a-1) edge [out=-80,in=-100,looseness=1] (a0)
(o0) edge [out=-80,in=-100,looseness=1] (a0)
(a-2) edge [out=-80,in=-100,looseness=1] (a0)
(o-2) edge [out=-80,in=-100,looseness=1] (a0)

(o1) edge [out=-80,in=-100,looseness=1] (a1)
(a0) edge [out=-80,in=-100,looseness=1] (a1)
(o0) edge [out=-80,in=-100,looseness=1] (a1)
(a-1) edge [out=-80,in=-100,looseness=1] (a1)
(o-1) edge [out=-80,in=-100,looseness=1] (a1)

(o2) edge [out=-80,in=-100,looseness=1] (a2)
(a1) edge [out=-80,in=-100,looseness=1] (a2)
(o1) edge [out=-80,in=-100,looseness=1] (a2)
(a0) edge [out=-80,in=-100,looseness=1] (a2)
(o0) edge [out=-80,in=-100,looseness=1] (a2)
;
\end{tikzpicture}
}
\caption{POMDPs when $M=2$. Note $S_0,O_0,A_0,R_0$ correspond to $(S,O,A,R)$, respectively, and $(O_{-2},A_{-2},O_{-1},A_{-1})= Z_0$. We cannot observe $S$ in the offline data. %
}
\label{fig:pomdp2}
\end{figure}
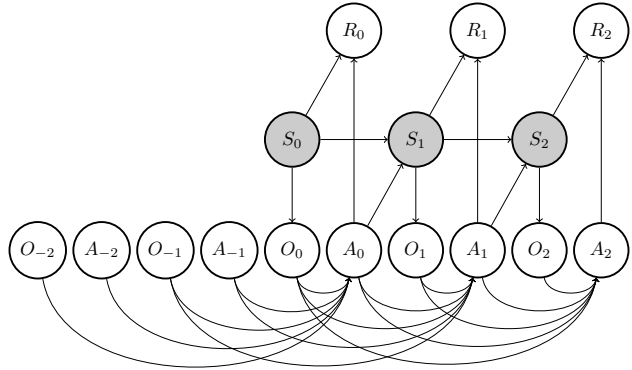

Our goal is to estimate a policy value $J(\pi^e)$ for an $M$-meory evaluation policy $\pi^e$. Toward this end, we define a state-value function under $\pi^e$:
\begin{align*}
    V^{\pi^e}(z,s):=\mathbb{E}_{\pi^e}[\sum_{k=0}^{\infty}\gamma^k R_k \mid Z_0=z,S_0=s]
\end{align*}
for any $z\in \Zcal,s\in \Scal$. Compared to the memory-less case, the input additionally includes $z$. Let $\bar S_t=(Z_t,S_t)$, and $d^{\pi^e}_t(\cdot)$ be the marginal distribution of $\bar S_t$ under the policy $\pi^e$.  

Next, we explain how the offline data is collected when behavior policies are $M$-memory. Specifically, the dataset $\Dcal_{\mathrm{tra}}$ consists of $n$ data tuples  $\{(H^{(i)},O^{(i)},A^{(i)},R^{(i)},F'^{(i)})\}_{i=1}^N$. We use $(H, O, A, R, F')$ to denote a generic history-observation-action-reward-future tuple where $H$ denotes the $M_H$-step historical observations obtained prior to the observation $O$ and $F'$ denotes the $M_F$-step future observations after $(O,A)$ for some integers $M_H>M$ and $M_F\ge 1$.  Hence, given some time step $t$ in the trajectory data, we set $(O,A,R) =(O_t,A_t,R_t)$, 
\begin{align*}
    H= (O_{-M_H:t-1 },A_{-M_H:t-1})\,\,\hbox{and}\,\,F'=(O_{t+1:t+M_F},A_{t+1:t+M_F-1}).
\end{align*}
We additionally set $F=(O_{t:t+M_F-1},A_{t:t+M_F-2})$. We use the prime symbol ' to represent the next time step. Then, 
$Z'=(O_{t-M+1:t}. A_{t-M+1:t})$. See Figure \ref{fig:pomdp4} for details when we set $t=0$. 

\begin{figure}[!t]

\centering
\resizebox{0.6\textwidth}{!}{
\begin{tikzpicture}
\draw[step=0.8cm,gray,very thin] (-4.9,0) grid (4.0,0.8);
\draw (0.4 cm,0.4cm) node  {$O_0$};
\draw (1.2 cm,0.4cm) node  {$A_0$};
\draw (2.0 cm,0.4cm) node  {$O_1$};
\draw (2.8 cm,0.4cm) node  {$A_1$};
\draw (3.6 cm,0.4cm) node  {$O_2$};

\draw (-0.4 cm,0.4cm) node  {$A_{-1}$};
\draw (-1.2 cm,0.4cm) node  {$O_{-1}$};
\draw (-2.0 cm,0.4cm) node  {$A_{-2}$};
\draw (-2.8 cm,0.4cm) node  {$O_{-2}$};
\draw (-3.6 cm,0.4cm) node  {$A_{-3}$};
\draw (-4.4 cm,0.4cm) node  {$O_{-3}$};

\draw[red, <->] (-3.2,-0.25) -- (-0.02,-0.25); 
\draw (-1.7 cm,-0.5cm) node  {$Z$};

\draw[red, <->] (-3.2,-0.75) -- (0.8,-0.75); 
\draw (-1.3 cm,-1.0cm) node  {$(Z,O)$};

\draw[red, <->] (-4.8,1.05) -- (-0.02,1.05); 
\draw (-2.4 cm,1.3cm) node  {$H$};

\draw[red, <->] (0.01,1.05) -- (2.39,1.05); 
\draw (1.2 cm,1.3cm) node  {$F$};

\draw[red, <->] (0.02, -0.25) -- (0.8,-0.25); 
\draw (0.4 cm,-0.5cm) node  {$O$};

\draw[red, <->] (0.82, -0.25) -- (1.58,-0.25); 
\draw (1.2 cm,-0.5cm) node  {$A$};

\draw[red, <->] (1.62, 1.6) -- (4.0, 1.6); 
\draw (2.4 cm,1.8cm) node  {$F'$};

\draw[red, <->] (-1.6,1.6) -- (1.6,1.6); 
\draw (-0.0 cm,1.8cm) node  {$Z'$};

\draw[blue, ->] (-1.3,-1.2) .. controls (-0.4,-1.9) ..(1.2,-0.7); 

\end{tikzpicture}
}
\caption{Case with $M_H=3,M=2,M_F=2$. A 2-memory policy determines action $A$ based on $(Z,O)$.}\label{fig:pomdp4}
\end{figure}
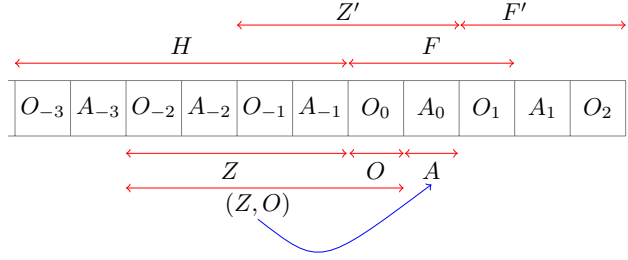

Throughout this paper, uppercase letters such as $(H,S,O,A,R,S',F')$  are reserved for \emph{random variables} and lower case letters such as $(h,s,o,a,r,s',f')$ are reserved for their \emph{realizations}. %
For simplicity, we impose the stationarity assumption, i.e., the marginal distributions of $(H,S,F)$ and $(H',S',F')$ are the same. %

The dataset $\Dcal_{\mathrm{ini}}$ consists of $N'$ data tuples $\{Z^{(i)}_0,O^{(i)}_{0:M_F-1},A^{(i)}_{0:M_F-1} \}_{i=1}^{N'}$ which is generated %
as follows: $\bar S^{}_0 \sim \nu_{\bar \Scal}$, %
$O^{}_0 \sim \OO(\cdot \mid S^{}_0)$, %
$A^{}_0 \sim \pi^b(\cdot \mid Z^{}_0,O^{}_0)$, %
$S^{}_1 \sim \TT(\cdot \mid S^{}_0,A^{}_0),\cdots$, until we observe $O^{(i)}_{M_F-1}$ and $A^{(i)}_{M_F-1}$. We denote its distribution over 
 $\bar \Fcal=\Zcal \times \Fcal$ by $\nu_{\bar \Fcal}(\cdot)$.

\paragraph{Notation.}  We denote the domain of $Z$ by $\Zcal = \Ocal^M \times \Acal^M$. We define $\bar S=(Z,S),\bar F=(Z,F)$. 

\subsection{Required changes in  \pref{sec:identification} }

Every definition and statement holds by replacing $F,S,F',\Fcal,\Scal$ with $\bar F,\bar S,\bar F',\bar \Fcal,\bar \Scal$, respectively. For completeness, we show these definitions and theorems tailored to $M$-memory policies. 

\begin{definition}[Future-dependent value functions]\label{def:valuelink_history}
Future-dependent value functions $g_V \in [\bar \Fcal \to \RR]$  are defined such that the following holds almost surely,
\begin{align*}
     \EE[g_V(\bar F) \mid \bar S] = V^{\pi^e}(\bar S ). 
\end{align*}
\end{definition}

\begin{definition}[Learnable future-dependent value functions]\label{def:learnable_history}
Define $\mu(Z,O,A):=\pi^e(A\mid Z,O)/\pi^b(A\mid Z,O)$.
Learnable future-dependent value functions $b_V \in [\bar \Fcal \to \RR]$ are defined such that the following  holds almost surely, 
\begin{align}
    0 = \EE\bracks{\mu(Z,O,A) \{R+\gamma b_V (\bar F')     \} - b_V (\bar F )  \mid H}. 
\end{align}
We denote the set of solutions by $\Bcal_V$. 
\end{definition}

\begin{theorem}[Identification Theorem] \label{thm:history}
Suppose (a) the existence of learnable future-dependent value functions (need not be unique); (b) the invertiblity condition, i.e., any $g: \bar \Scal \to \RR$ that satisfies $\EE[g(\bar S)\mid H]=0$ must also satisfy $g(\bar S)=0$ (i.e., $g(\bar s)=0$ for almost every $\bar s$ that belongs to the support of $\bar S$), (c) the overlap condition $w_{\pi^e}(\bar S)<\infty, \mu(Z,O,A)<\infty$. Then, for any $b_V\in \Bcal_V$, we have 
     \begin{align}
         J(\pi^e) = \EE_{\bar f \sim \nu_{ \bar \Fcal}}[b_V(\bar f) ]. 
    \end{align}
\end{theorem}

\subsection{Required changes in \pref{sec:algorithm} }

Every algorithm holds by replacing $F,S,F',\Fcal,\Scal$ with $\bar F,\bar S,\bar F',\bar \Fcal,\bar \Scal$, respectively. For completeness, we show the modified version of \pref{alg:main_version} in \pref{alg:main_version_history}.

\begin{algorithm}[!t]
\caption{ Minimax OPE on POMDPs }\label{alg:main_version_history}
\begin{algorithmic}[1] 
    \REQUIRE Dataset $\Dcal$, function classes $\Qcal \subset [ \bar \Fcal \to \RR],\Xi \subset [\Hcal \to \RR]$, hyperparameter $\lambda\geq 0$  
    \STATE $
    \hat b_V = \argmin_{ q \in \Qcal} \max_{\xi \in \Xi} \EE_{\Dcal}[\{\mu(Z,A,O)\{R + \gamma q( \bar F') \} - q(\bar F) \} \xi(H)- \lambda \xi^2(H)]. $
    \RETURN{$\hat J_{\VM} = \EE_{\mathcal{D}_{\mathrm{ini}}}[\hat b_V(\bar f)]$ }
\end{algorithmic}
\end{algorithm}

\subsection{Required change in \pref{sec:finite}}

We present the modified version of \pref{thm:final_error2} tailored to $M$-memory policies. 

\begin{definition}[Bellman operators]
The Bellman residual operator onto the history is defined as 
\begin{align*}
    \Tcal: [\bar \Fcal \to \RR] \ni q(\cdot) \mapsto  \EE[\mu(Z,O,A) \{R +\gamma q(\bar F')\}-q(\bar F) \mid  H=\cdot ], %
\end{align*}
and the Bellman residual error onto the history is defined as $\EE[ (\Tcal q)^2(H)]$. Similarly, the Bellman residual operator onto the latent state, $\Tcal^{S}$ is defined as 
\begin{align*}
    \Tcal^S: [\bar \Fcal \to \RR] \ni q(\cdot)  \mapsto  \EE[\mu(Z,O,A) \{R +\gamma q(\bar F')\} -q(\bar F) \mid \bar S=\cdot ],  %
\end{align*}
and the Bellman residual error onto the latent state is defined as $\EE[ \{\Tcal^S(q)\}^2(\bar S)]$.
\end{definition}

\begin{theorem}[Finite sample property of $\hat b_V$]\label{thm:bellman2}
Set $\lambda>0$. Suppose (\ref{thm:bellman2}a) $\Bcal_V \cap \Qcal \neq 0$ (realizability) and (\ref{thm:bellman2}b) $\Tcal \Qcal \subset \Xi$ (Bellman completeness). With probability $1-\delta$, $$\mathbb{E}[\{\Tcal \hat b_V\}^2(H)]  \leq c\{1/\lambda+\lambda\} \max(1,C_{\Qcal}, C_{\Xi})  \sqrt{\frac{\ln (  |\Qcal| |\Xi|c /\delta)}{n} }, $$ where $c$ is some universal constant. 
\end{theorem}

\begin{theorem}[Finite sample property of $\hat J_{\mathrm{VM}}$] %
Set $\lambda>0$. Suppose (\ref{thm:bellman2}a), (\ref{thm:bellman2}b), (\ref{thm:bellman2}c) any element in $q \in \Qcal$ that satisfies $
 \EE[\{\Tcal^S(q)\}(\bar S) \mid H] = 0 $ also satisfies $\Tcal^S(q)(\bar S)=0$. 
(\ref{thm:bellman2}d) the overlap $\mu(Z,O,A)<\infty$ and any element in $q \in \Qcal$ that satisfies $
    \Tcal^S(q)(\bar S) =0$ also satisfies $\Tcal^S(q)(\bar S^{\diamond}) =0$ 
where $\bar S^{\diamond} \sim d_{\pi^e}(\bar s)$. With probability $1-\delta$,  we have 
\begin{align}
    |J(\pi^e) - \hat J_{\VM}| &\leq  c(1-\gamma)^{-2}(1/\lambda + \lambda)\max(1,C_{\Qcal}, C_{\Xi}) \mathrm{IV}_1(\Qcal) \mathrm{Dr}_{\Qcal}[d_{\pi^e},P_{\pi^b}] \sqrt{\frac{\ln (  |\Qcal| |\Xi|c /\delta)}{n} }, 
\end{align}
where 
\begin{align}
     \mathrm{IV}^2_1(\Qcal)&:= \sup_{\{q \in \Qcal; \EE[\{\Tcal(q)(H)\}^2]\neq 0\} }\frac{\EE[\{\Tcal^{\Scal}(q)(\bar S)\}^2]  }{ \EE[\{\Tcal(q)(H)\}^2] }, \quad \\
     \mathrm{Dr}^2_{\Qcal}[d_{\pi^e},P_{\pi^b}]& :=  \sup_{ \{q\in \Qcal; \EE_{\bar s \sim P_{\pi^b}}[\{\Tcal^{\Scal}(q)(\bar s)\}^2]\neq 0  \}  } \frac{\EE_{ s \sim d^{\pi^e}}[\{\Tcal^{\Scal}(q)(\bar s)\}^2  ]  }{\EE_{\bar s \sim P_{\pi^b}}[\{\Tcal^{\Scal}(q)(\bar s)\}^2] }. 
\end{align}

\end{theorem}

\section{Examples}\label{sec:hse}

\subsection{Tabular POMDPs}\label{subsec:tabular}

We have seen that in \pref{lem:equivalent},  $\rank(\Prr_{\pi^b}( \Sbb_b, \Hb))=|\Scal_b|$ and $\rank(\Prr_{\pi^b}( \Fb \mid  \Sbb_b))=|\Scal_b|$ are sufficient conditions for the identification in the tabular setting. The following theorem show that the abovementioned two conditions are equivalent to $\rank(\Prr_{\pi^b}( \Fb, \Hb))=|\Scal_b|$.
\begin{lemma}\label{lem:equivalent2}
$\rank(\Prr_{\pi^b}( \Sbb_b, \Hb))=|\Scal_b|$ and $\rank(\Prr_{\pi^b}( \Fb \mid  \Sbb_b))=|\Scal_b|$ holds if and only if $\rank(\Prr_{\pi^b}( \Fb, \Hb))=|\Scal_b|$. 
\end{lemma}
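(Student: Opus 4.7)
The proof plan rests on a single matrix factorization of $\Prr_{\pi^b}(\bar \Fb,\Hb)$ together with two standard linear-algebra facts about ranks of products. The factorization I would establish first is
\[
\Prr_{\pi^b}(\bar \Fb,\Hb) \;=\; \Prr_{\pi^b}(\bar \Fb\mid \bar \Sbb_b)\,\Prr_{\pi^b}(\bar \Sbb_b,\Hb).
\]
To derive it, I would start from the law of total probability and marginalize over $\bar S$:
\[
\Prr_{\pi^b}(\bar F=\bar f,H=h)=\sum_{\bar s\in \bar \Scal_b}\Prr_{\pi^b}(\bar F=\bar f\mid \bar S=\bar s,H=h)\Prr_{\pi^b}(\bar S=\bar s,H=h),
\]
and then invoke the conditional independence $\bar F \perp H \mid \bar S$, which is a direct consequence of the POMDP graphical structure shown in \pref{fig:pomdp2}: conditional on $\bar S=(Z,S)$, the future block $F$ depends only on $S$ and on subsequent latent-state/action randomness, while $H$ depends only on the past; the shared coordinate $Z$ is absorbed into the conditioning set $\bar S$, so no subtlety arises from the overlap between $\bar F$ and $H$. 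Writing this in matrix form (rows indexed by $\bar \Fcal$, columns by $\Hcal$, inner dimension $\bar \Scal_b$) yields the claimed factorization.

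With the factorization in hand, the equivalence follows from standard rank inequalities. For the ``only if'' direction, the product satisfies
\[
\rank\bigl(\Prr_{\pi^b}(\bar \Fb,\Hb)\bigr)\le \min\bigl\{\rank(\Prr_{\pi^b}(\bar \Fb\mid \bar \Sbb_b)),\ \rank(\Prr_{\pi^b}(\bar \Sbb_b,\Hb))\bigr\}\le |\bar \Scal_b|,
\]
where the last inequality uses that the inner dimension is $|\bar \Scal_b|$. Hence if the left-hand side equals $|\bar \Scal_b|$, each factor must have rank exactly $|\bar \Scal_b|$, giving both invertibility and observability. For the ``if'' direction, assume $\rank(\Prr_{\pi^b}(\bar \Fb\mid \bar \Sbb_b))=\rank(\Prr_{\pi^b}(\bar \Sbb_b,\Hb))=|\bar \Scal_b|$. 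Then $A:=\Prr_{\pi^b}(\bar \Fb\mid \bar \Sbb_b)$ has full column rank, so $\ker(A)=\{0\}$; hence $\rank(AB)=\rank(B)=|\bar \Scal_b|$ where $B:=\Prr_{\pi^b}(\bar \Sbb_b,\Hb)$. This gives $\rank(\Prr_{\pi^b}(\bar \Fb,\Hb))=|\bar \Scal_b|$.

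I do not expect a serious obstacle here: the main substance is the verification of the conditional independence $\bar F\perp H\mid \bar S$, which I would handle by carefully reading off the d-separation structure of \pref{fig:pomdp2} while noting that the $Z$-component is deterministic given either side of the conditioning. Once that is in place, the rest is a one-line application of $\rank(AB)\le \min\{\rank(A),\rank(B)\}$ and the fact that multiplication by a matrix of full column rank preserves rank.
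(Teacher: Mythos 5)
Your proposal is correct and follows essentially the same route as the paper: both rest on the factorization $\Prr_{\pi^b}(\bar \Fb,\Hb)=\Prr_{\pi^b}(\bar \Fb\mid \bar \Sbb_b)\Prr_{\pi^b}(\bar \Sbb_b,\Hb)$ (justified by the conditional independence $F \perp (H\setminus Z)\mid Z,S$) and the bound $\rank(AB)\le\min\{\rank(A),\rank(B)\}\le|\bar\Scal_b|$ for the forward direction. The only cosmetic difference is in the converse: the paper invokes Sylvester's rank inequality, whereas you use the equivalent observation that a full-column-rank left factor preserves rank.
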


We again make a few remarks. First, %
$\rank(\Prr_{\pi^b}(\Fb, \Hb))=| \Scal|$ is often imposed to model HMMs and POMDPs \citep{hsu2012spectral,boots2011closing}. Here, our condition $\rank(\Prr_{\pi^b}(\Fb, \Hb))=| \Scal_b|$ is weaker than this assumption. We discuss the connection to the aforementioned works in \pref{sec:modeling}. Second, in the literature of online RL, $\rank(\Prr_{\pi^b}(\Fb, \Sbb))=| \Scal|$ is frequently imposed as well \citep{jin2020sample,liu2022partially} although they don't impose assumptions associated with the history proxy $H$. In confounded POMDPs, \cite{nair2021spectral,shi2021minimax} imposed a closely-related assumption, namely, $\rank(\Prr_{\pi^b}(\Fb ,\Hb,a))=| \Scal|$ for any $a \in \Acal$ where $\Prr_{\pi^b}(\Fb ,\Hb,a)$ is a matrix whose $(i,j)$-th element is $\Prr_{\pi^b}(F = x_i ,H = x'_j,A=a)$ ($\Fcal=\{x_i\},\Hcal=\{x'_j\}$). 

\subsection{HSE-POMDPs and LQGs}

In this section, we primarily emphasize the identification results in HSE-POMDPs. Extending these results to the final sample result is straightforward. 

\paragraph{Refined identification theorem. }

First, we describe the population version of \pref{thm:final_error2} as follows.

\begin{theorem}[Refined identification theorem ] \label{thm:identify2}
Suppose (\ref{thm:identify2}a) $\Bcal_V \cap \Qcal \neq \emptyset$, (\ref{thm:identify2}b) any element in $q \in \Qcal$ that satisfies $
 \EE[\{\Tcal^S(q)\}(S) \mid H] = 0 $ also satisfies $\Tcal^S(q)(S)=0$. 
(\ref{thm:identify2}c) the overlap $\mu(O,A)<\infty$ and any element in $q \in \Qcal$ that satisfies $
    \Tcal^S(q)(S) =0$ also satisfies $\Tcal^S(q)(S^{\diamond}) =0$ 
where $S^{\diamond} \sim d_{\pi^e}(s)$. 
Under the above three conditions, for any $b_V \in \Bcal_V \cap \Qcal $, we have $$
         J(\pi^e) = \EE_{f \sim \nu_{\Fcal}}[b_V ( f) ]. $$
\end{theorem}

The proof is deferred to \pref{sec:proof_section_D}

\paragraph{HSE-POMDPS. }

By using \pref{thm:identify2},  we can obtain a useful identification formula when we set $\Qcal,\Xi$ to be  linear models in HSE-POMDPs. We start with the definition.

\begin{myexp}[HSE-POMDPs with linear models]\label{exa:hse_pomdps}
Introduce features $\phi_{\Fcal}: \Fcal \to \RR^{d_{\Fcal}},\phi_{\Scal}: \Scal_b \to \RR^{d_{\Scal}},\phi_{\Hcal}: \Hcal \to \RR^{d_{\Hcal}}$ such that $\|\phi_{\Fcal}(\cdot)\| \leq 1, \|\phi_{\Scal}(\cdot)\| \leq 1,\|\phi_{\Hcal}(\cdot)\| \leq 1$. 
Letting $\Qcal$ and $\Xi$ be linear models, the existence of future-dependent value functions in $\Qcal$ is ensured as follows under certain conditions. Then, the existence of learnable future-dependent value functions in $\Qcal$, (\ref{thm:identify2}a) is automatically satisfied. 

Next, we provide sufficient conditions for the realizability (\ref{thm:identify2}a). 

\begin{lemma}\label{lem:linear_ex}
Suppose (LM1): $ \EE[\phi_{\Fcal}( F) \mid  S]=K_1 \phi_{\Scal}( S )$ for some $K_1 \in \RR^{d_{\Fcal} \times d_{\Scal} } $, (LM2): $V^{\pi^e}( S)$ is linear in $ \phi_{\Scal}( S)$, i.e., $V^{\pi^e}( S) \in \{w^{\top}\phi_{\Scal}( S): w\in \RR^{d_{\Scal}}, \|w\|\leq C_{\mathrm{LM}}\} $ for some $C_{\mathrm{LM}} \in \RR$, (LM3) for any $b \in \RR^{\Scal}$ such that $\|b\|\leq C_{\mathrm{LM}}$, there exists $a \in \RR^{d_{\Fcal}},\|a\| \leq C_{\Qcal}$ such that $a^{\top}K_1 \phi_{\Scal}( S) = b^{\top} \phi_{\Scal}( S)$. Then, future-dependent value functions exist and belong to $\Qcal = \{w^{\top}\phi_{\Fcal}(\cdot): w\in \RR^{d_{\Fcal}}, \|w\|\leq C_{\Qcal} \}$ for some $C_{\Qcal} \in \RR$.  
\end{lemma}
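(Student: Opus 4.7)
The plan is a direct constructive verification: I want to exhibit a future-dependent value function inside the linear class $\Qcal$ whose fiber-conditional expectation reproduces $V^{\pi^e}$. The only three ingredients available are (LM1) (linear conditional mean embedding of future features onto state features), (LM2) (linear representation of the value function in state features), and (LM3) (a controlled preimage in feature-future-coefficient space of any admissible state-coefficient vector through $K_1$). These dovetail cleanly, and I do not expect any real obstacle; the argument is essentially a one-line chain of equalities, with the only care being to track the norm bound so that the resulting $g_V$ lands in $\Qcal$.

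Concretely, I would proceed as follows. First, invoke (LM2) to fix $w_\ast \in \RR^{d_{\bar \Scal}}$ with $\|w_\ast\|\le C_{\mathrm{LM}}$ such that $V^{\pi^e}(\bar S) = w_\ast^\top \phi_{\bar \Scal}(\bar S)$ almost surely. Second, apply (LM3) with $b = w_\ast$ to obtain $a_\ast \in \RR^{d_{\bar \Fcal}}$ with $\|a_\ast\|\le C_{\Qcal}$ satisfying $a_\ast^\top K_1 \phi_{\bar \Scal}(\bar S) = w_\ast^\top \phi_{\bar \Scal}(\bar S)$ almost surely. Third, define the candidate
\begin{equation*}
g_V(\bar f) \defeq a_\ast^\top \phi_{\bar \Fcal}(\bar f),
\end{equation*}
which by construction belongs to $\Qcal = \{w^\top \phi_{\bar \Fcal}(\cdot): w\in\RR^{d_{\bar \Fcal}}, \|w\|\le C_{\Qcal}\}$.

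Finally, I would verify Definition~\ref{def:valuelink} directly by a chain of equalities, pulling the coefficient $a_\ast$ out of the conditional expectation and then applying (LM1) followed by the choice of $a_\ast$ from (LM3):
\begin{equation*}
\EE[g_V(\bar F)\mid \bar S] \;=\; a_\ast^\top \EE[\phi_{\bar \Fcal}(\bar F)\mid \bar S] \;=\; a_\ast^\top K_1 \phi_{\bar \Scal}(\bar S) \;=\; w_\ast^\top \phi_{\bar \Scal}(\bar S) \;=\; V^{\pi^e}(\bar S),
\end{equation*}
which holds almost surely. This shows $g_V$ satisfies the defining moment equation of a future-dependent value function and simultaneously lives in $\Qcal$, completing the proof. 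The main (mild) subtlety is that (LM3) only guarantees the identity $a_\ast^\top K_1 \phi_{\bar \Scal}(\bar S) = w_\ast^\top \phi_{\bar \Scal}(\bar S)$ in the functional sense on the support of $\bar S$; this is exactly the almost-sure equality that Definition~\ref{def:valuelink} requires, so no further work is needed. No completeness, invertibility, or identification argument enters at this stage; those are deferred to the downstream application of Theorem~\ref{thm:identify2} and Lemma~\ref{lem:bellman}.
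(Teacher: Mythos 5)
Your proof is correct and follows essentially the same route as the paper's: fix the coefficient $w_\ast$ from (LM2), use (LM1) to write $\EE[a^{\top}\phi_{\bar \Fcal}(\bar F)\mid \bar S]=a^{\top}K_1\phi_{\bar \Scal}(\bar S)$, and invoke (LM3) to solve $a_\ast^{\top}K_1\phi_{\bar \Scal}(\bar S)=w_\ast^{\top}\phi_{\bar \Scal}(\bar S)$. Your write-up is if anything slightly more careful than the paper's in explicitly tracking the norm bound $\|a_\ast\|\leq C_{\Qcal}$ so that the constructed $g_V$ lands in $\Qcal$.
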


The condition (LM1) requires the existence of a conditional mean embedding operator between $\Fcal$ and $\Scal$. This assumption is widely used to model PSRs, which include POMDPs and HMMs \citep{song2009hilbert,boots2013hilbert}. In addition, assumptions of this type are frequently imposed to model MDPs as well \citep{zanette2020learning,duan2020minimax,chowdhury2020no,hao2021sparse}. 
(LM2) is realizablity on the latent state space. (LM3) says the information of the latent space is not lost on the observation space. The condition $\rank(K_1)=d_{\Scal}$ is a sufficient condition; then, we can take $C_{\Qcal}=C_{\mathrm{LM}}/\sigma_{\min}(K_1)$. In the tabular case, we set $\phi_{\Fcal},\phi_{\Scal},\phi_{\Hcal}$ be one-hot encoding vectors over $\Fcal,\Scal_b,\Hcal$, respectively. Here, we remark that $S$ in $\phi_{\Scal}(S)$ is a random variable in the offline data; thus, $\phi_{\Scal}(\cdot)$ needs to be just defined on the support of the offline data. Hence, (LM3) is satisfied when $\rank(\Prr_{\pi^b}( \Fb \mid  \Sbb_b)) = |\Scal_b|$.  

Next, we see  (\ref{thm:identify2}b)  is satisfied as follows under certain conditions.  

\begin{lemma}\label{lem:linear_ex2}
Suppose (LM1), (LM2), (LM4): $\EE[ \mu(O,A) \phi_{\Scal}( S') \mid  S]$ is linear in $\phi_{\Scal}( S)$ and (LM5): 
\begin{align}\label{leq:linear_ex2}
       \sup_{x \in \RR^d, x^{\top}\EE[ \EE[\phi_{\Scal}( S)\mid H]\EE[\phi_{\Scal}( S)\mid H]^{\top}] x\neq 0  }\frac{ x^{\top}\EE[\phi_{\Scal}( S)\phi_{\Scal}( S)^{\top}] x }{x^{\top}\EE[ \EE[\phi_{\Scal}( S)\mid H]\EE[\phi_{\Scal}( S)\mid H]^{\top}] x}<\infty, 
\end{align}
hold. Then, (\ref{thm:identify2}b)  is satisfied.  
\end{lemma}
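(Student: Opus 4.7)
The plan is to show that under (LM1), (LM2) and (LM4), the Bellman residual $\Tcal^S(q)(\bar S)$ collapses to a linear combination of $\phi_{\bar\Scal}(\bar S)$ for every $q\in\Qcal$, and then use (LM5) to convert the hypothesis $\EE[\{\Tcal^S(q)\}(\bar S)\mid H]=0$ into $\Tcal^S(q)(\bar S)=0$ almost surely. The first part reduces the nonparametric invertibility required in (\ref{thm:identify2}b) to a finite-dimensional linear-algebraic statement; the second part is then essentially a Loewner-domination argument driven by (LM5).

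For the linearity step, I would write $q=w_q^\top\phi_{\bar\Fcal}$ and handle the three pieces of $\Tcal^S(q)(\bar S)$ separately. By (LM1),
\[
\EE[q(\bar F)\mid\bar S]=w_q^\top K_1\phi_{\bar\Scal}(\bar S).
\]
For the one-step-ahead term I would invoke the Markov property $\bar F'\perp(\bar S,O,A)\mid\bar S'$, which holds because $\bar F'=(Z',F')$ is generated from $\bar S'=(Z',S')$ via the emission kernel, the $M$-memory behavior policy, and the transition kernel; combined with (LM1) at the next step (by stationarity) and (LM4), this yields
\[
\EE[\mu(Z,O,A)\,q(\bar F')\mid\bar S]=w_q^\top K_1\,\EE[\mu(Z,O,A)\,\phi_{\bar\Scal}(\bar S')\mid\bar S]=w_q^\top K_1 K_2\,\phi_{\bar\Scal}(\bar S),
\]
where $K_2$ is the matrix from (LM4). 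Finally, plugging $V^{\pi^e}(\bar S)=w_V^\top\phi_{\bar\Scal}(\bar S)$ from (LM2) into the importance-weighted Bellman equation $V^{\pi^e}(\bar S)=\EE[\mu(Z,O,A)\{R+\gamma V^{\pi^e}(\bar S')\}\mid\bar S]$ and using (LM4) once more gives $\EE[\mu(Z,O,A)\,R\mid\bar S]=w_V^\top(I-\gamma K_2)\phi_{\bar\Scal}(\bar S)$. Combining the three pieces,
\[
\Tcal^S(q)(\bar S)=\alpha_q^\top\phi_{\bar\Scal}(\bar S),\qquad \alpha_q^\top := w_V^\top(I-\gamma K_2)+w_q^\top(\gamma K_1 K_2-K_1).
\]

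Now suppose $\EE[\{\Tcal^S(q)\}(\bar S)\mid H]=0$ almost surely. By the linearity just established, $\alpha_q^\top\EE[\phi_{\bar\Scal}(\bar S)\mid H]=0$ almost surely; squaring and taking expectation gives $\alpha_q^\top A\,\alpha_q=0$, where $A:=\EE[\EE[\phi_{\bar\Scal}(\bar S)\mid H]\EE[\phi_{\bar\Scal}(\bar S)\mid H]^\top]$. The finite supremum in (LM5) is equivalent to the existence of $C<\infty$ with $x^\top B x\le C\,x^\top A x$ for every $x\in\RR^{d_{\bar\Scal}}$, where $B:=\EE[\phi_{\bar\Scal}(\bar S)\phi_{\bar\Scal}(\bar S)^\top]$; indeed, a bounded ratio on $\{x^\top A x>0\}$ forces $\ker A\subseteq\ker B$ by a standard perturbation argument, giving the Loewner inequality on all of $\RR^{d_{\bar\Scal}}$. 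Hence $\alpha_q^\top B\,\alpha_q=0$, i.e., $\EE[(\alpha_q^\top\phi_{\bar\Scal}(\bar S))^2]=0$, so $\Tcal^S(q)(\bar S)=\alpha_q^\top\phi_{\bar\Scal}(\bar S)=0$ almost surely, as required.

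The only delicate point is in the linearity step: one must justify pulling the inner conditional expectation given $\bar S'$ through the importance weight $\mu(Z,O,A)$ before applying (LM4). This rests on the conditional independence $\bar F'\perp(\bar S,O,A)\mid\bar S'$, which in turn uses the fact that under an $M$-memory behavior policy all information about the past relevant to $\bar F'$ is already contained in $Z'$, a component of $\bar S'$. Once this independence is established, the remainder is routine: three applications of the tower rule and the elementary equivalence between a bounded generalized Rayleigh quotient and a Loewner domination $B\preceq C A$.
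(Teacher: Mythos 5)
Your proof is correct and follows the same two-stage skeleton as the paper's: first show that for every $q\in\Qcal$ the residual $\Tcal^{S}(q)(\bar S)$ is a linear functional of $\phi_{\bar \Scal}(\bar S)$, then use (LM5) to pass from $\EE[\{w_4^{\top}\phi_{\bar\Scal}(\bar S)\}\mid H]=0$ to $w_4^{\top}\phi_{\bar\Scal}(\bar S)=0$ via the generalized-Rayleigh-quotient / Loewner-domination argument (your kernel-inclusion justification of that last step is actually more careful than the paper's one-line version, and is sound: positive semidefiniteness of $A$ plus a perturbation of any $x\in\ker A\setminus\ker B$ makes the ratio blow up).

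The one genuine difference is in how the reward term is linearized. The paper cancels $\EE[\mu(Z,O,A)R\mid \bar S]$ by subtracting the Bellman residual of a \emph{future-dependent value function} $b_V=(w^{\star})^{\top}\phi_{\bar\Fcal}(\cdot)$, whose residual vanishes; the existence of such a linear $b_V$ is imported from \pref{lem:linear_ex} and therefore implicitly invokes (LM3), which is not among the stated hypotheses of \pref{lem:linear_ex2}. You instead express $\EE[\mu(Z,O,A)R\mid\bar S]=w_V^{\top}(I-\gamma K_2)\phi_{\bar\Scal}(\bar S)$ directly from the importance-weighted Bellman equation together with (LM2) and (LM4), so your argument uses only the hypotheses actually listed in the lemma. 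Both routes rely on the same conditional independence $\bar F'\perp (\bar S,O,A)\mid \bar S'$ (stated at the top of the paper's appendix) and on stationarity to apply (LM1) at the next time step, which you correctly flag as the delicate point. Net effect: same proof strategy, but your decomposition is slightly more self-contained with respect to the lemma's stated assumptions.
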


Condition (LM4) requires the existence of conditional mean embedding between $S'$ and $ S$ under the distribution induced by a policy $\pi^e$.  The condition (LM5) is satisfied when $\rank(\Prr_{\pi^b}( \Sbb_b, \Hb)) = |\Scal_b|$ in the tabular setting. 

Combining \pref{lem:linear_ex} and \pref{lem:linear_ex2} with \pref{thm:identify2}, we obtain the following handy formula. 
\begin{lemma}[Formula with linear models in HSE-PODMDPs]\label{lem:linear_ex3}
Suppose (LM1)-(LM5), (LM6): there exists a matrix $K_2 \in \mathbb{R}^{d_{\Scal} \times d_{\Hcal}} $ such that $\EE[\phi_{\Scal}( S)\mid H] =  K_2 \phi_{\Hcal}(H)$,
(LM7): $\mu(Z,O,A)<\infty$ and 
\begin{align}\label{eq:overlap_linear}
   \sup_{x \in \RR^d, 0 \neq x^{\top}\EE_{s \sim P_{\pi^b} }[\phi_{\Scal}(s)\phi_{\Scal}(s)^{\top}] x }\frac{ x^{\top}\EE_{s \sim d_{\pi^e} }[\phi_{\Scal}(s)\phi_{\Scal}(s)^{\top} ] x }{x^{\top}\EE_{s \sim P_{\pi^b}}[\phi_{\Scal}(s)\phi_{\Scal}(s)^{\top}] x}<\infty, 
\end{align}
hold. Then, we have 
\begin{align}\label{eqn:Jpielinear}
  J(\pi^e) = \EE_{f \sim \nu_{\Fcal}}[\phi_{\Fcal}(f)]^{\top}\EE[\phi_{\Hcal}(H)\{\phi_{\Fcal}( F)-\gamma \mu(O,A)\phi_{\Fcal}( F')\}^{\top} ]^{+}\EE[\mu(O,A)R \phi_{\Hcal}(H)]. 
\end{align}
\end{lemma}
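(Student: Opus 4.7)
The strategy is to invoke Theorem \ref{thm:identify2} with the linear class $\Qcal = \{w^\top \phi_{\bar \Fcal}(\cdot) : \|w\| \le C_{\Qcal}\}$ and the analogous linear class $\Xi$ spanned by $\phi_{\Hcal}$, and then reduce the conditional moment equation to a finite-dimensional linear system. First, I would check the three hypotheses of Theorem \ref{thm:identify2}. Realizability (\ref{thm:identify2}a) is immediate from Lemma \ref{lem:linear_ex} combined with Lemma \ref{lem:lemma_value}; restricted invertibility (\ref{thm:identify2}b) is exactly Lemma \ref{lem:linear_ex2}; and for the overlap (\ref{thm:identify2}c), I would expand $\Tcal^S(q)(\bar S)$ for linear $q(\bar F) = w^\top \phi_{\bar \Fcal}(\bar F)$ via (LM1) and (LM4) as $\EE[\mu R \mid \bar S]$ plus a linear functional of $\phi_{\bar \Scal}(\bar S)$, so that both $P_{\pi^b}$-a.s.\ and $d_{\pi^e}$-a.s.\ vanishing become linear statements about $\phi_{\bar \Scal}(\bar S)$ controlled by the relative-condition-number bound (LM7). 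Granted these three conditions, Theorem \ref{thm:identify2} yields $J(\pi^e) = \EE_{\bar f \sim \nu_{\bar \Fcal}}[\phi_{\bar \Fcal}(\bar f)]^\top w$ for every $w$ with $w^\top \phi_{\bar \Fcal}(\cdot) \in \Bcal_V \cap \Qcal$.

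Next, I would translate membership in $\Bcal_V$ into a matrix equation. Multiplying the defining identity $\EE[\mu\{R + \gamma w^\top \phi_{\bar \Fcal}(\bar F')\} - w^\top \phi_{\bar \Fcal}(\bar F) \mid H] = 0$ by $\phi_{\Hcal}(H)$ and taking unconditional expectations yields $A w = c$, where $A = \EE[\phi_{\Hcal}(H)\{\phi_{\bar \Fcal}(\bar F) - \gamma \mu \phi_{\bar \Fcal}(\bar F')\}^\top]$ and $c = \EE[\mu R\, \phi_{\Hcal}(H)]$. The realizability from the previous step guarantees $c \in \operatorname{range}(A)$, so $A^+ c$ is itself a particular solution of this linear system.

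To conclude the displayed formula, it remains to show that $\phi^\top w$ takes the same value on every solution of $A w = c$, where $\phi := \EE_{\bar f \sim \nu_{\bar \Fcal}}[\phi_{\bar \Fcal}(\bar f)]$; equivalently, $\phi \perp \ker A$. I would handle this by arguing that every $v \in \ker A$ gives rise to a learnable future-dependent value function $v^\top \phi_{\bar \Fcal}(\cdot)$ for the modified POMDP with reward identically $0$, whose policy value is trivially $0$, so that a second application of Theorem \ref{thm:identify2} in that zero-reward problem delivers $\phi^\top v = 0$. The main obstacle is this upgrading step: passing from the finite-dimensional equation $A v = 0$ back to the pointwise conditional moment $\EE[\gamma \mu v^\top \phi_{\bar \Fcal}(\bar F') - v^\top \phi_{\bar \Fcal}(\bar F) \mid H] = 0$ that is needed to make $v^\top \phi_{\bar \Fcal}$ a genuine learnable future-dependent value function. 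Under (LM1), (LM4), and (LM6), the Bellman operator $\Tcal$ sends every linear $q$ to a linear combination of $\phi_{\Hcal}(H)$; invertibility of $\EE[\phi_{\Hcal}(H)\phi_{\Hcal}(H)^\top]$ on the relevant subspace, which is implicit in (LM5), then promotes the integrated moment $Av=0$ to the almost-sure conditional one, which completes the argument.
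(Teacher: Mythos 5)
Your proposal is correct and follows essentially the same architecture as the paper's proof: both verify the hypotheses of \pref{thm:identify2} via \pref{lem:linear_ex} (with \pref{lem:lemma_value}), \pref{lem:linear_ex2}, and (LM7); both reduce membership in $\Bcal_V \cap \Qcal$ to the matrix equation $Bw=c$ with $B=\EE[\phi_{\Hcal}(H)\{\phi_{\bar \Fcal}(\bar F)-\gamma \mu(Z,O,A)\phi_{\bar \Fcal}(\bar F')\}^{\top}]$ and $c=\EE[\mu(Z,O,A)R\,\phi_{\Hcal}(H)]$; and both use realizability together with $BB^{+}B=B$ to show $B^{+}c$ solves that system. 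The one organizational difference is the endgame: the paper shows that \emph{any} solution of $Bw=c$ (in particular $B^{+}c$) is a genuine learnable future-dependent value function and then applies \pref{thm:identify2} once to $B^{+}c$, whereas you apply \pref{thm:identify2} to the realizable solution $\bar w$ and separately argue that $\EE_{\bar f\sim\nu_{\bar\Fcal}}[\phi_{\bar\Fcal}(\bar f)]$ is orthogonal to $\ker B$ via a zero-reward detour. Your route is valid, but once you have the upgrading step the kernel argument is redundant.

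One correction on the step you flag as the main obstacle. Promoting $Bv=0$ to the almost-sure conditional moment requires no invertibility of the Gram matrix and has nothing to do with (LM5). By (LM1)--(LM4) and (LM6), $(\Tcal q)(H)=a^{\top}\phi_{\Hcal}(H)$ for some vector $a$, so $Bv=0$ reads $a^{\top}\EE[\phi_{\Hcal}(H)\phi_{\Hcal}(H)^{\top}]=\mathbf{0}$; right-multiplying by $a$ gives $\EE[\{a^{\top}\phi_{\Hcal}(H)\}^{2}]=0$, hence $(\Tcal q)(H)=0$ almost surely. This holds for any second-moment matrix and is exactly the paper's first step. (LM5) is used elsewhere, inside \pref{lem:linear_ex2}, to pass from $\EE[\Tcal^{S}(q)(\bar S)\mid H]=0$ to $\Tcal^{S}(q)(\bar S)=0$; it plays no role in converting the integrated moment on $\phi_{\Hcal}(H)$ into the conditional one.
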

We imposed two additional assumptions in \pref{lem:linear_ex3}. (LM6) is used to ensure the Bellman completeness assumption $\Tcal \Qcal \subset \Xi$. 
(LM7) is similar to the overlap condition (\ref{thm:identify2}c) in linear models. %
It is characterized based on a relative condition number whose value is smaller than the density ratio. Similar assumptions are %
imposed in offline RL with fully observable MDPs as well \citep{xie2021bellman,zanette2021provable,uehara2021pessimistic}.  

\end{myexp}

\paragraph{LQG. }

Finally, we extend our result to the case of LQG. 

\begin{example}[LQG] \label{exa:lqg}
Linear Quadratic Gaussian (LQG) falls in the category of HSE-POMDPs. Suppose 
\begin{align*}
    s_{t+1} = A s_t + B a_t + \epsilon_{1t},\quad r_t = -s^{\top}_t Q s_t - a^{\top}_t R a_t,\quad o_{t}= C s_t  + \epsilon_{2t} 
\end{align*}
where $A,B,C,Q,R$ are matrices that parametrize models and $\epsilon_{1t}$ and $\epsilon_{2t}$ are Gaussian noises. Consider a linear evaluation policy $\pi^e(a_t \mid o_t, z_t) = \rI(a_t = F [o_t, z^{\top}_t]^{\top})$ for certain matrix $F$. %
Notice that linear policies are commonly used in LQG since the globally optimal policy is linear \citep{bertsekas2012dynamic}. Then, defining $\phi_{\Scal}(s) = (1,\{s\otimes s\}^{\top})^{\top}$, $\phi_{\Fcal}(f)=(1,\{f \otimes f\}^{\top})^{\top}$ and $\phi_{\Hcal}(h)=(1,\{h \otimes h \}^{\top})^{\top}$, the following holds. 

\begin{lemma}\label{lem:lqg}
In LQG, (LM1),(LM2), (LM4) are satisfied. When $C$ is left-invertible, (LM3) holds. 
\end{lemma}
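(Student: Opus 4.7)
The plan is to exploit Gaussianity throughout. In LQG, assume (as is standard) that the initial latent state is zero-mean Gaussian and the behavior policy is linear-Gaussian, so that the joint law of $(Z_t,S_t,O_t,A_t)_{t\geq 0}$ under $\pi^b$ is zero-mean Gaussian. Then every conditional law $Y\mid X$ I will encounter is Gaussian with mean $MX$ and $X$-independent covariance $\Sigma_{Y\mid X}$, whence $\EE[Y\otimes Y\mid X]=\Sigma_{Y\mid X}+M\,XX^{\top}M^{\top}$, an affine function of $X\otimes X$. Applying this identity with $X=\bar S$ and $Y=\bar F$ yields (LM1): after appending the constant-$1$ coordinate on both sides, $\EE[\phi_{\bar\Fcal}(\bar F)\mid\bar S]$ becomes a linear function of $\phi_{\bar\Scal}(\bar S)=(1,\{\bar S\otimes\bar S\}^{\top})^{\top}$, which defines $K_1$.

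For (LM2), I would invoke the classical LQG fact that the value function of a linear policy is quadratic-plus-constant in the effective state \citep{bertsekas2012dynamic}. The augmented process $\bar S_t=(Z_t,S_t)$ is itself a linear Gaussian system when $\pi^e$ is used (since $Z_{t+1}$ is a deterministic shift of $(Z_t,O_t,A_t)$ and $A_t=F[O_t,Z_t^{\top}]^{\top}$), so iterating the Bellman equation gives $V^{\pi^e}(\bar s)=\bar s^{\top}P\bar s+c$ for some symmetric $P$ and constant $c$, i.e., a linear functional of $\phi_{\bar\Scal}(\bar s)$.

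For (LM4), the importance-weighting identity $\EE[\mu(Z,O,A)\phi_{\bar\Scal}(\bar S')\mid\bar S]=\EE_{\pi^e}[\phi_{\bar\Scal}(\bar S')\mid\bar S]$ reduces the claim to showing that $\EE_{\pi^e}[\bar S'\otimes\bar S'\mid\bar S]$ is affine in $\bar S\otimes\bar S$. Under $\pi^e$, $\bar S'\mid\bar S$ is Gaussian with mean linear in $\bar S$ and $\bar S$-independent covariance, so the same second-moment identity as above---now with $Y=\bar S'$ and $X=\bar S$---delivers (LM4).

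Finally, for (LM3) under $C$ left-invertible, I need $K_1$ to have full column rank. The key observation is that $\bar F=(Z,F)$ contains $Z$ directly and contains $O_0=CS+\epsilon_{2,0}$, so among the coordinates of $\EE[\bar F\otimes\bar F\mid\bar S]$, after subtracting the constant noise contribution (which $K_1$ carries through the $1$-coordinate of $\phi_{\bar\Scal}$), one reads off the blocks $Z\otimes Z$, $Z\otimes CS$, and $CS\otimes CS$; left-invertibility of $C$, and hence of $C\otimes C$, then lets us recover $S\otimes S$, $Z\otimes S$, $S\otimes Z$, and together with $Z\otimes Z$ this reconstructs every entry of $\bar S\otimes\bar S$. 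Consequently $K_1$ is injective, so the minimum-norm $a$ solving $K_1^{\top}a=b$ exists and satisfies $\|a\|\le\|K_1^{+}\|\,\|b\|$, allowing us to take $C_{\Qcal}=\|K_1^{+}\|\cdot C_{\mathrm{LM}}$. I expect the main technical obstacle to be exactly this last step: verifying that the ``subtract the constant noise covariance, then invert $C\otimes C$'' procedure is realized as a single \emph{bounded linear} functional of $K_1\phi_{\bar\Scal}(\bar S)$, which comes down to composing finitely many coordinate projections with $(C\otimes C)^{+}$ and controlling the resulting operator norm.
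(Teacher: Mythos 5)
Your proposal is correct. The paper does not actually prove this lemma in-line --- its ``proof'' is a pointer to \cite[Chapter J]{uehara2022provably} --- and the route you take (joint zero-mean Gaussianity so that $\EE[Y\otimes Y\mid X]=\Sigma_{Y\mid X}+(M\otimes M)(X\otimes X)$ is affine in $X\otimes X$ for (LM1) and (LM4), the classical quadratic-plus-constant value function for (LM2), and left-invertibility of $C\otimes C$ and $I\otimes C$ to recover every block of $\bar S\otimes\bar S$ from $K_1\phi_{\bar\Scal}(\bar S)$ for (LM3)) is exactly the standard argument one expects that reference to give. One point worth making explicit, which you correctly flag but should state as a hypothesis rather than an aside: because $\phi_{\bar\Scal}$ and $\phi_{\bar\Fcal}$ contain only the constant and the pure second-moment features (no linear terms), the zero-mean assumption on the initial state, the noises, and the (linear-Gaussian) behavior policy is genuinely needed, not merely convenient --- otherwise $\EE[\bar F\otimes\bar F\mid\bar S]$ picks up terms linear in $\bar S$ that these feature maps cannot represent. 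The final boundedness concern you raise is immediate in finite dimensions: your left inverse $L$ of $K_1$ is a fixed matrix, so $a=(K_1^{+})^{\top}b$ gives $\|a\|\le\|K_1^{+}\|_{\mathrm{op}}C_{\mathrm{LM}}$, matching the paper's $C_{\Qcal}=C_{\mathrm{LM}}/\sigma_{\min}(K_1)$.
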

Thus, what we additionally need to assume is only (LM5), (LM6) and (LM7) in LQG. 

\end{example}

\section{Finite Horizon Off-Policy Evaluation} \label{sec:finitez_horizon2}

For completeness, we also consider estimation of $J_T(\pi^e)=\EE_{\pi^e}[\sum_{k=0}^{T-1} \gamma^k R_k]$ 
when the horizon is finite and the system dynamics are nonstationary.  We first define value and learnable future-dependent value functions following \pref{sec:identification}. Let $V^{\pi^e}_t(s) = \EE_{\pi^e}[\sum_{k=t}^{\infty}\gamma^{k-t} R_k \mid S_t=s]$ denote the state value function. %

\begin{definition} [Future-dependent value functions]
For $t \in [T-1]$, future-dependent value functions $\{g^{[t]}_V\}_{t=0}^{T-1}$ are defined as solutions to 
\begin{align*}
 0=\EE[ g^{[t]}_V( F) \mid  S] - V^{\pi^e}_t( S) 
\end{align*}
and  $g^{[T]}_V =0$. We denote the set of $g^{[t]}_V$ by $\Gcal^{[t]}_V$.
\end{definition}

\begin{definition} [Learnable future-dependent value functions]
For $t \in [T-1]$, learnable future-dependent value functions $\{b^{[t]}_V\}_{t=0}^{T-1}$ are defined as solutions to 
\begin{align*}
 \EE[\mu(O,A)\{ R+ \gamma b^{[t+1]}_V( F')\} - b^{[t]}_V( F) \mid H]
\end{align*}
where $ b^{[T]}_V =0$. We denote the set of $b^{[t]}_V$ by $\Bcal^{[t]}_V$.
\end{definition}

We define the following Bellman operator:
\begin{align*}
    \Tcal^{\Scal,t}: \prod_{t=0}^{T-1} [ \Fcal \to \RR ] \ni \{q_t(\cdot)\} \mapsto \EE[ \mu(O,A) \{ R+ \gamma q_{t+1}( F')\}- q_t( F) \mid  S=\cdot]\in  [ \Scal \to \RR ]. 
\end{align*}
We again remark that while the conditional expectation of the offline data is not defined on the outside of $\Scal_b$ (the support of $\Scal$) above, we just set $0$ on the outside of $\Scal_b$. 

Here are the analogs statements of \pref{thm:identify2} in the finite horizon setting. 

\begin{lemma}\label{lem:value_lenarable}
Future-dependent value functions are learnable future-dependent value functions. 
\end{lemma}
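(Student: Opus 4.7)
The plan is to mimic the proof of \pref{lem:lemma_value} but carry it out by backward induction on $t$, starting from the terminal condition $g_V^{[T]} = b_V^{[T]} = 0$ and descending to $t = 0$. The inductive hypothesis will be that any future-dependent value function $g_V^{[t+1]} \in \Gcal_V^{[t+1]}$ already satisfies the learnable conditional moment restriction with the appropriate terminal continuation, so it suffices at each step to deduce the analogous restriction for $g_V^{[t]}$.

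The inductive step goes as follows. First I would write the nonstationary Bellman equation for the state value function,
\begin{align*}
    V^{\pi^e}_t(\bar S) \;=\; \EE_{\pi^e}\bigl[R + \gamma V^{\pi^e}_{t+1}(\bar S') \,\big|\, \bar S\bigr]
    \;=\; \EE\bigl[\mu(Z,O,A)\{R + \gamma V^{\pi^e}_{t+1}(\bar S')\} \,\big|\, \bar S\bigr],
\end{align*}
where the second equality is the importance-sampling change of measure from $\pi^e$ to $\pi^b$ conditional on $\bar S$. Next, plug in the definition of future-dependent value functions on both sides: replace $V^{\pi^e}_t(\bar S)$ by $\EE[g_V^{[t]}(\bar F)\mid \bar S]$ and $V^{\pi^e}_{t+1}(\bar S')$ by $\EE[g_V^{[t+1]}(\bar F')\mid \bar S']$. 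To pull the inner conditional expectation out, I would invoke the conditional independence $\bar F' \perp (Z,O,A,\bar S)\mid \bar S'$ (which follows from the POMDP graphical structure in \pref{fig:pomdp2} once $\bar S'$ is fixed), together with the tower property, to obtain
\begin{align*}
    \EE\bigl[\mu(Z,O,A)\{R + \gamma g_V^{[t+1]}(\bar F')\} - g_V^{[t]}(\bar F) \,\big|\, \bar S\bigr] \;=\; 0.
\end{align*}

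Finally, to lift this from $\bar S$ to $H$, I would condition further on $H$ and use $(H\setminus Z)\perp (O,A,\bar F')\mid \bar S$, which is the same conditional independence already exploited in the displayed derivation preceding \pref{lem:lemma_value}; by iterated expectation this yields
\begin{align*}
    \EE\bigl[\mu(Z,O,A)\{R + \gamma g_V^{[t+1]}(\bar F')\} - g_V^{[t]}(\bar F) \,\big|\, H\bigr] \;=\; 0,
\end{align*}
which is exactly the defining equation of a learnable future-dependent value function at stage $t$, with $g_V^{[t+1]}$ playing the role of $b_V^{[t+1]}$ (valid by the inductive hypothesis). The base case $t=T$ is immediate since both $g_V^{[T]}$ and $b_V^{[T]}$ are identically zero.

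The main obstacle is bookkeeping rather than conceptual: one must verify that the two conditional independence relations used above hold in the \emph{nonstationary} setting, and that the support restrictions associated with $\bar \Scal_b$ (so that conditional expectations given $\bar S$ are only asserted on the behavior support) are consistent across the inductive step. Both are straightforward consequences of the POMDP structure plus the overlap conditions assumed in earlier sections, so no new idea beyond the infinite-horizon argument is required.
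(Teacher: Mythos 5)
Your proof is correct and follows essentially the same route as the paper's argument for \pref{lem:lemma_value}, transplanted to the nonstationary finite-horizon setting: write the stagewise Bellman equation with the importance-sampling change of measure, substitute the defining moment conditions $\EE[g_V^{[t]}(\bar F)\mid \bar S]=V_t^{\pi^e}(\bar S)$ and $\EE[g_V^{[t+1]}(\bar F')\mid \bar S']=V_{t+1}^{\pi^e}(\bar S')$, internalize the inner expectation via $F' \perp (Z\setminus Z'),S,O,A \mid Z',S'$, and then condition on $H$ using $(H\setminus Z)\perp(A,O,R,F')\mid Z,S$. One minor remark: the backward induction is superfluous scaffolding --- your stated inductive hypothesis (that $g_V^{[t+1]}$ satisfies the \emph{learnable} restriction at stage $t+1$) is never actually used in the step for stage $t$, which relies only on the two defining equations at $t$ and $t+1$, both supplied directly by the hypothesis; each $t$ can therefore be verified independently.
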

\begin{theorem}[Identification for finite horizon OPE]\label{thm:identify4}
Suppose for any $t \in [T-1]$, (\ref{thm:identify4}a) $\Bcal^{[t]}_V \cap \Qcal_t \neq \emptyset$, (\ref{thm:identify4}b) for any $q \in \Qcal_t$ that satisfies $\EE[\{\Tcal^{\Scal,t}(q)\}( S)\mid H]=0$ also satisfies $\{\Tcal^{\Scal,t}(q)\}( S)=0,$ (\ref{thm:identify4}c) overlap $\mu(O,A)<\infty$ and for any $q \in  \Qcal_t$ that satisfies $
     \{\Tcal^{\Scal,t}(q)\}( S)= 0 $ also satisfies  $\{\Tcal^{\Scal,t}(q)\}(S^{\diamond}_t)=0$ 
where $S^{\diamond}_t \sim d^{\pi^e}_t(\cdot)$. Then, for any $b^{[0]}_V \in \Bcal^{[0]}_V \cap \Qcal_0 $, we have
\begin{align*}
   J_T(\pi^e ) = \EE_{ f \sim \nu_{ \Fcal}}[b^{[0]}_V( f)]. 
\end{align*}
\end{theorem}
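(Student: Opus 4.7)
The plan is to prove, by backward induction on $t\in\{T{-}1, T{-}2, \ldots, 0\}$, the key lemma that $\EE[b^{[t]}_V(\bar F_t)\mid \bar S_t=\bar s]=V^{\pi^e}_t(\bar s)$ for $d^{\pi^e}_t$-a.e.\ $\bar s$, i.e., the learnable future-dependent value function $b^{[t]}_V$ behaves like a genuine future-dependent value function on the support of $d^{\pi^e}_t$. Granting this lemma at $t=0$, and using $d^{\pi^e}_0=\nu_{\bar \Scal}$ together with the tower property, I would obtain
\[
J_T(\pi^e)=\EE_{\bar s_0\sim \nu_{\bar \Scal}}[V^{\pi^e}_0(\bar s_0)]=\EE_{\bar s_0\sim\nu_{\bar \Scal}}\EE[b^{[0]}_V(\bar F_0)\mid \bar S_0=\bar s_0]=\EE_{\bar f_0\sim \nu_{\bar \Fcal}}[b^{[0]}_V(\bar f_0)],
\]
which is the desired identity.

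For the induction, I would mirror the three-step derivation underlying \pref{thm:identify2}, applied at each horizon $t$. First, by (\ref{thm:identify4}a), $\EE[\mu(Z,O,A)\{R+\gamma b^{[t+1]}_V(\bar F')\}-b^{[t]}_V(\bar F)\mid H]=0$; combining this with the POMDP conditional independence $(H\setminus Z)\perp (O,A,R,\bar F')\mid \bar S$ and the tower property yields $\EE[\{\Tcal^{\Scal,t}(b^{[t]}_V, b^{[t+1]}_V)\}(\bar S)\mid H]=0$, abbreviating the operator as depending on both $b^{[t]}_V$ and the already-fixed $b^{[t+1]}_V$. Assumption (\ref{thm:identify4}b) then forces $\{\Tcal^{\Scal,t}(b^{[t]}_V, b^{[t+1]}_V)\}(\bar S)=0$ $P_{\pi^b}$-a.s., and (\ref{thm:identify4}c) extends this to $d^{\pi^e}_t$-a.e.\ $\bar s$, giving
\[
\EE[b^{[t]}_V(\bar F)\mid \bar S=\bar s]=\EE[\mu(Z,O,A)\{R+\gamma b^{[t+1]}_V(\bar F')\}\mid \bar S=\bar s].
\]
Importance sampling handles the reward term as $\EE[\mu R\mid \bar S=\bar s]=\EE_{\pi^e}[R\mid \bar S=\bar s]$. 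For the second term, I would condition on $\bar S'=\bar S_{t+1}$ and use the Markov property $\bar F'\perp (\bar S,O,A)\mid \bar S'$ to write $\EE[\mu\,b^{[t+1]}_V(\bar F')\mid \bar S]=\EE[\mu\,\EE[b^{[t+1]}_V(\bar F')\mid \bar S']\mid \bar S]$; the inductive hypothesis replaces the inner expectation by $V^{\pi^e}_{t+1}(\bar S')$, and a second application of importance sampling yields $\EE_{\pi^e}[V^{\pi^e}_{t+1}(\bar S')\mid \bar S=\bar s]$. Summing the two pieces and using the one-step Bellman equation $V^{\pi^e}_t(\bar s)=\EE_{\pi^e}[R+\gamma V^{\pi^e}_{t+1}(\bar S')\mid \bar S=\bar s]$ closes the induction. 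The base case $t=T-1$ is the specialization with $b^{[T]}_V=V^{\pi^e}_T\equiv 0$, where only the reward (importance-sampling) step is needed.

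The most delicate point is licensing the substitution of the inductive hypothesis: the ``$d^{\pi^e}_{t+1}$-a.e.'' qualifier must transfer to the conditional law of $\bar S'$ given $\bar S=\bar s$ under the $\mu$-tilted measure, for $d^{\pi^e}_t$-a.e.\ $\bar s$. This follows from the identity $d^{\pi^e}_{t+1}(\cdot)=\int P^{\pi^e}(\cdot\mid \bar s)\,d^{\pi^e}_t(d\bar s)$, which makes $P^{\pi^e}(\cdot\mid \bar s)$ absolutely continuous with respect to $d^{\pi^e}_{t+1}$ for $d^{\pi^e}_t$-a.e.\ $\bar s$. A further subtlety worth tracking is that $\bar F'$ contains actions sampled under $\pi^b$ rather than $\pi^e$; the proof absorbs these $\pi^b$-actions into the inner conditional expectation $\EE[b^{[t+1]}_V(\bar F')\mid \bar S']$ (which coincides under the offline distribution and under the measure tilted only at $A_t$), so that a single importance weight on $A_t$ suffices at each stage rather than a product of weights that would reintroduce the curse of horizon.
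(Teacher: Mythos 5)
Your proof is correct, and its first half coincides with the paper's: both derive $\EE[\{\Tcal^{\Scal,t}(b_V)\}(\bar S)\mid H]=0$ from the definition of $\Bcal^{[t]}_V$ together with the conditional independence $(H\setminus Z)\perp (O,A,R,F')\mid \bar S$, then invoke (\ref{thm:identify4}b) to conclude $\{\Tcal^{\Scal,t}(b_V)\}(\bar S)=0$ on the offline support and (\ref{thm:identify4}c) to transfer this to $d^{\pi^e}_t$. Where you genuinely diverge is the final step. The paper writes $J_T(\pi^e)-\EE_{\bar f\sim\nu_{\bar\Fcal}}[b^{[0]}_V(\bar f)]$ as a telescoping (performance-difference-style) sum $\sum_{t=0}^{T-1}\gamma^t\,\EE_{\bar s\sim d^{\pi^e}_t}[\{\Tcal^{\Scal,t}(b_V)\}(\bar s)]$, using the occupancy-flow identity $\EE_{\bar s\sim d^{\pi^e}_{t+1}}[\EE[b^{[t+1]}_V(\bar F)\mid\bar S=\bar s]]=\EE_{\bar s\sim d^{\pi^e}_{t}}[\EE[\mu(Z,O,A)\,b^{[t+1]}_V(\bar F')\mid\bar S=\bar s]]$, and then observes that each summand vanishes; only the \emph{integrated} Bellman residual under $d^{\pi^e}_t$ is needed. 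You instead run a backward induction establishing the pointwise identity $\EE[b^{[t]}_V(\bar F)\mid\bar S=\bar s]=V^{\pi^e}_t(\bar s)$ for $d^{\pi^e}_t$-a.e.\ $\bar s$ and read off the claim at $t=0$ via $d^{\pi^e}_0=\nu_{\bar\Scal}$. Your route yields a strictly stronger intermediate conclusion (the learnable function is a genuine future-dependent value function on the $\pi^e$-reachable set) and is essentially the argument the paper itself deploys in the third step of the proof of \pref{cor:linear_finite}, but there without the measure-theoretic care; your tracking of the a.e.\ qualifier via $P^{\pi^e}(\cdot\mid\bar s)\ll d^{\pi^e}_{t+1}$ for $d^{\pi^e}_t$-a.e.\ $\bar s$, and your observation that the $\pi^b$-sampled actions inside $\bar F'$ are absorbed into $\EE[b^{[t+1]}_V(\bar F')\mid\bar S']$ so that a single weight on $A_t$ suffices, are exactly what is needed to make that induction rigorous in the general setting. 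The paper's telescoping decomposition, by contrast, extends more directly to the approximate regime, since it expresses the evaluation error as a sum of Bellman residuals that is reused in the finite-sample analysis.
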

Here, (\ref{thm:identify4}a), (\ref{thm:identify4}b), (\ref{thm:identify4}c) correspond to (\ref{thm:identify2}a), (\ref{thm:identify2}b), (\ref{thm:identify2}c), respectively. \pref{lem:value_lenarable} is often useful to ensure (\ref{thm:identify4}a). 

In the tabular setting, when we have $\rank(\Prr_{\pi_b}(\Fb,\Hb)) = |\Scal_b|$, conditions (a) and (b) are satisfied. This is the same condition imposed in Example \ref{exa:tabular}. 
When we use linear models for $\Qcal_t$ and $\Xi_t$, we have the following corollary.  This is the finite-horizon version of \pref{lem:linear_ex3}. 

\begin{corollary}[Formula with linear models in HSE-POMDPs ] \label{cor:linear_finite}
Suppose (LM1), (LM2f) $V^{\pi^e}_t( S)$ is linear in $\phi_{ \Scal}( S)$ for any $t \in [T-1]$, (LM3), (LM4),(LM5), (LM6). Then under the overlap $\mathrm{Dr}_{\Qcal}(d^{\pi^e}_t,P_{\pi^b} )<\infty 
$ and $\mu(O,A)<\infty$ for any $t \in [T-1]$. Starting from $\theta_T=0$, we recursively define 
\begin{align*}
          \theta_t =\EE[\phi^{\top}_{\Hcal}(H)\phi_{ \Fcal}( F) ]^{+}\EE[ \mu(O,A)\phi_{\Hcal}(H) \{R + \gamma \theta^{\top}_{t+1}\phi_{ \Fcal}( F')\}].  
\end{align*}
Then, $  J_T(\pi^e ) = \EE_{ f \sim \nu_{ \Fcal}}[\theta^{\top}_0 \phi_{ \Fcal}( f)]. $
\end{corollary}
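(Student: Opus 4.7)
The plan is to reduce this corollary to the finite-horizon identification theorem \pref{thm:identify4} after verifying that its three hypotheses at every step $t \in [T-1]$ are implied by (LM1)--(LM6), and then to derive the closed-form recursion for $\theta_t$ by backward induction using the linear structure. For the hypotheses: condition (\ref{thm:identify4}a) follows from a finite-horizon version of \pref{lem:linear_ex} applied stepwise, since under (LM1), (LM2f), and (LM3) a future-dependent value function belonging to $\Qcal_t$ exists at each $t$, and \pref{lem:value_lenarable} upgrades it to a learnable future-dependent value function in $\Qcal_t$. Condition (\ref{thm:identify4}b) follows from a finite-horizon analog of \pref{lem:linear_ex2}: (LM1) and (LM4) imply that $\Tcal^{\Scal,t}$ maps $\Qcal_t$ into functions linear in $\phi_{\bar \Scal}$, and (LM5) then translates vanishing conditional expectation on $H$ into vanishing on $\bar S$. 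Condition (\ref{thm:identify4}c) is precisely the overlap assumption imposed in the statement.

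With the hypotheses secured, I would proceed by backward induction to show that $b^{[t]}_V(\bar F) := \theta_t^{\top}\phi_{\bar \Fcal}(\bar F)$ is a learnable future-dependent value function in $\Qcal_t$. The base case $\theta_T=0$ matches $b^{[T]}_V=0$. Assuming $b^{[t+1]}_V(\bar F')=\theta_{t+1}^{\top}\phi_{\bar \Fcal}(\bar F')$, the key identity to verify is
\[
\EE\bigl[\mu(Z,O,A)\{R+\gamma \theta_{t+1}^{\top}\phi_{\bar \Fcal}(\bar F')\}-\theta_t^{\top}\phi_{\bar \Fcal}(\bar F)\,\bigl|\,H\bigr]=0\quad\text{a.s.}
\]
Multiplying by $\phi_{\Hcal}(H)$ and taking expectations collapses this into the finite-dimensional equation
\[
\EE[\phi_{\Hcal}(H)\,\phi_{\bar \Fcal}(\bar F)^{\top}]\,\theta_t=\EE\bigl[\mu(Z,O,A)\,\phi_{\Hcal}(H)\{R+\gamma \theta_{t+1}^{\top}\phi_{\bar \Fcal}(\bar F')\}\bigr],
\]
whose Moore--Penrose solution is exactly the $\theta_t$ displayed in the statement.

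The main obstacle is arguing that this projected matrix equation is equivalent to, not merely a least-squares relaxation of, the original conditional moment equation, so that the pseudo-inverse returns a genuine learnable future-dependent value function. This is where the Bellman completeness implied by (LM1), (LM4), and (LM6) is essential: under these assumptions both $\EE[\phi_{\bar \Fcal}(\bar F)\mid H]$ and $\EE[\mu(Z,O,A)\,\phi_{\bar \Fcal}(\bar F')\mid H]$ factor through $\phi_{\Hcal}(H)$ via matrices determined by $K_1,K_2$ and the transition embedding from (LM4), reducing the conditional moment equation on $H$ to a finite-dimensional linear constraint on $\theta_t$. The existence of at least one solution is guaranteed by condition (\ref{thm:identify4}a) that I already verified, so the linear system is consistent, and the Moore--Penrose inverse produces a valid $\theta_t$ regardless of any rank-deficiency of $\EE[\phi_{\Hcal}(H)\phi_{\bar \Fcal}(\bar F)^{\top}]$. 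Applying \pref{thm:identify4} to $b^{[0]}_V(\bar f)=\theta_0^{\top}\phi_{\bar \Fcal}(\bar f)\in \Bcal^{[0]}_V\cap \Qcal_0$ then yields $J_T(\pi^e)=\EE_{\bar f\sim \nu_{\bar \Fcal}}[\theta_0^{\top}\phi_{\bar \Fcal}(\bar f)]$, completing the proof.
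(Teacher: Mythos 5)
Your overall architecture matches the paper's: verify the three hypotheses of \pref{thm:identify4} from (LM1)--(LM6), then run a backward induction showing $\theta_t^{\top}\phi_{\bar\Fcal}(\cdot)$ is a learnable future-dependent value function, using the Moore--Penrose inverse to solve the projected moment equation. Steps (\ref{thm:identify4}a)--(\ref{thm:identify4}c) are handled essentially as in the paper.

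The gap is in your consistency argument for the linear system at step $t$. You claim that ``the existence of at least one solution is guaranteed by condition (\ref{thm:identify4}a), so the linear system is consistent.'' But realizability only guarantees some sequence $\{\tilde\theta_s\}$ with $B\tilde\theta_t=\EE[\mu(Z,O,A)\phi_{\Hcal}(H)\{R+\gamma\tilde\theta_{t+1}^{\top}\phi_{\bar\Fcal}(\bar F')\}]$, where $B=\EE[\phi_{\Hcal}(H)\phi_{\bar\Fcal}(\bar F)^{\top}]$. Your recursion instead puts $\theta_{t+1}=B^{+}(\cdots)$ from the previous step on the right-hand side, and $\theta_{t+1}$ need not equal $\tilde\theta_{t+1}$ when $B$ is rank-deficient; a priori the system $B\theta=\mathrm{RHS}(\theta_{t+1})$ could be inconsistent, in which case $B^{+}\mathrm{RHS}(\theta_{t+1})$ is only a least-squares solution and $\theta_t^{\top}\phi_{\bar\Fcal}$ is not a learnable bridge function. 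The missing ingredient --- the paper's third step --- is to first prove, by a separate backward induction using the invertibility established in your step for (\ref{thm:identify4}b), that \emph{every} learnable future-dependent value function satisfies $\EE[b^{[t]}_V(\bar F)\mid\bar S]=V^{\pi^e}_t(\bar S)$. This pins down the conditional mean of $\theta_{t+1}^{\top}\phi_{\bar\Fcal}(\bar F')$ given $\bar S'$ to equal that of $\tilde\theta_{t+1}^{\top}\phi_{\bar\Fcal}(\bar F')$, hence $\mathrm{RHS}(\theta_{t+1})=\mathrm{RHS}(\tilde\theta_{t+1})$, and only then does $B\theta_t=BB^{+}B\tilde\theta_t=B\tilde\theta_t=\mathrm{RHS}(\theta_{t+1})$ go through. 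Your appeal to the factorizations through $K_1$, $K_2$ and the (LM4) embedding does not substitute for this: writing the difference of right-hand sides as $\gamma\EE[\phi_{\Hcal}\phi_{\Hcal}^{\top}]K_2^{\top}M^{\top}K_1^{\top}(\theta_{t+1}-\tilde\theta_{t+1})$ does not place it in the range of $B=\EE[\phi_{\Hcal}\phi_{\Hcal}^{\top}]K_2^{\top}K_1^{\top}$ without the conditional-mean identity. Add that lemma and your induction closes.
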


\begin{remark}[Comparison to SIS]
When we have finite samples, the estimator is defined as in \pref{sec:algorithm}. Then, we can obtain the finite sample result as in \pref{sec:finite}. In this case, we can again possibly circumvent the curse of horizon. The error scales with the marginal ratio $\max_{t \in [T-1]}\max_{s \in \Scal}(d^{\pi^e}_t(s)/P_{\pi_b}(s))^{1/2}$. Compared to SIS, the finite sample error does not directly incur the exponential dependence on the horizon. 
\end{remark}

\section{Modeling of System Dynamics } \label{sec:modeling}

We have so far discussed how to estimate cumulative rewards under evaluation policies. In the literature on POMDPs \citep{song2010hilbert,boots2011closing,boots2013hilbert,kulesza2015spectral}, we are often interested in learning of system dynamics. In this section, we discuss how our methods are extended to achieve this goal. We ignore rewards in this section.  We assume policies are memory-less. %

\subsection{Tabular Setting}
Here, let $S^{\diamond}_0,O^{\diamond}_0,A^{\diamond}_0,\cdots$ be random variables under a memory-less evaluation policy $\pi^e:\Ocal \to \Delta(\Acal)$. 
Following \citep{song2010hilbert,boots2011closing,hsu2012spectral}, we consider two common estimands: 
\begin{align} \label{eq:goal1}
     \Prr_{\pi^e}(o_0,a_0,\cdots,o_{T-1},a_{T-1}) &:=\Prr_{\pi^e}(O^{\diamond}_0 = o_0, A^{\diamond}_0 = a_0,O^{\diamond}_1 = o_1,\cdots), \\ 
      \Prr_{\pi^e}(\Ob_T \mid o_0,a_0,\cdots,a_{T-1})&:=\{\Prr_{\pi^e}(O^{\diamond}_T = x_i \mid O^{\diamond}_0 = o_0, \cdots, A^{\diamond}_{T-1}=a_{T-1})\}_{i=1}^{|\Ocal|}, \label{eq:goal2}
\end{align}
given a sequence $o_0 \in \Ocal,\cdots,a_{T-1} \in \Acal$. Our goal is to estimate \pref{eq:goal1} and \pref{eq:goal2} from the offline data. To simplify the discussion, we first consider the tabular case. If we can model a $|\Ocal|$-dimensional vector $\Pr_{\pi^e}(o_0,\cdots,a_{T-1},\Ob_T) \in \RR^{|\Ocal|}$ where the entry indexed by $x_i \in \Ob$ is $\Prr_{\pi^e}(o_0,\cdots, a_{T-1},x_i)$, the latter estimand is computed by normalizing a vector, i.e., dividing it over the sum of all elements. Therefore, we have $ \Prr_{\pi^e}(\Ob_T \mid o_0,a_0,\cdots,a_{T-1}) \propto \Pr_{\pi^e}(o_0,\cdots,a_{T-1},\Ob_T)$. Hereafter, we consider modeling $\Pr_{\pi^e}(o_0,\cdots,a_{T-1},\Ob_T)$ instead of $\Pr_{\pi^e}(\Ob_T \mid o_0,\cdots,a_{T-1})$.  

To identify estimands without suffering from the curse of horizon, we would like to apply our proposal in the previous sections. %
Toward that end, we set rewards as the product of indicator functions 
\begin{align*}
    \Prr_{\pi^e}(o_0,a_0,\cdots,o_{T-1},a_{T-1})= \EE \prns{ \prod_{k=0}^{T-1}\mathrm{I}(O^{\diamond}_t=o_t,A^{\diamond}_t=a_t)}. 
\end{align*}
Since this is a product but not a summation, we cannot directly use our existing results. %
Nevertheless, the identification strategy is similar.

We first introduce learnable future-dependent value functions. These are analogs of \pref{def:learnable} tailored to the modeling of system dynamics. Let $\Xi \subset [\Hcal \to \RR]$ and $\Qcal \subset [ \Fcal \to \RR]$. Below, we fix $o_0 \in \Ocal,\cdots,a_{T-1} \in \Acal$. 

\begin{definition}[Learnable future-dependent value functions for modeling dynamics]
 Learnable future-dependent value functions $\{b^{[t]}_D \}_{t=0}^{T-1}$ where $b^{[t]}_D: \Fcal \to \RR$, for joint observational probabilities are defined as solutions to 
 \begin{align*}
0\leq t\leq T-1 &; \EE[b^{[t]}_D( F) -\rI(O=o_t,A=a_t)\mu(O,A) b^{[t+1]}_D( F') \mid H ]=0,\\
 & \EE[b^{[T]}_D( F)-1 \mid H] = 0. 
\end{align*}
We denote the set of solutions $b^{[t]}_D $ by $\Bcal^{[t]}_D$. Learnable future-dependent value functions $\{b^{[t]}_P \}_{t=0}^{T-1}$ where $b^{[t]}_P: \Fcal \to \RR^{|\Ocal|}$ for conditional observational probabilities are defined as solutions to
 \begin{align*}
0\leq t\leq T-1 &;  \EE[b^{[t]}_P( F)- \rI(O=o_t,A=a_t)\mu(O,A) b^{[t+1]}_P( F') \mid H]=\mathbf{0}_{|\Ocal|},\\
 &\EE[b^{[T]}_P( F)-\phi_{\Ocal}(O) \mid H ]=\mathbf{0}_{|\Ocal|},  
\end{align*}
where $\phi_{\Ocal}(\cdot)$ is a $|\Ocal|$-dimensional one-hot encoding vector over $\Ocal$ and $\mathbf{0}_{|\Ocal|}$ is a $|\Ocal|$-dimensional vector consisting of $0$. We denote the set of solutions $b^{[t]}_P $ by $\Bcal^{[t]}_P$.  
\end{definition}

Next, we define the Bellman operator. 
\begin{definition}[Bellman operators for modeling systems]
\begin{align*}
   \Tcal^{\Scal}_t: \prod_{t=0}^{T-1} [ \Fcal \to \RR]\ni \{q_t(\cdot)\} \mapsto \EE[q_t( F) - \rI(O=o_t,A=a_t)\mu(O,A)q_{t+1}( \Fcal')\mid  S=\cdot]\in [ \Scal \to \RR]. 
\end{align*}
\end{definition}

Following \pref{thm:identify2}, we can identify estimands. Here, let $\grave d^{\pi}_t(\cdot) \in \Delta( \Scal)$ be a probability density function of $S^{\diamond}_t$ conditional on $O^{\diamond}_0=o_0,\cdots,A^{\diamond}_T=a_{T-1}$. 

\begin{theorem}[Identification of joint probabilities]\label{thm:system_identification1}
Suppose (\ref{thm:system_identification1}a) $\Bcal^{[t]}_D \cap \Qcal_t \neq \emptyset $, (\ref{thm:system_identification1}b) for any $q \in \Qcal_t$ that satisfies $\EE[(\Tcal^{\Scal}_t q)(S) \mid H]=0$ also satisfies $(\Tcal^{\Scal}_t q)(S)=0$,(\ref{thm:system_identification1}c) for any $q \in \Qcal_t$ that satisfies $(\Tcal^{\Scal}_t q)( S)=0$  also satisfies $(\Tcal^{\Scal}_t q)(\grave S_t) =0$ where $\grave S_t \sim \grave d^{\pi}_t(\cdot)$ and $\mu(O,A)<\infty$. Then, for any $ b^{[0]}_D \cap \Qcal_0 \in \Bcal^{[0]}_D$,  we have 
\begin{align*}
      \Prr_{\pi^e}(o_0,a_0\cdots,o_{T-1},a_{T-1}) =  \EE_{ f \sim \nu_{ \Fcal}}[b^{[0]}_D(f)  ]. 
\end{align*}
\end{theorem}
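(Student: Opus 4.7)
My plan is to mirror the proof of \pref{thm:identify2} but adapted to the multiplicative indicator-product structure of joint observation probabilities. The backbone is a backward induction over $t = T, T-1, \ldots, 0$ that ties each $b^{[t]}_D \in \Bcal^{[t]}_D \cap \Qcal_t$ to a latent-state cost-to-go function
\begin{align*}
g^{[t]}(s) := \Prr_{\pi^e}(O^{\diamond}_k = o_k,\, A^{\diamond}_k = a_k \text{ for all } t \le k \le T-1 \mid S^{\diamond}_t = s),
\end{align*}
with $g^{[T]} \equiv 1$. By the Markov property of the POMDP and the importance-sampling identity $\mu(o,a)\pi^b(a \mid o) = \pi^e(a \mid o)$, one checks that $g^{[t]}$ satisfies the behavior-measured recursion $g^{[t]}(s) = \EE[\rI(O = o_t, A = a_t)\mu(O,A)\, g^{[t+1]}(S') \mid S = s]$. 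Hence it suffices to show $u^{[t]}(s) := \EE[b^{[t]}_D(F) \mid S = s]$ equals $g^{[t]}(s)$ on the support of $\grave d^{\pi}_t$, because then
\begin{align*}
\EE_{f \sim \nu_{\Fcal}}[b^{[0]}_D(f)] = \EE_{s \sim \nu_{\Scal}}\bracks{\EE[b^{[0]}_D(F) \mid S = s]} = \EE_{s \sim \nu_{\Scal}}[g^{[0]}(s)] = \Prr_{\pi^e}(o_0, a_0, \ldots, o_{T-1}, a_{T-1}),
\end{align*}
where the first equality uses the data-generating process of $\Dcal_{\mathrm{ini}}$ under which the marginal of $F$ factors as $\int \Prr(F = f \mid S = s)\,\nu_{\Scal}(s)\,ds$, and the last uses the Markov structure of the $\pi^e$-rollout started from $\nu_{\Scal}$.

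The induction proceeds as follows. At the boundary $t = T$, the defining condition $\EE[b^{[T]}_D(F) - 1 \mid H] = 0$ together with (\ref{thm:system_identification1}b) applied to the degenerate boundary operator gives $\EE[b^{[T]}_D(F) \mid S] = 1 = g^{[T]}(S)$ almost surely on the offline support, and (\ref{thm:system_identification1}c) transports this equality to the support of $\grave d^{\pi}_T$. For the inductive step at index $t$, realizability $b^{[t]}_D \in \Qcal_t$ combined with (\ref{thm:system_identification1}b) converts the $H$-conditional equation defining $\Bcal^{[t]}_D$ into the latent-state Bellman identity
\begin{align*}
\EE[b^{[t]}_D(F) \mid S] = \EE\bracks{\rI(O = o_t, A = a_t)\,\mu(O,A)\, b^{[t+1]}_D(F') \,\big|\, S}.
\end{align*}
I would then invoke the POMDP conditional independences $(S', F') \perp O \mid (S, A)$ and $F' \perp (S, O, A) \mid S'$, together with the stationarity assumption stating that the conditional law of $F'$ given $S'$ equals that of $F$ given $S$, to simplify $\EE[b^{[t+1]}_D(F') \mid S, O, A] = \EE[u^{[t+1]}(S') \mid S, A]$. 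Plugging this in and then substituting the inductive hypothesis $u^{[t+1]} = g^{[t+1]}$ (valid on the support of $\grave d^{\pi}_{t+1}$ by overlap) yields $u^{[t]} = g^{[t]}$ on the offline support; a final use of (\ref{thm:system_identification1}c) extends this to $\grave d^{\pi}_t$.

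The principal technical obstacles are twofold. First, the decoupling step that reduces $\EE[b^{[t+1]}_D(F') \mid S, O, A]$ to $\EE[u^{[t+1]}(S') \mid S, A]$ is the only place where the POMDP graphical model enters the argument; it requires carefully writing out the joint density of $(O, A, S', F')$ given $S$, and using both the transition kernel $\TT(\cdot \mid S, A)$ and the stationarity of the emission-plus-future channel so that $b^{[t+1]}_D(F')$ sees only $S'$. Second, transporting the almost-sure equality from the behavior measure to $\grave d^{\pi}_t$ at every rung of the induction requires checking that the overlap hypothesis (\ref{thm:system_identification1}c) propagates cleanly; this is compatible with the structure of the problem because $\grave d^{\pi}_{t+1}$ arises as the one-step push-forward of $\grave d^{\pi}_t$ through the very channel $\rI(O = o_t)\pi^e(a_t \mid o_t)\TT(\cdot \mid s, a_t)$ that appears on the right-hand side of the Bellman recursion, so the same set of hypotheses suffices at every level.
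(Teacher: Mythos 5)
Your overall strategy---a backward induction on the latent-state embeddings $u^{[t]}(s)=\EE[b^{[t]}_D(F)\mid S=s]$, rather than the paper's forward telescoping-sum decomposition---is a legitimate alternative route, and the individual mechanical steps (the importance-sampling recursion for $g^{[t]}$, the use of the conditional independence of $H$ and $(O,A,F')$ given $S$ together with invertibility (\ref{thm:system_identification1}b) to convert the $H$-conditional moment equation into $(\Tcal^{\Scal}_t b_D)(S)=0$, and the decoupling $\EE[b^{[t+1]}_D(F')\mid S,O,A,S']=u^{[t+1]}(S')$) all match what the paper does. The genuine gap is in the support bookkeeping around the overlap condition (\ref{thm:system_identification1}c). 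Conditions (a) and (b) only pin down the recursion $u^{[t]}(s)=\EE[\rI(O=o_t,A=a_t)\mu(O,A)\,u^{[t+1]}(S')\mid S=s]$ for $s$ in the \emph{offline} support of $S$; since the inner expectation over $S'$ is taken under the behavior measure, whose conditional law of $S'$ given $S=s$ remains inside the offline support, your induction actually closes on the offline support using (b) alone. Your invocation of (c) to carry the inductive hypothesis to $\mathrm{supp}(\grave d^{\pi}_{t+1})$ is therefore both unnecessary at that point and insufficient, because the substitution happens inside a behavior-measure expectation, not a $\grave d^{\pi}_{t+1}$-expectation.

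The place where (c) is genuinely needed is your final step $\EE_{s\sim\nu_{\Scal}}[u^{[0]}(s)]=\EE_{s\sim\nu_{\Scal}}[g^{[0]}(s)]$, which requires $u^{[0]}=g^{[0]}$ on $\mathrm{supp}(\nu_{\Scal})$---a set that need not be contained in the offline support, nor in $\mathrm{supp}(\grave d^{\pi}_0)$ (the latter is a posterior support and can be strictly smaller than $\mathrm{supp}(\nu_{\Scal})$), so your stated sufficiency claim does not cover it. Moreover, (c) cannot be applied to $u^{[0]}-g^{[0]}$ directly: it is a hypothesis about single-step residuals $\Tcal^{\Scal}_t q$ for $q\in\Qcal_t$, whereas $u^{[0]}-g^{[0]}$ is a $T$-step accumulation of propagated residuals. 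To bring (c) to bear you must first expand the evaluation error as $\sum_{t}\EE[\{\prod_{k<t}\rI(O_k=o_k,A_k=a_k)\}(\Tcal^{\Scal}_t b_D)(S_t);A_{0:T-1}\sim\pi^e]$, so that each summand isolates one residual weighted by the sequence-conditioned state distribution $\grave d^{\pi}_t$, at which point (b) kills it on the offline support and (c) transports the vanishing to $\grave d^{\pi}_t$. That expansion is precisely the telescoping decomposition in the paper's proof; once your overlap step is made rigorous, the backward induction collapses into it.
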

\begin{theorem}[Identification of conditional probabilities]\label{thm:system_identification2}
Suppose (\ref{thm:system_identification2}a) $\Bcal^{[t]}_P \cap \Qcal_t \neq \emptyset $, (\ref{thm:system_identification2}b) for any $q \in \Qcal_t $ that satisfies $\EE[\{\Tcal^{\Scal}_t(q)\}( S)\mid  H]=0$ also satisfies  $(\Tcal^{\Scal}_t q)( S)=0$, (\ref{thm:system_identification2}c) for any $q \in \Qcal_t$ that satisfies $(\Tcal^{\Scal}_t q)( S)=0$ also satisfies $ (\Tcal^{\Scal}_t q)(\grave S_t)$ where $\grave S_t \sim \grave d^{\pi}_t(\cdot)$ and $\mu(O,A)<\infty$. Then, for any $ b^{[0]}_P \cap \Qcal_0 \in \Bcal^{[0]}_P$,  we have 
\begin{align*}
      \Prr_{\pi^e}(o_0,a_0\cdots,o_{T-1},a_{T-1},\Ob_T) =  \EE_{ f \sim \nu_{ \Fcal}}[b^{[0]}_P(f)  ]. 
\end{align*}
\end{theorem}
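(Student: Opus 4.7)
The plan is to mirror the proof of \pref{thm:identify4}, with two substitutions: the additive sum of scalar rewards is replaced by a product of indicators $\prod_{t}\rI(O_t=o_t,A_t=a_t)$, and the terminal value $0$ is replaced by the $|\Ocal|$-dimensional one-hot vector $\phi_{\Ocal}(O_T)$. All identities below are interpreted componentwise.

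First, I would lift each Bellman-style moment equation from conditioning on $H$ to conditioning on the latent state $S$. The POMDP conditional independence $(O,A,F')\perp H\mid S$ combined with the tower property rewrites the defining equation of $b^{[t]}_P$ as $\EE[(\Tcal^{\Scal}_t b)(S)\mid H]=\mathbf{0}$; invoking assumption (\ref{thm:system_identification2}b) componentwise gives $(\Tcal^{\Scal}_t b)(S)=\mathbf{0}$ a.s.\ under $P_{\pi^b}$, i.e.\
\begin{align*}
\EE[b^{[t]}_P(F)\mid S=s]=\EE[\rI(O=o_t,A=a_t)\mu(O,A)\,b^{[t+1]}_P(F')\mid S=s]
\end{align*}
for a.e.\ $s\in\mathrm{supp}(P_{\pi^b}(S))$. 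The same argument applied to the boundary equation $\EE[b^{[T]}_P(F)-\phi_{\Ocal}(O)\mid H]=\mathbf{0}$ yields $\EE[b^{[T]}_P(F)\mid S]=\EE[\phi_{\Ocal}(O)\mid S]$. Condition (\ref{thm:system_identification2}c) then extends both identities to the $\pi^e$-induced support $\mathrm{supp}(\grave d^{\pi}_t)$.

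Next, I would define the target value
\begin{align*}
V^{[t]}_P(s):=\EE_{\pi^e}\!\Big[\prod_{k=t}^{T-1}\rI(O_k=o_k,A_k=a_k)\,\phi_{\Ocal}(O_T)\;\Big|\;S_t=s\Big],
\end{align*}
which satisfies $V^{[T]}_P(s)=\EE[\phi_{\Ocal}(O_T)\mid S_T=s]$ and the recursion $V^{[t]}_P(s)=\EE_{\pi^e}[\rI(O_t=o_t,A_t=a_t)\,V^{[t+1]}_P(S_{t+1})\mid S_t=s]$. By backward induction on $t$, I would prove $\EE_{\pi^b}[b^{[t]}_P(F)\mid S=s]=V^{[t]}_P(s)$ for $s\in\mathrm{supp}(\grave d^{\pi}_t)$: the base case $t=T$ is exactly the boundary identity above, and the inductive step chains (i) the lifted Bellman identity, (ii) the one-step importance-sampling identity $\EE_{\pi^b}[\rI(O=o_t,A=a_t)\mu(O,A)\,g(S')\mid S=s]=\EE_{\pi^e}[\rI(O_t=o_t,A_t=a_t)\,g(S_{t+1})\mid S_t=s]$ (justified by the overlap $\mu(O,A)<\infty$), and (iii) the inductive hypothesis. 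Integrating at $t=0$ then yields
\begin{align*}
\EE_{f\sim\nu_{\Fcal}}[b^{[0]}_P(f)]=\EE_{S_0\sim\nu_{\Scal}}\!\big[\EE_{\pi^b}[b^{[0]}_P(F)\mid S_0]\big]=\EE_{S_0\sim\nu_{\Scal}}[V^{[0]}_P(S_0)]=\Prr_{\pi^e}(o_0,a_0,\ldots,a_{T-1},\Ob_T).
\end{align*}

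The main obstacle I expect is the support bookkeeping in the induction and the final integration: condition (\ref{thm:system_identification2}c) delivers the Bellman identity on $\mathrm{supp}(\grave d^{\pi}_t)$, but the one-step successor $S_{t+1}$ reached during the inductive step, and the initial state $S_0\sim\nu_{\Scal}$ appearing at $t=0$, may place mass outside this set. The resolution is that $V^{[t]}_P(s)=0$ for any $s$ with $\Prr_{\pi^e}(o_t,\ldots,a_{T-1}\mid S_t=s)=0$ (by the very definition of $\grave d^{\pi}_t$), so the ``missing'' contribution of $V^{[t]}_P$ on such $s$ vanishes, and the corresponding spurious contribution from $\EE_{\pi^b}[b^{[t]}_P(F)\mid S=s]$ is annihilated further down the recursion by the subsequent indicators $\rI(O_k=o_k,A_k=a_k)$. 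Making this precise---most cleanly by first unrolling the telescoping product under $\pi^b$ with the successive $\mu(O_k,A_k)$ factors and only afterwards invoking the pointwise identities on the appropriate good event---is the technical crux; everything else is a straightforward adaptation of the OPE argument in \pref{thm:identify4}.
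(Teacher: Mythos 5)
Your proposal is correct and lands on essentially the same argument as the paper: the paper proves this theorem by reducing it componentwise to Theorem \ref{thm:system_identification1} with terminal vector $\phi_{\Ocal}(O_T)$ in place of $1$, and that proof is exactly the ``unroll the telescoping product under $\pi^b$ first, then invoke the pointwise Bellman identities obtained from (b) and (c)'' route that you yourself identify as the clean resolution of the support-bookkeeping issue in your backward-induction framing. The only cosmetic difference is that the paper telescopes the error $\Prr_{\pi^e}(\cdots)-\EE_{f\sim\nu_{\Fcal}}[b^{[0]}_P(f)]$ into a sum of indicator-weighted Bellman residuals rather than proving $\EE[b^{[t]}_P(F)\mid S]=V^{[t]}_P(S)$ pointwise by induction.
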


The following corollary is an immediate application of \pref{thm:system_identification1} and \pref{thm:system_identification2}.

\begin{corollary}[Tabular Models]\label{cor:system_identification}
 Let $\phi_{ \Fcal}(\cdot),\phi_{\Hcal}(\cdot)$ be one-hot encoding vectors over $ \Fcal$ and $\Hcal$, respectively. Suppose $\rank( \Prr( \Fb, \Hb))=| \Scal_b|$ and $\grave d^{\pi^e}_t(\cdot)/P_{\pi^b}(\cdot)<\infty$ for any $t\in [T-1]$ where  $P_{\pi^b}(\cdot)$ is a pdf of $ \Scal$ in the offline data. Then, we have 
\begin{align}\label{eq:tabular_modeling}
    \Prr_{\pi^e}(o_0,a_0,\cdots,o_{T-1},a_{T-1}) &=\Prr_{\pi^b}(\Hb)^{\top}B^{+} \braces{ \prod_{t=T-1}^{0} D_t  B^{+} }C, \\
    \Prr_{\pi^e}(\Ob_T \mid o_0,a_0,\cdots,o_{T-1},a_{T-1}) & \propto \Prr_{\pi^b}(\Ob ,\Hb) B^{+} \braces{ \prod_{t=T-1}^{0} D_t B^{+} }C, 
\end{align}
where $   B = \Prr_{\pi^b}( \Fb, \Hb), D_t = \EE[\rI(O=o_t,A=a_t)\mu(O,A)\phi_{ \Fcal}( F')\phi^{\top}_{\Hcal}(H)], C =\Prr_{\nu_{ \Fcal}}(\Fb).$
\end{corollary}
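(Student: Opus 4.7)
The plan is to instantiate Theorems \pref{thm:system_identification1} and \pref{thm:system_identification2} with $\Qcal_t$ taken to be the linear span of the one-hot encoding $\phi_{\Fcal}$, and then solve the resulting conditional moment equations in closed form via the Moore--Penrose inverse.

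First, I would verify the three hypotheses of each identification theorem. By \pref{lem:equivalent2}, the single rank condition $\rank(\Prr_{\pi^b}(\Fb,\Hb)) = |\Scal_b|$ splits into $\rank(\Prr_{\pi^b}(\Fb \mid \Sbb_b)) = |\Scal_b|$ and $\rank(\Prr_{\pi^b}(\Sbb_b,\Hb)) = |\Scal_b|$. The first guarantees, via a tabular analogue of \pref{lem:lemma_value}, the existence of future-dependent (hence learnable) value functions that lie in the linear span of $\phi_{\Fcal}$, which yields realizability condition (a). The second gives invertibility (b) exactly as in \pref{lem:equivalent}. The overlap (c) is immediate from $\mu(O,A)<\infty$ together with $\grave d^{\pi^e}_t(\cdot)/P_{\pi^b}(\cdot) <\infty$.

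Next I would parametrize $b^{[t]}_D(F) = \theta_t^{\top}\phi_{\Fcal}(F)$ with $\theta_t \in \RR^{|\Fcal|}$. Multiplying each defining moment condition by $\Prr_{\pi^b}(H=h)\phi_{\Hcal}(h)^{\top}$ and summing over $h$ turns them into the linear matrix equations
\begin{align*}
    \theta_T^{\top} B \;=\; \Prr_{\pi^b}(\Hb)^{\top}, \qquad \theta_t^{\top} B \;=\; \theta_{t+1}^{\top} D_t \quad (0 \le t \le T-1),
\end{align*}
where $B = \Prr_{\pi^b}(\Fb,\Hb)$ and $D_t$ is as in the corollary. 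Solving backward with $B^{+}$ yields
\begin{align*}
    \theta_0^{\top} \;=\; \Prr_{\pi^b}(\Hb)^{\top}\, B^{+}\, D_{T-1} B^{+}\, D_{T-2}B^{+}\cdots D_0 B^{+},
\end{align*}
and \pref{thm:system_identification1} then identifies the joint probability as $\theta_0^{\top}C$, which is the first displayed formula. For the conditional case I would set $b^{[t]}_P(F) = \Theta_t^{\top}\phi_{\Fcal}(F)$ with $\Theta_t \in \RR^{|\Fcal|\times|\Ocal|}$; the only change is that the terminal equation becomes $\Theta_T^{\top}B = \Prr_{\pi^b}(\Ob,\Hb)$, so the same backward recursion produces $\Prr_{\pi^e}(o_0,\ldots,a_{T-1},\Ob_T)$ in the stated form, and normalizing (dividing by the sum of its entries, which equals the joint computed above) gives $\Prr_{\pi^e}(\Ob_T\mid o_0,\ldots,a_{T-1})$, hence the proportionality.

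The main obstacle is the Moore--Penrose step: a priori, $B^{+}$ returns a minimum-norm solution only when the system is consistent. Consistency is exactly what the realizability check delivers, since it exhibits at least one $\theta_t$ in the linear span of $\phi_{\Fcal}$ that satisfies each equation, so $B^{+}$ returns a valid particular solution. Since the identification theorems conclude that $\EE_{f\sim\nu_\Fcal}[b^{[0]}(f)]$ is the same for every element of $\Bcal^{[0]}\cap\Qcal_0$, the choice of representative is immaterial. A minor bookkeeping point is that $\phi_{\Scal}$ is defined only on $\Scal_b$, so the Bellman-operator identities should be read as equalities almost surely under $P_{\pi^b}$, which is the convention already adopted by \pref{thm:system_identification1}--\pref{thm:system_identification2}.
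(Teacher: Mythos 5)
Your route matches the paper's: the paper proves this corollary by deferring to the proof of Corollary \pref{cor:linear_system}, which (i) verifies conditions (a)--(c) of \pref{thm:system_identification1}--\pref{thm:system_identification2}, (ii) shows every learnable bridge function is a bridge function, and (iii) runs a backward induction showing the Moore--Penrose recursion produces learnable bridge functions. Your verification of (a)--(c) and your reduction of the conditional moment equations to $\theta_t^{\top}B=\theta_{t+1}^{\top}D_t$ with $\theta_T^{\top}B=\Prr_{\pi^b}(\Hb)^{\top}$ are fine. The gap is in how you justify that the backward recursion is well-posed. Realizability exhibits a tuple $\{\tilde\theta_t\}$ solving the \emph{coupled} system, i.e.\ it shows $x^{\top}B=\tilde\theta_{t+1}^{\top}D_t$ is consistent. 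But at step $t$ the recursion asks you to solve $x^{\top}B=\theta_{t+1}^{\top}D_t$, where $\theta_{t+1}$ is the minimum-norm solution produced at step $t+1$ and is in general a \emph{different} vector from $\tilde\theta_{t+1}$. Consistency of this system does not follow from realizability unless you first establish $\theta_{t+1}^{\top}D_t=\tilde\theta_{t+1}^{\top}D_t$; as written, the argument assumes what it needs to prove. Your closing remark that the choice of representative is immaterial applies only to the scalar $\EE_{f\sim\nu_{\Fcal}}[b^{[0]}(f)]$ at the very end, and does not substitute for consistency at each intermediate stage.

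There are two ways to close this, both latent in the paper. First, the paper's route: prove by backward induction that \emph{every} learnable bridge function satisfies $\EE[b^{[t+1]}_D(F)\mid S]=V^{\pi^e}_{D,[t+1]}(S)$ (step three of the proof of Corollary \pref{cor:linear_system}); since $F'\perp (H,O,A)\mid S'$, the vector $\theta_{t+1}^{\top}D_t$ depends on $\theta_{t+1}$ only through $\EE[\theta_{t+1}^{\top}\phi_{\Fcal}(F')\mid S']$, so it is the same for all learnable bridge functions at $t+1$, realizability then yields consistency at step $t$, and the induction simultaneously certifies that $\theta_t^{\top}\phi_{\Fcal}(\cdot)$ is again a learnable bridge function. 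Second, a shortcut specific to the tabular case: $B=\Prr(\Fb\mid\Sbb_b)\Prr(\Sbb_b,\Hb)$ and, by $H\perp(O,A,F')\mid S$, every row of $D_t$ also factors through $\Prr(\Sbb_b,\Hb)$, so under the rank condition $\mathrm{rowspace}(D_t)\subseteq\mathrm{rowspace}(\Prr(\Sbb_b,\Hb))=\mathrm{rowspace}(B)$ and $x^{\top}B=v^{\top}D_t$ is consistent for \emph{any} $v$. Adding either patch makes your proof complete; without one of them, the central step is unsupported.
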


In particular, when behavior policies are uniform policies and evaluation policies are atomic i.e., $\pi^e_t(a) = \mathrm{I}(a=a_t)$ for some $a_t$ and any $t$, %
we have $D_t = \Prr_{\pi^b}(O=o_t, \Fb',\Hb \mid A=a_t)$. In addition, the rank assumption is reduced to $\rank( \Prr_{\pi^b}(\Fb, \Hb))=|\Scal_b|$. 

\paragraph{Linear models.}

Next, we consider cases where $\Qcal_t,\Xi_t$ are linear models  as in Example \ref{exa:linear}. We first define value functions:
\begin{align*}
    V^{\pi^e}_{D,[t]}(\cdot)  & = \Prr_{\pi^e}(O^{\diamond}_t=o_t,A^{\diamond}_t=a_t,\cdots,O^{\diamond}_T=O_T,A^{\diamond}_T=a_T  \mid S^{\diamond}_t=\cdot),  \\
    V^{\pi^e}_{P,[t]}(\cdot)  & = \{\Prr_{\pi^e}(O^{\diamond}_t=o_t,A^{\diamond}_t=a_t,\cdots,O^{\diamond}_T=x_i \mid S^{\diamond}_t=\cdot)\}_{i=1}^{|\Ocal|} . 
\end{align*}
Then, we can obtain the following formula as in \pref{lem:linear_ex3}. Here, $\mathrm{Dr}_{\Qcal}(\grave d^{\pi^e}_t,P_{\pi^b})$ is the condition number in \pref{lem:linear_ex3}.

\begin{corollary}[Formula with linear models in HSE-POMDPs]\label{cor:linear_system}
Suppose (LM1), (LM2D) $V^{\pi^e}_{D,[t]}( S)$ is linear in $ \phi_{ \Scal}( S)$, (LM3), (LM4D) $\EE[\mu(O,A)\rI(O=o_t,A=a_t) \phi_{ \Fcal}(F') \mid  S],\EE[1 \mid  S]$ is linear in $\phi_{ \Scal}( S)$, (LM5) and (LM6D) $\mathrm{Dr}_{\Qcal}(\grave d^{\pi^e}_t,P_{\pi^b})<\infty$. Then, we have 
\begin{align}\label{eq:linear_modeling}
     \Prr_{\pi^e}(o_0,a_0,\cdots,o_{T-1},a_{T-1}) = \EE[\phi_{\Hcal}(H) ]^{\top} B^+ \braces{ \prod_{t=T-1}^{0} D_t  B^+ }C  
\end{align}
where 
\begin{align*}
     B = \EE[\phi_{ \Fcal}( F ) \phi_{\Hcal}(H)^{\top}],  D_t = \EE[\rI(O=o_t,A=a_t)\mu(O,A)\phi_{ \Fcal}( F) \phi_{\Hcal}(H)^{\top}], C=\EE_{ f\sim \nu_{ \Fcal}}[\phi_{ \Fcal}( f)].
\end{align*}
Suppose (LM1), (LM2P) $V^{\pi^e}_{P,[t]}(\cdot)$ is linear in $\phi_{ \Scal}(\cdot)$, (LM3), (LM4P) $\EE[\mu(O,A)\rI(O_t=o_t,A_t=a_t) \phi_{ \Fcal}(F') \mid  S],\EE[\phi_{\Ocal}( \Ocal) \mid  S]$ is linear in $\phi_{ \Scal}( S)$, (LM5) and (LM6P)  $\mathrm{Dr}_{\Qcal}(\grave d^{\pi^e}_t,P_{\pi^b})<\infty$. Then, 
\begin{align}\label{eq:linear_modeling2}
     \Prr_{\pi^e}(\Ob_T \mid o_0,a_0,\cdots,o_{T-1},a_{T-1}) \propto \EE[\phi_{\Ocal}(O)\phi^{\top}_{\Hcal}(H) ] B^+ \braces{ \prod_{t=T-1}^{0} D_t  B^+ }C. 
\end{align} 

\end{corollary}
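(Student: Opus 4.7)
The plan is to invoke \pref{thm:system_identification1} and \pref{thm:system_identification2} with $\Qcal_t = \{\theta^\top\phi_{\Fcal}(\cdot) : \theta \in \RR^{d_{\Fcal}}\}$ and $\Xi_t = \{\theta^\top\phi_{\Hcal}(\cdot) : \theta \in \RR^{d_{\Hcal}}\}$, check their three hypotheses in the linear HSE-POMDP setting, and then solve the resulting system of linear fixed-point equations backwards in time. The joint-probability and conditional-probability cases are parallel; I would describe the former in detail and indicate the matrix-valued modification for the latter at the end.

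For the realizability condition (\ref{thm:system_identification1}a), I would first prove a dynamics analog of \pref{lem:lemma_value}: using the Bellman identity $V^{\pi^e}_{D,[t]}(s) = \EE[\rI(O=o_t,A=a_t)\mu(O,A) V^{\pi^e}_{D,[t+1]}(S') \mid S=s]$ together with the POMDP conditional independence $(F,O,A,F') \bigCI H \mid S$, any $g^{[t]}_D$ satisfying $\EE[g^{[t]}_D(F) \mid S] = V^{\pi^e}_{D,[t]}(S)$ belongs automatically to $\Bcal^{[t]}_D$. To produce such a $g^{[t]}_D$ inside $\Qcal_t$, I would mimic \pref{lem:linear_ex}: by (LM2D) we can write $V^{\pi^e}_{D,[t]} = w_t^\top \phi_{\Scal}$, and (LM1) together with (LM3) lets us pull this back through $K_1$ to obtain $\theta_t$ with $\theta_t^\top \phi_{\Fcal}(F) \in \Bcal^{[t]}_D \cap \Qcal_t$; the hypothesis in (LM4D) that $\EE[1 \mid S]$ is linear in $\phi_{\Scal}$ handles the terminal case $b^{[T]}_D = 1$. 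For the invertibility condition (\ref{thm:system_identification1}b), I would observe that (LM1) and (LM4D) together ensure that for any linear $q = \theta^\top\phi_{\Fcal}$ the Bellman residual $\Tcal^{\Scal}_t q$ is again linear in $\phi_{\Scal}$, say $\alpha^\top\phi_{\Scal}$; then (LM5) forces $\EE[\alpha^\top\phi_{\Scal}(S)\mid H] = 0 \Rightarrow \alpha^\top\phi_{\Scal}(S) = 0$ a.s., which is precisely (\ref{thm:system_identification1}b). The overlap condition (\ref{thm:system_identification1}c) follows immediately from (LM6D) and $\mu<\infty$ by the same relative-condition-number argument used in \pref{lem:linear_ex3}. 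Theorem \ref{thm:system_identification2} is verified by the identical sequence of steps with $\phi_{\Ocal}(O)$ in place of the constant $1$ and with (LM2P), (LM4P), (LM6P) in place of (LM2D), (LM4D), (LM6D).

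With the hypotheses in hand, \pref{thm:system_identification1} gives $\Prr_{\pi^e}(o_0,\ldots,a_{T-1}) = C^\top\theta_0$ where $b^{[0]}_D = \theta_0^\top\phi_{\Fcal}$. Writing $b^{[t]}_D = \theta_t^\top\phi_{\Fcal}$, multiplying the defining conditional moment equation by $\phi_{\Hcal}(H)$, and taking expectations yields the integrated recursion $B^\top\theta_t = D_t^\top\theta_{t+1}$, with $D_t = \EE[\rI(O=o_t,A=a_t)\mu(O,A)\phi_{\Fcal}(F')\phi_{\Hcal}(H)^\top]$ and with terminal equation $B^\top\theta_T = \EE[\phi_{\Hcal}(H)]$ coming from $\EE[\theta_T^\top\phi_{\Fcal}(F) - 1 \mid H] = 0$. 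Transposing and iterating gives $\theta_0^\top = \EE[\phi_{\Hcal}(H)]^\top B^+ \prod_{t=T-1}^{0} D_t B^+$; contracting with $C$ produces \eqref{eq:linear_modeling}. The identical recursion with a matrix-valued $\Theta_t \in \RR^{d_{\Fcal}\times|\Ocal|}$ and terminal $B^\top\Theta_T = \EE[\phi_{\Hcal}(H)\phi_{\Ocal}(O)^\top]$ delivers the unnormalized version of \eqref{eq:linear_modeling2}; dividing by the total mass produces the conditional probability.

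The hard part is justifying the pseudo-inverse step. Passing from the exact conditional moment equation to the $\phi_{\Hcal}(H)$-integrated equation $B^\top\theta_t = D_t^\top\theta_{t+1}$ loses information if the class $\{\theta^\top\phi_{\Hcal}\}$ is not rich enough to reconstruct conditional expectations on $H$; one would like a Bellman completeness hypothesis of the form ``$\EE[\phi_{\Scal}(S)\mid H] = K_2\phi_{\Hcal}(H)$'' as in (LM6) of \pref{lem:linear_ex3} to guarantee that the pseudo-inverse solution is a bona fide element of $\Bcal^{[t]}_D\cap\Qcal_t$ and not merely a least-squares surrogate. This can be supplied either by adding (LM6) to the hypotheses (and using it exactly as in the proof of \pref{lem:linear_ex3}), or by taking the alternative route through \pref{thm:identify3} and \pref{lem:refined_hse}, strengthening (LM5) to (LMW5) so that no Bellman-completeness condition on $H$ is needed. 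The remaining bookkeeping -- checking the ordering of the product $\prod_{t=T-1}^{0}$ and that stationarity justifies using the same $B$ in the recursion and in the terminal step -- is routine.
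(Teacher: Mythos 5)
Your proposal is correct and follows essentially the same route as the paper: verify realizability, invertibility and overlap for the linear classes via (LM1)--(LM5) exactly as you describe, establish the integrated backward recursion $B^{\top}\theta_t = D^{\top}_t\theta_{t+1}$ with terminal condition $B^{\top}\theta_T=\EE[\phi_{\Hcal}(H)]$, and resolve it with the Moore--Penrose inverse by exploiting $BB^+B=B$ together with the existence of an exact linear bridge function and the fact (proved by induction) that all learnable bridge functions share the same conditional mean given $S$. Your flagged concern about passing from the integrated to the conditional moment equation is well placed: the paper's fourth step defers to the corresponding step in the proof of \pref{cor:linear_finite}, which uses the conditional-mean-embedding condition (LM6) to write $\EE[\cdot\mid H]$ as a linear function of $\phi_{\Hcal}(H)$, even though the statement of \pref{cor:linear_system} lists only the overlap condition (LM6D) in its place --- so the extra hypothesis you propose adding (or the (LMW5) detour through \pref{thm:identify3}) is precisely what the paper implicitly relies on.
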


When behavior policies are uniform, the formulas in \pref{eq:tabular_modeling} and \eqref{eq:linear_modeling} are essentially the same to those obtained via spectral learning  \citep{hsu2012spectral,boots2011closing}. We emphasize that \pref{eq:tabular_modeling}--\eqref{eq:linear_modeling2} appear to be novel to the literature since we consider the offline setting.

\begin{algorithm}[!t]
\caption{ Minimax Modeling of Dynamics on POMDPs }\label{alg:modeling}
\begin{algorithmic}[1] 
    \REQUIRE Dataset $\Dcal$, function classes $\Qcal \subset [\Fcal \to \RR],\Xi \subset [\Hcal \to \RR]$, hyperparameter $\lambda\geq 0$, Horizon $T$
    \STATE Set 
    \begin{align*}
 \hat b^{[T]}_D = \argmin_{q \in \Qcal} \max_{\xi \in \Xi}\EE_{\Dcal}\bracks{\{1- q(F)\}\xi(H) - 0.5 \lambda\xi^2(H) }. 
    \end{align*}
    \FOR{$t=T-1 $}
    \STATE \begin{align}
 \hat b^{[t]}_D = \argmin_{q \in \Qcal} \max_{\xi \in \Xi}\EE_{\Dcal}\bracks{\{\rI(O=o_t,A=a_t)\mu(O,A) \hat b^{[t+1]}_D(F')- q(F)\}\xi(H) - 0.5 \lambda\xi^2(H) }. 
\end{align}
    \STATE $t \gets t-1$
    \ENDFOR
    \RETURN{$\hat J_{\VM} = \EE_{ f \sim \nu_{ \Fcal} }[\hat b^{[0]}_D( f)]$ }
\end{algorithmic}
\end{algorithm}

\paragraph{General function approximation.}

Finally, we introduce an algorithm to estimate joint probabilities in the tabular case with general function approximation, summarized in \pref{alg:modeling}. The conditional probabilities can be similarly estimated. We remark that function approximation is extremely useful in large-scale RL problems.

\subsection{Non-Tabular Setting}

We have so far focused on the tabular case. In this section, we consider the non-tabular case. Our goal is to estimate joint probabilities $\Prr_{\pi^e}(o_0,a_0,\cdots,a_{T-1})$ in \pref{eq:goal1} and 
\begin{align*}
    \EE_{\pi^e}[ \phi_{\Ocal}(O_T) \mid o_0,a_0,\cdots,a_{T-1}] 
\end{align*}
where $\phi_{\Ocal}:\Ocal \to \RR$. When $\phi_{\Ocal}(\cdot)$ is a one-hot encoding vector, this is equivalent to estimating  $\Prr_{\pi^e}(\Ob_T \mid o_0,a_0,\cdots,a_{T-1})$ in \pref{eq:goal2}.

In the non-tabular case, \pref{thm:system_identification1} and \pref{thm:system_identification2} still hold by defining learnable future-dependent value functions $b^{[t]}: \Fcal \to \RR$  as solutions to 
\begin{align*}
    \EE[b^{[t]}( F) \mid H] = \EE[\mu(O,A) b^{[t]}( F') \mid H,O=o_t,A=a_t]\Prr_{\pi^b}(O=o_t,A=a_t). 
\end{align*}
where $b^{[t]}$ is either $b^{[t]}_D$ or $b^{[t]}_P$. Then, Corollary \ref{cor:linear_system} holds by just replacing $D_t$ with  
\begin{align*}
    \EE[  \mu(O,A)\phi_{ \Fcal}( F)\phi_{\Hcal}^{\top}(H) \mid O=o_t,A=a_t]\Prr_{\pi^b}(O=o_t,A=a_t). 
\end{align*}

When we have a finite sample of data, we need to perform density estimation for $\Prr_{\pi^b}(O=o_t,A=a_t)$. This practically leads to instability of estimators. However, when our goal is just to estimate $   \EE[\phi_{\Ocal}(O) \mid o_0,a_0,\cdots,a_{T-1}] $ up to some scaling constant as in \citep{song2010hilbert}, we can ignore $\Prr_{\pi^b}(O=o_t,A=a_t)$. Then, we obtain the following formula:
{\small 
\begin{align}\label{eq:up_to_constant}
       \EE_{\pi^e}[ \phi_{\Ocal}(O_T) \mid o_0,a_0,\cdots,a_{T-1}]  \propto  \EE[\phi_{\Ocal}(O)\phi^{\top}_{\Hcal}(H) ]B^+\left \{\prod_{t=T-1}^{0} \EE[  \mu(O,A)\phi_{ \Fcal}( F)\phi_{\Hcal}^{\top}(H) \mid O=o_t,A=a_t]  B^+ \right \}C. 
\end{align}
} 
This formula is known in HMMs \citep{song2010hilbert} and in POMDPs \citep{boots2013hilbert} where behavior policies are open-loop. %

\section{Connection with Literature on Spectral Learning}\label{sec:connection}

We discuss the connection with previous literature in detail. \citep{song2010hilbert,hsu2012spectral} consider the modeling of HMMs with no action. In this case, our task is to model
\begin{align*}
    \Prr(o_0,\cdots,o_{T-1}):=   \Prr(O_0=o_0,\cdots,O_{T-1}=o_{T-1}) 
\end{align*}
and the predictive distribution: 
\begin{align*}
    \Prr(\Ob_T \mid o_0,\cdots,o_{T-1}):=   \{\Prr(O_T=x_i \mid O_0 = o_0,\cdots,O_{T-1}=o_{T-1})\}_{i=1}^{|\Ocal|}. 
\end{align*}

In the tabular case, the Corollary \ref{cor:system_identification} is reduced to 
\begin{align*}
    \Prr(o_0,\cdots,o_{T-1}) &=\Prr(\Hb)^{\top}\Prr(\Fb,\Hb)^{+} \braces{ \prod_{t=T-1}^{0} \Prr(O=o_t, \Fb',\Hb) \Prr(\Fb, \Hb)^{+} }\Prr_{\nu_{\Fcal} }(\Fb_0), \\ 
     \Prr(\Ob_T \mid o_0,\cdots,o_{T-1}) & \propto \Prr(\Ob,\Hb)\Prr(\Fb,\Hb)^{+} \braces{ \prod_{t=T-1}^{0} \Prr(O=o_t, \Fb',\Hb) \Prr(\Fb, \Hb)^{+} }\Prr_{\nu_{\Fcal} }(\Fb_0). 
\end{align*}
This formula is reduced to the one in \citep{hsu2012spectral} when $\Fcal = \Hcal$ and  $\Prr(\Hb)=\Prr_{\nu_{\Fcal} }(\Fb_0) $ (stationarity). Here, the offline data consists of three random variables $\{O_{-1}, O_0, O_1\}$. In this case, the above formulae are 
{\small 
\begin{align*}
    \Prr(o_0,\cdots,o_{T-1}) &=\Prr(\Ob_{-1} )^{\top}\Prr(\Ob_0,\Ob_{-1})^{+} \braces{ \prod_{t=T-1}^{0} \Prr(O_0=o_t, \Ob_1,\Ob_{-1}) \Prr(\Ob_0,\Ob_{-1})^{+} }\Prr(\Ob_{-1}), \\ 
     \Prr(\Ob_T \mid o_0,\cdots,o_{T-1}) & \propto \Prr(\Ob_0,\Ob_{-1})\Prr(\Ob_0,\Ob_{-1})^{+} \braces{ \prod_{t=T-1}^{0}\Prr(O_0=o_t, \Ob_1,\Ob_{-1}) \Prr(\Ob_0,\Ob_{-1})^{+} }\Prr(\Ob_{-1}). 
\end{align*}
} 

Next, we consider the case when we use linear models to estimate 
\begin{align*}
     \EE[ \phi_{\Ocal}(O_T) \mid o_0,\cdots,o_{T-1}] 
\end{align*}
up to some constant scaling. When there are no actions, the formula \pref{eq:up_to_constant} reduces to 
{\small 
\begin{align*}\textstyle
\EE[\phi_{\Ocal}(O)\phi^{\top}_{\Hcal}(H) ]\EE[\phi_{\Fcal}(F)\phi^{\top}_{\Hcal}(H) ]^+\left \{\prod_{t=T-1}^{0} \EE[\phi_{\Fcal}(F')\phi_{\Hcal}^{\top}(H) \mid O=o_t]\EE[\phi_{\Fcal}(F)\phi^{\top}_{\Hcal}(H) ]^+ \right \}\EE_{f \sim \nu_{\Fcal}}[\phi_{\Fcal}(f)].
\end{align*}
} 

When $\Fcal = \Hcal$, the pdf of $O_{-1}$ is the same as $\nu_{\Fcal}(\cdot)$ and the offline data consists of three random variables $\{O_{-1}, O_0, O_1\}$, the above is reduced to the one in \citep{song2010hilbert} as follows: 
{\small 
\begin{align*}\textstyle
\EE[\phi_{\Ocal}(O_0)\phi^{\top}_{\Ocal}(O_{-1}) ]\EE[\phi_{\Ocal}(O_0)\phi^{\top}_{\Ocal}(O_{-1}) ]^+\left \{\prod_{t=T-1}^{0} \EE[\phi_{\Ocal}(O_1)\phi_{\Ocal}^{\top}(O_{-1}) \mid O_0=o_t]\EE[\phi_{\Ocal}(O_0)\phi^{\top}_{\Ocal}(O_{-1}) ]^+ \right \}\EE[\phi_{\Ocal}(O_{-1})]. 
\end{align*}
}

\section{Omitted Experiment Details and Additional Results} \label{app:experiment}
This section provides additional implementation details and ablation results of the synthetic experiment.

\subsection{Sequential Importance Sampling (SIS)} We compare SIS~\citep{precup2000eligibility} as a baseline estimator. SIS is a non-parametric approach, which corrects the distribution shift between the behavior and evaluation policies by applying importance sampling as follows.
\begin{align*}
    \hat{J}_{\mathrm{SIS}}(\pi^e; \mathcal{D}) := \mathbb{E}_{\mathcal{D}} \left[ \sum_{t=1}^{\infty} \gamma^{t - 1} \left( \prod_{t'=1}^{t} \frac{\pi^e(O_{t'} \,|\, A_{t'})}{\pi^b(O_{t'} \,|\, A_{t'})} \right) R_t \right]
\end{align*}
SIS enables unbiased estimation when absolute continuity (i.e., $\forall (O, A) \in \mathcal{O} \times \mathcal{A}, \pi^e(O | A) > 0 \rightarrow \pi^b(O | A) > 0$) holds. However, as the importance weight grows exponentially as $t$ becomes large, SIS suffers from high variance~\citep{Liu2018,uehara2020minimax,shi2021minimax}. We also empirically verify that SIS incurs high estimation error due to variance in the experimental results.

\subsection{Evaluation Metrics} We use the same evaluation metrics with \citep{shi2021minimax}. Given the i.i.d. dataset $\mathcal{D}_1, \mathcal{D}_2, \cdots, \mathcal{D}_m$, the values estimated on them $\hat{J}_1, \hat{J}_2, \cdots, \hat{J}_m$, and the true value $J^{\pi^e}$, we define the relative bias and relative MSE as follows:
\begin{align*}
    \mathrm{Bias}(\hat{J}; \mathcal{D}, \pi) := \left| \frac{1}{m} \sum_{j=1}^m \left(\frac{\hat{J}_i - J^{\pi^e}}{J^{\pi^e}} \right) \right|, \quad \quad \quad \mathrm{MSE}(\hat{J}; \mathcal{D}, \pi) := \frac{1}{m} \sum_{j=1}^m \left(\frac{\hat{J}_i - J^{\pi^e}}{J^{\pi^e}} \right)^2
\end{align*}
We use the above metrics to compare the performance of SIS, the naive baseline, and our proposal.

\subsection{CartPole Setting} \label{app:cartpole}

\paragraph{Environment.} 
Here, we describe the state, action, and reward settings of the CartPole environment. First, the 4-dimensional states of CartPole represent the position and velocity of the cart and the angle and angle velocity of the pole. The action space is $\{0, 1\}$, either pushing the cart to the left or right. To better distinguish the values among different policies, we used modified reward following \citep{shi2021minimax}. Specifically, we define the reward as
\begin{align*}
    R = \frac{1}{2} \left(\left|2.0 - \frac{x}{x_{\mathrm{clip}}}\right| \cdot \left|2.0 - \frac{\theta}{\theta_{\mathrm{clip}}}\right| - 1.0 \right),
\end{align*}
where $x$ and $\theta$ are the positions of Cart and angle of Pole. $x_{\mathrm{clip}}$ and $\theta_{\mathrm{clip}}$ are the thresholds such that the episode will terminate when either $|x| \geq x_{\mathrm{clip}}$ or $|\theta| \geq \theta_{\mathrm{clip}}$. Under this definition, we observe a larger reward when the cart is closer to the center, and the pole stands straight. We also set the discount factor $\gamma = 0.95$, and the values of the policies used in our experiment are somewhere between 20 and 40.

\paragraph{Estimator Implementation. } We parametrize the value function $b_v(\cdot)$ of our proposal and the naive estimator with a two-layer neural network. The network uses a 100-dimensional hidden layer with ReLU as its activation function, and Adam~\citep{kingma2014adam} is its optimizer. Both the naive estimator and our proposal optimize $b_v(\cdot)$ with the loss function defined in Example 5, but the naive one replaces $\bar{F}$ and $H$ with $O$.
Specifically, the naive estimator takes 4-dimensional observation $O$ as input. On the other side, our proposed estimator additionally inputs the concatenated vector of observation $O$ and one-hot representation of $A$ for several future steps (i.e., $M_F$) to consider. The convergence is based on the test loss evaluated on the test dataset, which is independent of the datasets to train value functions and estimate the policy value. Specifically, to find a global convergence point, we first run 20000 epochs with 10 gradient steps per each. Then, we report the results on the epoch which achieves the minimum loss. The convergence point is usually less than 10000 epochs.

For the adversarial function space $\Xi$ of both methods, we use the following RBF kernel $K(x_i; x_2)$:
\begin{align*}
    K(x_1; x_2) := \exp \left(- \frac{\| x_1 - x_2 \|_2}{2 \sigma^2} \right),
\end{align*}
where $\|x_1 - x_2\|_2$ is the l2-distance between $x_1$ and $x_2$, and $\sigma$ is a bandwidth hyperparameter. We use $\sigma = 1.0$ in the main text and provide ablation results with varying values of $\sigma$ in the following section. Finally, the naive estimator uses $O$ as the input of the kernel. The proposed method first predicts the latent spaces from the historical observations as $\hat{S} := f_{\mathrm{LSTM}}(H)$ and then use $\hat{S}$ as the input of the kernel. $f_{\mathrm{LSTM}}(H)$ is a bi-directional LSTM~\citep{cui2018deep} with 10-dimensional hidden dimension. We train the LSTM with MSE loss in predicting the noisy state (i.e., $O$) with Adam~\citep{kingma2014adam} as its optimizer.

\begin{figure}[!t]
  \centering
  \includegraphics[clip, width=0.7\linewidth]{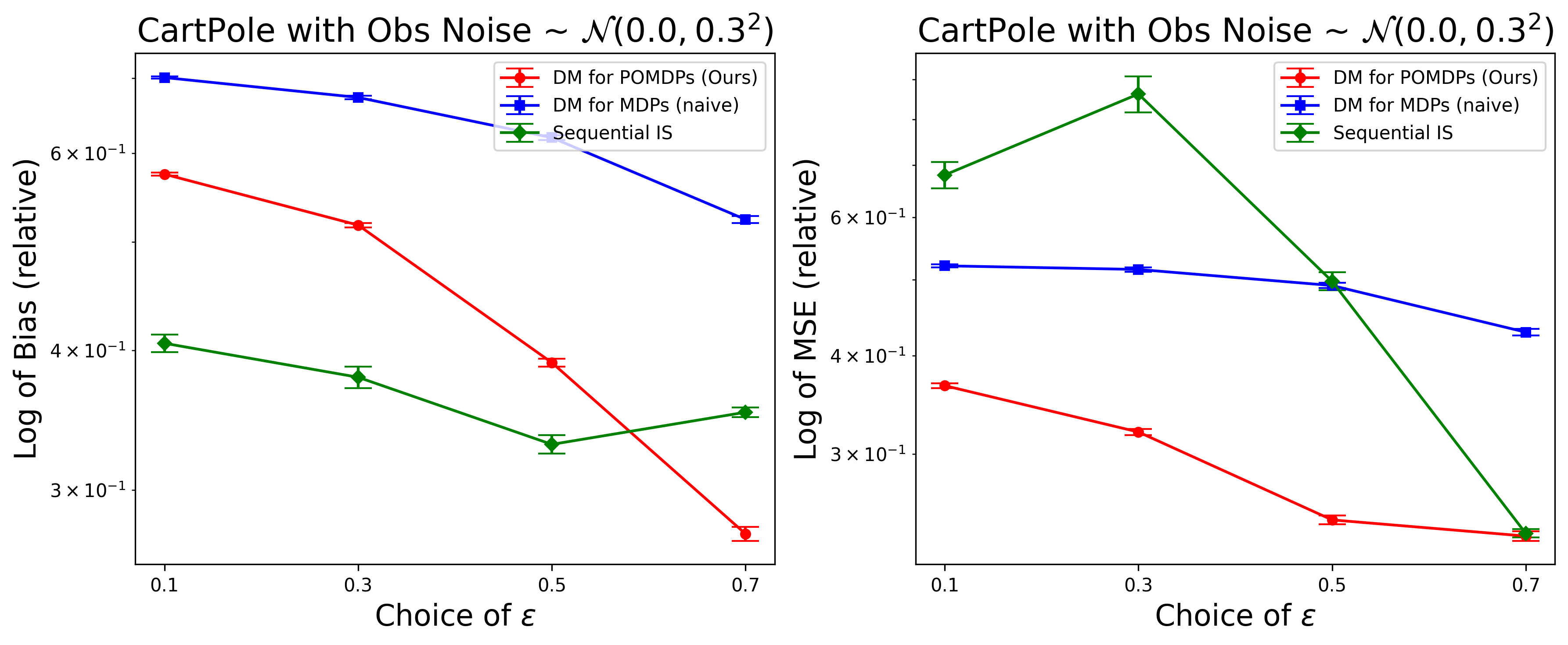}
  \caption{Logarithms of relative biases (left) and MSEs (right) of the proposed and the baseline estimators for various values of $\epsilon$, which specify the evaluation policy. The confidence intervals are obtained through 100 Monte Carlo simulations.} \label{fig:cartpole_01}
\end{figure}

\begin{figure}
  \centering
  \includegraphics[clip, width=0.90\linewidth]{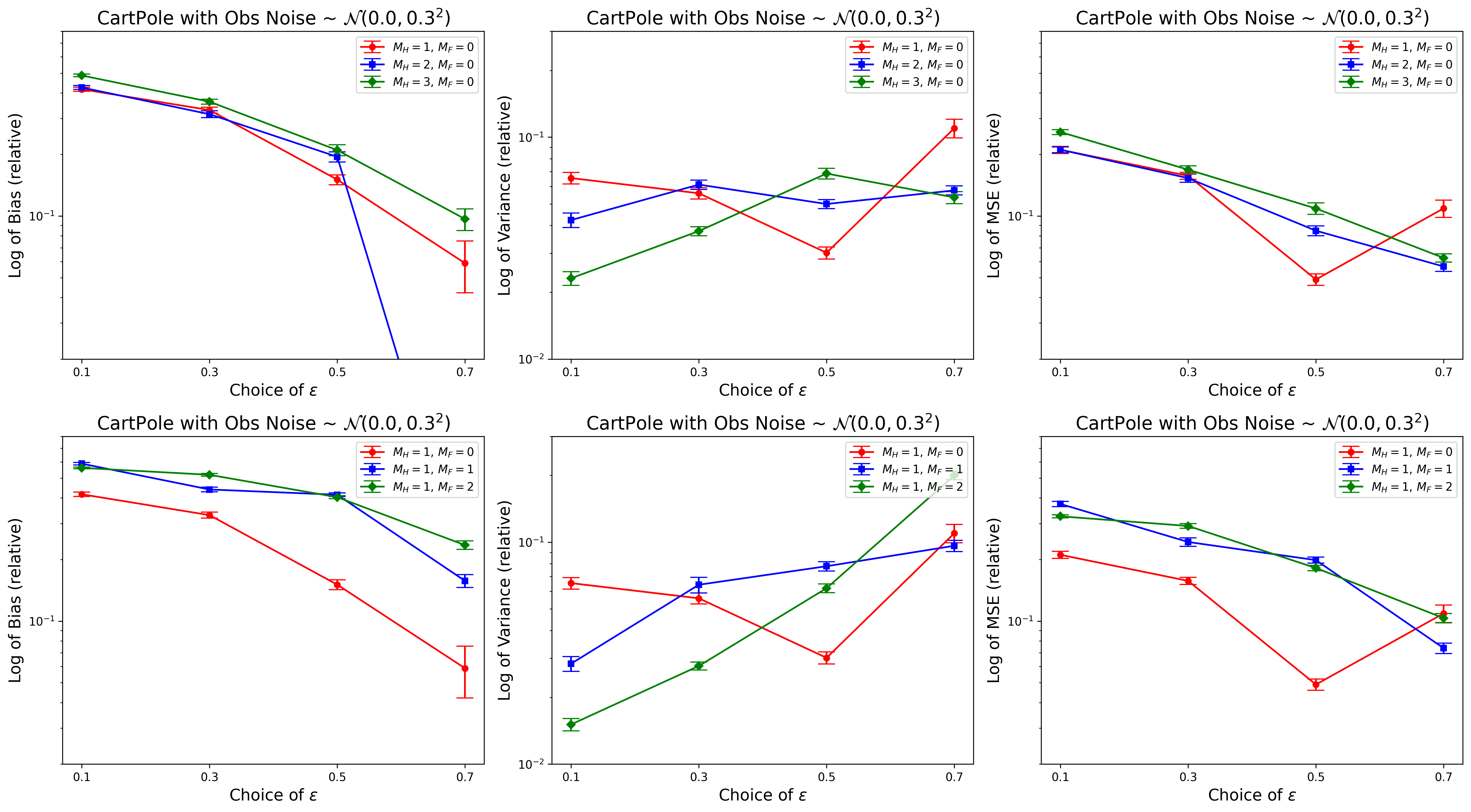}
  \caption{Logarithms of relative biases (left), variances (center),  and MSEs (right) of the proposed estimator with varying lengths of history $M_H$ (top) and varying lengths of future steps $M_F$ (bottom).
  The x-axis corresponds to the varying values of $\epsilon$ of the evaluation policy, and the
  associated confidence interval is based on 20 simulations.} \label{fig:cartpole_ablation}
\end{figure}

\begin{figure}
  \centering
  \includegraphics[clip, width=0.90\linewidth]{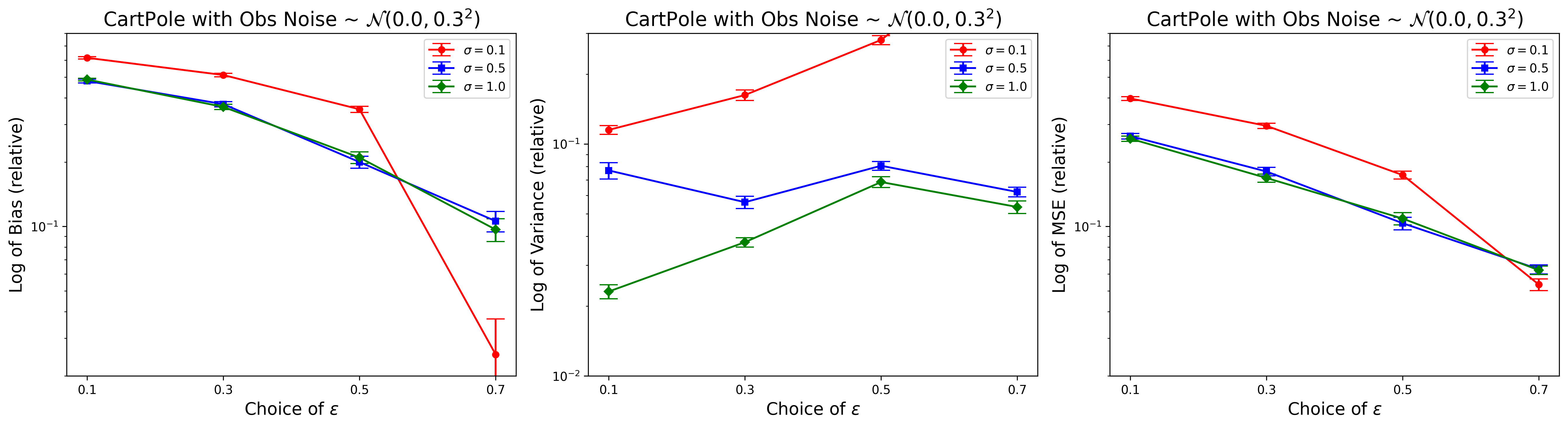}
  \caption{Logarithms of relative biases (left), variances (center),  and MSEs (right) of the proposed estimator with varying bandwidth hyperparameter $\sigma$ of RKHSs.
  The x-axis corresponds to the varying values of $\epsilon$ of the evaluation policy, and the
  associated confidence interval is based on 20 simulations.} \label{fig:cartpole_ablation_sigma}
\end{figure}

\subsection{Additional Rsults}
Figure~\ref{fig:cartpole_01} shows the experimental results in the case of using the behavior policy with $\epsilon=0.1$. The result suggests that the proposed estimator reduces MSEs of the baseline estimators as observed in the $\epsilon=0.3$ case in the main text. Note that experimental settings other than $\epsilon$ of the behavior policy are the same as those used in Section~\ref{sec:experiment}. 

\subsection{Ablation Results}
Here, we provide ablation results with (1) varying choices of $\bar{F}$ and $H$, and (2) varying values of bandwidth hyperparameter $\sigma$ of RKHSs. We provide the results with 20 random seeds in the following to conduct comprehensive ablations.

The first set of experiments aims to study how the use of history and future observations help improve the accuracy of value estimation. For this, we compare our proposed method with varying values of history length $M_H$ and future length $M_F$. Figure~\ref{fig:cartpole_ablation} (top) shows the result of varying history lengths $M_H \in \{1, 2, 3\}$ with a fixed future length ($M_F = 0$). We observe that the performance of the proposed method does not change greatly with the choice of history lengths. This result suggests that 1-step history is almost sufficient to identify the latent state in our experimental settings. Next, we report the results with varying future lengths $M_F \in \{ 0, 1, 2 \}$ with a fixed history length ($M_H = 1$)  in Figure~\ref{fig:cartpole_ablation} (bottom). The result suggests that the increased number of future steps can slightly increase bias. We attribute this to the increased estimation difficulty of the value function due to the increase in the dimensionality of inputs. However, we should also note that future observations may help improve the performance of the proposed method when the history itself is insufficient to identify the latent state.

The second set of experiments is to see how robust the proposed method is to the choice of hyperparameter $\sigma$. We thus vary the values of $\sigma \in [0.1, 0.5, 1.0]$, and report the result in Figure~\ref{fig:cartpole_ablation_sigma}. The result shows that the estimation accuracy is almost the same between $\sigma=1.0$ and $\sigma=0.5$, suggesting that the proposed value learning method is robust to the change of bandwidth hyperparameter of RKHSs to some extent. On the other hand, we observe that a very small value (i.e., $\sigma=0.1$) can increase the variance of estimation. Therefore, our recommendation is to avoid a (too) small value for the bandwidth hyperparameter $\sigma$.

\section{Proof of \pref{sec:identification}}

We often use
\begin{align*}
    F' \perp S,O,A  \mid S'\quad \mathrm{and} \quad (H ) \perp (A,O,R,F') \mid S .  
\end{align*}
This is easily checked by graphical models \pref{fig:pomdp2}. 

\subsection{Proof of \pref{lem:lemma_value}}

From the Bellman equation, we have 
\begin{align*}
\EE[\mu(O,A)\{R+\gamma V^{\pi^e}(S')\}- V^{\pi^e}(S)\mid S]=0.  
\end{align*}
Then, from the definition of future-dependent value functions, 
\begin{align*}
\EE[\mu(O,A)\{R+\gamma \EE[g_V(F') \mid S' ] \}-\EE[g_V(F) \mid S ]\mid S]=0.  
\end{align*}
Here, we use the stationarity assumption to ensure $ \EE[g_V(F') \mid S' ]= V^{\pi^e}(S') $. 

Next, by using $F' \perp S,O,A  \mid S'$, %
we have 
\begin{align}\label{eq:lemma_key}
    \EE[\mu(O,A)\{R+\gamma g_V(F')  \}-g_V(F) \mid S]=0. 
\end{align}
More specifically, 
\begin{align*}
&\EE[\mu(O,A)\{R+\gamma g_V(F')  \}-g_V(F) \mid S]\\
&= \EE[\mu(O,A)\{R+\gamma \EE[g_V(F') \mid S',S,O,A,Z ] \}-\EE[g_V(F) \mid S ]\mid S] \tag{Law of total expectation}\\
&=\EE[\mu(O,A)\{R+\gamma \EE[g_V(F') \mid S' ] \}-\EE[g_V(F) \mid S ]\mid S] \tag{Use $F' \perp S,O,A  \mid S'$} \\
&=0. 
\end{align*}
Hence, 
\begin{align*}
&\EE[\mu(O,A)\{R+\gamma g_V(F')  \}-g_V(F) \mid  H] \\
&= \EE[\EE[\mu(O,A)\{R+\gamma g_V(F')  \}-g_V(F)\mid S,H ] \mid  H] \tag{Law of total expectation} \\
&= \EE[\EE[\mu(O,A)\{R+\gamma g_V(F')  \}-g_V(F)\mid S ] \mid  H] \tag{Use $R,O,A,F' \perp ( H)\mid S $ }\\ 
&=0.  \tag{From \pref{eq:lemma_key}}
\end{align*}Thus, $g_V $ is a learnable future-dependent value function. 

\subsection{Proof of \pref{thm:identify}}

It follows from \pref{thm:identify2}, which is an improved version of  \pref{thm:identify}.

\subsection{Proof of \pref{lem:equivalent}}

The first statement is straightforward noting the equation is equal to solving 
\begin{align*}
     x^{\top} \mathrm{Pr}_{\pi^b}( \Fb \mid {\mathbf{S}}_b) = y 
\end{align*}
for $x$ given $y$. 

The second statement is straightforward noting it is equivalent to 
\begin{align*}
    x^{\top}\mathrm{Pr}_{\pi^b}({\mathbf{S}}_b \mid \Hb)=0
\end{align*}
implies $x=0$. This is satisfied when $\rank(\mathrm{Pr}_{\pi^b}(\mathbf{S}_b,\Hb))= | \Scal_{b}|$.

\section{Proof of \pref{sec:finite}}

\subsection{Proof of \pref{thm:bellman} and \pref{thm:final_error2} }

By simple algebra, the estimator is written as 
\begin{align*}
    \hat b_V = \inf_{q\in \Qcal}\sup_{\xi  \in \Xi}\EE_{\Dcal}[(\Zcal f)^2- \{\Zcal f- \lambda \xi(H)\}^2 ] 
\end{align*}
where
\begin{align*}
    \Zcal f = \mu(A,O)\{R + \gamma q(F')\}- q(F). 
\end{align*}
Noting this form similarly appears in the proof of \citep[Theorem 3]{shi2021minimax}, the following is similarly completed by \citep[Theorem 3]{shi2021minimax}:  
\begin{align*}
    \EE[ \{\Tcal(\hat b_V)\}^2(H) ]^{1/2} = \tilde O \prns{\max(1,C_{\Qcal},C_{\Xi})\prns{1/\lambda+\lambda}\sqrt{\frac{\ln( |\Gcal| |\Xi| /\delta)}{n} }}. 
\end{align*}
using realizability and the Bellman completeness. Then, we have 
\begin{align*}
    &(J(\pi^e)-\EE_{ f \sim \nu_{ \Fcal}}[ \hat b_V ( f)])^2\\
    &= \braces{(1-\gamma)^{-1}\EE_{(\tilde s) \sim d_{\pi^e}}\left[ \EE[\mu(A,O) R\mid S=\tilde s]\right] - \EE_{ f \sim \nu_{ \Fcal}}[ \hat b_V ( f)]}^2 \\ 
    &= \braces{(1-\gamma)^{-1} \EE_{(\tilde s) \sim d_{\pi^e}}\left[ \EE\bracks{\mu(A,O)\{R+ \gamma \hat b_V (F')\} - \hat b_V (F) \mid S=\tilde s}  \right]}^2 \tag{Use \pref{lem:auxi}}  \\
    &\leq (1-\gamma)^{-2}\EE[ \{\Tcal^{\Scal}(\hat b_V)\}^2s ] \tag{Jensen's inequality} \\ 
    &\leq    (1-\gamma)^{-2} \EE[ \{\Tcal(\hat b_V)\}^2(H) ]\times  \sup_{q \in \Qcal}\frac{\EE_{ s \sim d_{\pi^e}}[(\Tcal^{\Scal} q)^2s ]}{\EE[(\Tcal q)^2(H) ]}. 
\end{align*}

\section{Proof of \pref{sec:history_based}}

In this section, most of the proof follows by slightly modifying the proof for memoryless policies. For completeness, we provide the proof of \pref{thm:history}. As we did in the proof of \pref{thm:identify}, we prove the following stronger statement. 

\begin{theorem}[Refined identification theorem ] \label{thm:refined}
Suppose (\ref{thm:identify2}a) $\Bcal_V \cap \Qcal \neq \emptyset$, (\ref{thm:identify2}b) any element in $q \in \Qcal$ that satisfies $
 \EE[\{\Tcal^S(q)\}(\bar S) \mid H] = 0 $ also satisfies $\Tcal^S(q)(\bar S)=0$. 
(\ref{thm:identify2}c) the overlap $\mu(Z,O,A)<\infty$ and any element in $q \in \Qcal$ that satisfies $
    \Tcal^S(q)(\bar S) =0$ also satisfies $\Tcal^S(q)(\bar S^{\diamond}) =0$ 
where $S^{\diamond} \sim d_{\pi^e}(s)$. 
Under the above three conditions, for any $b_V \in \Bcal_V \cap \Qcal $, we have $$
         J(\pi^e) = \EE_{\bar f \sim \nu_{\bar \Fcal}}[b_V (\bar f) ]. $$
\end{theorem}

\subsection{Proof of \pref{thm:refined} }

Note for any $b_V \in \Bcal_V \cap \Qcal$, we have 
\begin{align*}
0 &= \EE\bracks{\mu(Z,A,O) \prns{R  +  \gamma  b_V (Z',F')  } -b_V (Z,F) \mid H } \\
 &= \EE\bracks{ \EE\bracks{\mu(Z,A,O) \prns{R  +  \gamma  b_V (Z',F')  } -b_V (Z,F)\mid Z,S,H}\mid H}  \tag{Law of total expectation} \\
&=\EE\bracks{ \EE\bracks{\mu(Z,A,O) \prns{R  +  \gamma  b_V (Z',F')  } -b_V (Z,F)\mid Z,S}\mid H}. 
\end{align*}
In the last line, we use $(H \setminus Z ) \perp (A,O,F') \mid Z,S $. From (\ref{thm:identify2}b), we have
\begin{align}\label{lem:auxi2}
    \EE[b_V (Z,F)    \mid Z,S] = \EE\bracks{ \mu(Z,A,O) \prns{R  +  \gamma  b_V (Z',F')  } \mid Z,S }. 
\end{align}
Hence, $ \Tcal^{\Scal}(b_V)(S)=0. $ Then, from the overlap condition (\ref{thm:identify2}c), 
\begin{align}
 \Tcal^{\Scal}(b_V)(S^{\diamond})=0. 
\end{align}

Finally, for any $b_V \in \Bcal_V \cap \Qcal$, we have 
\begin{align*}
    &(J(\pi^e)-\EE_{\bar f \sim \nu_{\bar \Fcal}}[ b_V (\bar f)]) \\
    &= (1-\gamma)^{-1}\EE_{(\tilde s) \sim d_{\pi^e}}\left[ \EE[\mu(Z,A,O) R\mid \bar S=\tilde s]\right] - \EE_{\bar f \sim \nu_{\bar \Fcal}}[ b_V (\bar f)]  \\ 
    &= (1-\gamma)^{-1} \EE_{(\tilde s) \sim d_{\pi^e}}\left[ \EE\bracks{\mu(Z,A,O)\{R+ \gamma b_V (Z',F')\} - b_V (Z,F) \mid \bar S=\tilde s}  \right] \tag{Use \pref{lem:auxi}}  \\
    &=0. \tag{Use $\Tcal^{\Scal}(S^{\diamond})=0. $}
\end{align*}
From the first line to the second line, we use  
\begin{align*}
    J(\pi^e) &= (1-\gamma)^{-1}\int d_{\pi^e}(z,s)r(s,a)\pi^e(a\mid z,o)\mathrm{d}(z,s ) \\
    &= (1-\gamma)^{-1} \EE_{(\tilde s) \sim d_{\pi^e}}\left[ \EE[\mu(Z,A,O) R\mid \bar S=\tilde s]\right]. 
\end{align*}

\section{Proof of \pref{sec:hse} } \label{sec:proof_section_D}

\subsection{Proof of \pref{thm:identify2}}

Note for any $b_V \in \Bcal_V \cap \Qcal$, we have 
\begin{align*}
0 &= \EE\bracks{\mu(A,O) \prns{R  +  \gamma  b_V (F')  } -b_V (F) \mid H } \\
 &= \EE\bracks{ \EE\bracks{\mu(A,O) \prns{R  +  \gamma  b_V (F')  } -b_V (F)\mid S,H}\mid H}  \tag{Law of total expectation} \\
&=\EE\bracks{ \EE\bracks{\mu(A,O) \prns{R  +  \gamma  b_V (F')  } -b_V (F)\mid S}\mid H}. 
\end{align*}
In the last line, we use $(H ) \perp (A,O,F') \mid S $. From (\ref{thm:identify2}b), we have
\begin{align}\label{lem:auxi2}
    \EE[b_V (F)    \mid S] = \EE\bracks{ \mu(A,O) \prns{R  +  \gamma  b_V (F')  } \mid S }. 
\end{align}
Hence, $ \Tcal^{\Scal}(b_V)(S)=0. $ Then, from the overlap condition (\ref{thm:identify2}c), 
\begin{align}
 \Tcal^{\Scal}(b_V)(S^{\diamond})=0. 
\end{align}

Finally, for any $b_V \in \Bcal_V \cap \Qcal$, we have 
\begin{align*}
    &(J(\pi^e)-\EE_{ f \sim \nu_{ \Fcal}}[ b_V ( f)]) \\
    &= (1-\gamma)^{-1}\EE_{(\tilde s) \sim d_{\pi^e}}\left[ \EE[\mu(A,O) R\mid S=\tilde s]\right] - \EE_{ f \sim \nu_{ \Fcal}}[ b_V ( f)]  \\ 
    &= (1-\gamma)^{-1} \EE_{(\tilde s) \sim d_{\pi^e}}\left[ \EE\bracks{\mu(A,O)\{R+ \gamma b_V (F')\} - b_V (F) \mid S=\tilde s}  \right] \tag{Use \pref{lem:auxi}}  \\
    &=0. \tag{Use $\Tcal^{\Scal}(S^{\diamond})=0. $}
\end{align*}
From the first line to the second line, we use  
\begin{align*}
    J(\pi^e) &= (1-\gamma)^{-1}\int d_{\pi^e}(z,s)r(s,a)\pi^e(a\mid z,o)\mathrm{d}(z,s ) \\
    &= (1-\gamma)^{-1} \EE_{(\tilde s) \sim d_{\pi^e}}\left[ \EE[\mu(A,O) R\mid S=\tilde s]\right]. 
\end{align*}

\subsection{Proof of \pref{lem:linear_ex}}

From (LM2), there exists $w_1$ such that $V^{\pi^e}(S) = w^{\top}_1 \phi_{ \Scal}(S) $. Then, 
from (LM1), 
\begin{align*}
    \EE[w^{\top}_2 \phi_{ \Fcal}(F) \mid S] = w^{\top}_2 K_1 \phi_{ \Scal}(S)=w^{\top}_1 \phi_{ \Scal}(S).
\end{align*}
From (LM3), the above equation has a solution with respect to $w_2$. This concludes the proof.

\subsection{Proof of \pref{lem:linear_ex2}}

From (LM1), (LM4) and the statement of \pref{lem:linear_ex}, there exists $w_4 \in \RR^{d_{ \Scal}}$ such that 
\begin{align*}
    \EE[ \mu(O,A) \{R  + \gamma q(F')\} -q(F) \mid S]=w^{\top}_4  \phi_{ \Scal}(S).
\end{align*}
for any $q(\cdot) = w^{\top} \phi_{ \Fcal}(\cdot) \in \Qcal$. Letting $b_F(\cdot)=\{w^{\star}\}^{\top}\phi_{ \Fcal}(\cdot) $, this is because
\begin{align}\label{eq:whole}
    &\EE[ \mu(O,A) \{R  + \gamma q(F')\} -q(F) \mid S]  \\
    &=\EE[ \mu(O,A)\{\gamma q(F') - \gamma b_V(F') \} -q(F) + b_V(F) \mid S] \tag{Statement of \pref{lem:linear_ex}} \nonumber \\
    &=\EE[\mu(O,A)\{\gamma (w^{\top}-\{w^{\star}\}^{\top})\EE[\phi_{ \Fcal}(F') \mid S' ] \} -(w^{\top}-\{w^{\star}\}^{\top})\EE[\phi_{ \Fcal}( \Fcal)\mid S] \mid S  ] \nonumber \\   
 &=\EE[\mu(O,A)\{\gamma (w^{\top}-\{w^{\star}\}^{\top})K_1 \phi_{ \Scal}(S')  \} -(w^{\top}-\{w^{\star}\}^{\top})K_1 \phi_{ \Scal}(S)  \mid S  ] \tag{(LM1)}  \nonumber\\
 &=  w^{\top}_4  \phi_{ \Scal}(S). \tag{(LM4)} \nonumber
\end{align}
for some $ w_4$. 

Then, from (LM5), when $w^{\top}_4 \EE[\phi_{ \Scal}(S) \mid H]=0$, we have $w^{\top}_4  \phi_{ \Scal}(S)=0$. This is because first $\EE[w^{\top}_4  \phi_{ \Scal}(S) \mid H]=0$ implies $\EE[\{\EE[w^{\top}_4  \phi_{ \Scal}(S)\mid H]\}^2]=0$. %
Then, to make the ratio 
\begin{align*}
    \sup_{w \in \mathbb{R}^d} \frac{\EE[\{w^{\top}  \phi_{ \Scal}(S)\}^2 ] }{\EE[\{w^{\top}  \EE[\phi_{ \Scal}(S) \mid H]\}^2 ]}
\end{align*}
finite, we need $ \EE[\{w^{\top}_4  \phi_{ \Scal}(S)\}^2 ] =0 $. This implies $w^{\top}_4 \phi_{ \Scal}(S)=0$.

Here, when we have 
\begin{align*}
     0 &=  \EE[ \EE[ \mu(O,A) \{R  + \gamma q(F')\} -q(F) \mid S] \mid H] \\
     &= \EE[w^{\top}_4 \phi_{ \Scal}(S) \mid H], \tag{Just plug-in}
\end{align*}
we get $w^{\top}_{4} \phi_{ \Scal}(S)=0$, i.e., 
\begin{align*}
    \EE[ \mu(O,A) \{R  + \gamma q(F')\} -q(F) \mid S]=0. 
\end{align*}

\subsection{Proof of \pref{lem:linear_ex3}}

Letting
\begin{align*}
     \tilde w =\EE[\phi_{\Hcal}(H)\{ \gamma  \phi_{ \Fcal}(F')-\phi_{ \Fcal}(F)   \}]^{+} \EE[\mu(O,A)R \phi_{\Hcal}(H)], 
\end{align*}
we want to prove $ \tilde w ^{\top}\phi_{ \Fcal}(\cdot)$ is a learnable future-dependent value function. Then, by invoking \pref{thm:identify2}, the statement is proved. 

\paragraph{First step.}

Here, for $q \in \Qcal$, we have $(\Tcal q)(H)= a^{\top}\phi_{\Hcal}(H)$ for some vector $a \in \RR^{d_{\Hcal}}$. Here,  $\EE[(\Tcal q)^2(H)]=0$ is equivalent to $(\Tcal q)(H) = 0$. Besides, the condition $\EE[(\Tcal q)^2(H)]=0$ is equivalent to 
\begin{align*}
    a^{\top}\EE[\phi_{\Hcal}(H)\phi^{\top}_{\Hcal}(H)] a =0. 
\end{align*}
Thus, $\EE[(\Tcal q)(H)\phi^{\top}_{\Hcal}(H)]=  a^{\top}\EE[\phi_{\Hcal}(H)\phi^{\top}_{\Hcal}(H)]=\mathbf{0}$, where $\mathbf{0} \in \RR^{d_{\Hcal}}$ is a vector consisting of $0$, is a sufficient condition to satisfy $(\Tcal q)(H) = \mathbf{0}$. Hence, if $q(\cdot)= w^{\top}\phi_{ \Fcal}(\cdot) \in \Qcal$ satisfies 
\begin{align}\label{eq:good_formula}
    \EE[\phi_{\Hcal}(H)\{\mu(O,A)\{R+ \gamma q(F')\} - q(F)\}  ]=\mathbf{0}, 
\end{align}
$q(\cdot)$ is a learnable bridge function. Note the above equation is equal to 
\begin{align*}
     \EE[\phi_{\Hcal}(H)\{ \gamma  \mu(O,A)\phi_{ \Fcal}(F')-\phi_{ \Fcal}(F)   \}^{\top} ]w = \EE[\mu(O,A)R \phi_{\Hcal}(H)]. 
\end{align*}
Vice versa, i.e., any learnable future-dependent value function satisfies the above \pref{eq:good_formula} is similarly proved. 

\paragraph{Second step.}

Since there exists a linear learnable future-dependent value function in $\Qcal$ from \pref{lem:linear_ex}, we have a solution to 
\begin{align*}
     \EE[\phi_{\Hcal}(H)\{ \gamma  \mu(O,A)\phi_{ \Fcal}(F')-\phi_{ \Fcal}(F)   \}^{\top} ]w = \EE[\mu(O,A)R \phi_{\Hcal}(H)]
\end{align*}
with respect to $w$. We denote it by $ w$. Thus, $\tilde w$ is also a solution since 
\begin{align*}
     B \tilde w = B B^+ B  w= B  w= \EE[\mu(O,A)R \phi_{\Hcal}(H)] 
\end{align*}
where $ B= \EE[\phi_{\Hcal}(H)\{ \gamma  \phi_{ \Fcal}(F')-\phi_{ \Fcal}(F)   \}^{\top}]$.  We use  $B = BB^+B$. Note $\tilde w$ and $ w$ can be different.

\subsection{Proof of \pref{lem:lqg}}

Refer to \cite[Chapter J]{uehara2022provably}.

\section{Proof of \pref{sec:finitez_horizon2}}

\subsection{Proof of  \pref{thm:identify4}}

Take $b^{[t]}_V \in \Bcal^{[t]}_V \cap \Qcal_t$. Then, we have
\begin{align*}
   \EE[ \mu(O,A)\{R + \gamma b^{[t+1]}_V(F')\}-b^{[t]}_V(F) \mid H ]=0. 
\end{align*}
Here, this implies 
\begin{align*}
   \EE[ \EE[\mu(O,A)\{R + \gamma b^{[t+1]}_V(F')\}-b^{[t]}_V(F)\mid  S]  \mid H ]=0
\end{align*}
using $H \perp  (A,O,R,F') \mid S$. 

Then, from the we have $\Tcal^{\Scal}_t(b_V)(S)=0$, i.e., 
\begin{align*}
    \EE[ \mu(O,A)\{R + \gamma b^{[t+1]}_V(F')\}-b^{[t]}_V(F) \mid S]=0. 
\end{align*}
using the assumption (b). 
Next, from the overlap assumption (c), we have $\{\Tcal^{\Scal,t}(b_V)\}(S^{\diamond}_t)=0$ where $S^{\diamond}_t \sim d^{\pi^e}_t(\cdot)$. 

Therefore, we have
{\small 
\begin{align*}
     & J(\pi^e)- \EE_{ f \sim \nu_{ \Fcal}}[b^{[0]}_V( f)] \\
      &= \prns{\sum_{t=0}^{T-1}\EE_{ s \sim d^{\pi^e}_t}[ \gamma^t \EE[\mu(O,A) R \mid S= s]} + \prns{\sum_{t=0}^{T-1}\gamma^t\EE_{ s \sim d^{\pi^e}_{t+1} }[ \EE[b^{[t+1]}_V(F)\mid S= s]]-  \EE_{ s \sim d^{\pi^e}_{t} }[\EE[b^{[t]}_V(F) \mid S= s  ]]   } \tag{Telescoping sum} 
 \end{align*}  
 } 
from telescoping sum. Besides, we have 
\begin{align*}
  &\EE_{ s \sim d^{\pi^e}_{t+1} }[ \EE[b^{[t+1]}_V(F)\mid S= s]]\\
  & =  \EE_{ s \sim d^{\pi^e}_{t} }[\EE[\mu(O,A)\EE[b^{[t+1]}_V(F)\mid S']  \mid S= s]] \\ 
  & =  \EE_{ s \sim d^{\pi^e}_{t} }[\EE[\mu(O,A)\EE[b^{[t+1]}_V(F)\mid S',O,A]  \mid S= s]] \tag{$F \perp (O,A) \setminus S'\mid S'$ } \\
  & = \EE_{ s \sim d^{\pi^e}_{t} }[\EE[\mu(O,A)b^{[t+1]}_V(F)  \mid S= s]] \tag{Total law of expectation}. 
\end{align*}
Therefore, 
 \begin{align*}
 & J(\pi^e)- \EE_{ f \sim \nu_{ \Fcal}}[b^{[0]}_V( f)]  \\
     &= \prns{\sum_{t=0}^{T-1}\EE_{ s \sim d^{\pi^e}_t}[ \gamma^t \EE[\mu(O,A) R \mid S= s]} + \prns{\sum_{t=0}^{T-1}\gamma^t\EE_{ s \sim d^{\pi^e}_t }[ \EE[\gamma\mu(O,A)b^{[t+1]}_V(F')-b^{[t]}_V(F) \mid S= s  ]]   }\\
      &= \sum_{t=0}^{T-1}\gamma^t\EE_{ s \sim d^{\pi^e}_t }[ \EE[\mu(O,A)\{R +\gamma b^{[t+1]}_V(F') \}- b^{[t]}_V(F)\mid S=  s ]] \\ 
     &=\sum_{t=0}^{T-1}\EE_{ s \sim d^{\pi^e}_t }[ \Tcal^{\Scal}_t s]  \\
     &=0.  \tag{Recall we derive  $\{\Tcal^{\Scal,t}(b_V)\}(S^{\diamond}_t)=0$. }
\end{align*}

\subsection{Proof of Corollary \pref{cor:linear_finite} }

The proof consists of three steps. We use \pref{thm:identify4}. 

\paragraph{First step: verify existence of learnable future-dependent value functions (Show  (\ref{thm:identify4}a)). }

From (LM1), we need to find a solution to
\begin{align*}
    w^{\top}_1\EE[\phi_{ \Fcal}(F) \mid S ]= V^{\pi^e}_t(S)  
\end{align*}
with respect to a value $w_1$. Then, from (LM2f), there exists $w_2$ and $K_1$ such that
\begin{align*}
    w^{\top}_1 K_1 \phi_{ \Scal}(S)= w^{\top}_2 \phi_{ \Scal}(S). 
\end{align*}
Thus, from (LM3), the above has a solution with respect to $w_1$. 

\paragraph{Second step: verify invertibility condition (Show  (\ref{thm:identify4}b)).}

Take a function $q^{[t]}(F)$ linear in $\phi_{ \Fcal}(F)$. From (LM1), (LM2) and (LM4), there exists $w_3 \in \RR^{d_{ \Scal}}$ such that 
\begin{align*}
    \EE[\mu(O,A)\{R+\gamma q^{[t+1]}(F')- q^{[t]}(F)\} \mid S]= w^{\top}_3 \phi_{ \Scal}(S). 
\end{align*}
This is proved as in \pref{eq:whole}. Then, $  w^{\top}_3 \EE[\phi_{ \Scal}(S) \mid  H] =0 $ implies $  w^{\top}_3 \phi_{ \Scal}(S) =0 $ from (LM5). Hence,  when we have 
\begin{align*}
    0 &=\EE[ \EE[\mu(O,A)\{R+\gamma q^{[t+1]}(F')- q^{[t]}(F)\}\mid S ]\mid H] \\
    &= \EE[ w^{\top}_3 \phi_{ \Scal}(S) \mid H], 
\end{align*}
this implies $w^{\top}_3 \phi_{ \Scal}(S)=0$, i.e., 
$$\EE[\mu(O,A)\{R+\gamma q^{[t+1]}(F')- q^{[t]}(F)\}\mid S ]=0.$$ 

\paragraph{Third step: show learnable future-dependent value functions are future-dependent value functions.}

Take a learnable future-dependent value function $b_V$. Then, from the condition, we have 
\begin{align*}
    \EE[\mu(O,A)\{R + \gamma b^{[t+1]}_V(F')\}- b^{[t]}_V(F)\mid S] = 0. 
\end{align*}
We want to prove
\begin{align*}
    \EE[b^{[t]}_V(F) \mid S]= V^{\pi^e}_t(S). 
\end{align*}
We use induction. When $t=T-1$, this is clear. Next, supposing the statement is true at $t+1$, we prove it at a horizon $t$. Here, we have
\begin{align*}
    \EE[ b^{[t]}_V(F)\mid S] &= \EE[\mu(O,A)\{R + \gamma b^{[t+1]}_V(F')\} \mid S]  \\ 
    & = \EE[\mu(O,A)\{R + \gamma \EE[b^{[t+1]}_V(F') \mid S' ]\} \mid S] \\ 
    &= \EE[\mu(O,A)\{R + \gamma V^{\pi^e}_{t+1}(S')\} \mid S] =V^{\pi^e}_{t}(S).
\end{align*}
Thus, from induction, we have $\EE[b^{[t]}_V(F) \mid S]= V^{\pi^e}_t(S)$ for any $t\in [T-1]$ for any learnable future-dependent value function $b_V(\cdot)$. 

\paragraph{Fourth step: show the final formula.}

Recall we define 
\begin{align*}
    \theta_t  =\EE[\phi_{\Hcal}(H)\phi_{ \Fcal}(F)^{\top} ]^{+}\EE[\mu(O,A)\phi_{\Hcal}(H)\{R + \gamma \phi^{\top}_{ \Fcal}(F') \theta_{t+1}\} ]
\end{align*}
We want to show $\theta^{\top}_t \phi_{ \Fcal}(\cdot)$ is a learnable future-dependent value function. Here, we need to say 
\begin{align*}
    \EE[\mu(O,A)\{R + \gamma \theta^{\top}_{t+1} \phi_{ \Fcal}(F')\}- \theta^{\top}_t \phi_{ \Fcal}(F) \mid  H] = 0. 
\end{align*}
This is satisfied if we have 
\begin{align}\label{eq:main_statement}
   \EE\prns{\phi_{\Hcal}(H) \prns{\mu(O,A)\{R + \gamma \phi^{\top}_{ \Fcal}(F') \theta_{t+1} \}- \phi^{\top}_{ \Fcal}(F) \theta_t }}= 0. 
\end{align}
This is because $\EE[\mu(O,A)\{R + \gamma \theta^{\top}_{t+1} \phi_{ \Fcal}(F')\}- \theta^{\top}_t \phi_{ \Fcal}(F) \mid  H]= w^{\top}_5 \phi_{\Hcal}(H)$ for some vector $w_5$. Besides, $\EE[w^{\top}_5 \phi_{\Hcal}(H) \phi_{\Hcal}(H) ]=0$ implies $\EE[w^{\top}_5 \phi_{\Hcal}(H) \phi_{\Hcal}(H)w_5]=0$, which results in  
\begin{align*}
    w^{\top}_5 \phi_{\Hcal}(H)=0. 
\end{align*}

On top of that, since future-dependent value functions $\langle \tilde \theta_t , \phi_{ \Fcal}(F)\rangle$ exist from the first statement, we have a solution: 
\begin{align*}
   \EE\prns{\phi_{\Hcal}(H) \prns{\mu(O,A)\{R + \gamma \phi^{\top}_{ \Fcal}(F') \tilde \theta_{t+1} \}- \phi^{\top}_{ \Fcal}(F) \tilde \theta_t }}= 0. 
\end{align*}
with respect to $\tilde \theta_t$. In the following, we use this fact. 

Now, we go back to the proof of the main statement, i.e., we prove \pref{eq:main_statement}. We use the induction. This is immediately proved when $t=T-1$. Here, suppose $ \phi^{\top}_{ \Fcal}(\cdot)\theta_{t+1}$ is a learnable future-dependent value function at $t+1$. Then, we have 
\begin{align*}
    &   \EE[\phi_{\Hcal}(H)\phi_{ \Fcal}(F)^{\top} ]  \theta_t \\ 
    &=\EE[\phi_{\Hcal}(H)\phi_{ \Fcal}(F)^{\top} ]\EE[\phi_{\Hcal}(H)\phi_{ \Fcal}(F)^{\top} ]^{+}\EE[\mu(O,A)\phi_{\Hcal}(H)\{R + \gamma \phi^{\top}_{ \Fcal}(F') \theta_{t+1}\} ] \tag{Definition}\\
    & = \EE[\phi_{\Hcal}(H)\phi_{ \Fcal}(F)^{\top} ]\EE[\phi_{\Hcal}(H)\phi_{ \Fcal}(F)^{\top} ]^{+}\EE[\mu(O,A)\phi_{\Hcal}(H)\{R + \gamma V^{\pi^e}_{t+1}(S')\} ].  
\end{align*}
In the last line, we use the inductive hypothesis and the third step.  Recall we showed in the previous step $\EE[b^{[t]}_V(F) \mid S]= V^{\pi^e}_t(S)$ for any learnable future-dependent value functions $b^{[t]}_V(\cdot)$ . 
Then, we have 
\begin{align*}   
    &\EE[\phi_{\Hcal}(H)\phi_{ \Fcal}(F)^{\top} ]  \theta_t  \\
    & = \EE[\phi_{\Hcal}(H)\phi_{ \Fcal}(F)^{\top} ]\EE[\phi_{\Hcal}(H)\phi_{ \Fcal}(F)^{\top} ]^{+}\EE[\mu(O,A)\phi_{\Hcal}(H)\{R + \gamma V^{\pi^e}_{t+1}(S')\} ] \\
    & = \EE[\phi_{\Hcal}(H)\phi_{ \Fcal}(F)^{\top} ]\EE[\phi_{\Hcal}(H)\phi_{ \Fcal}(F)^{\top} ]^{+}\EE[\mu(O,A)\phi_{\Hcal}(H)\{R + \gamma \phi^{\top}_{ \Fcal}(F') \tilde \theta_{t+1}\} ] \\
    & = \EE[\phi_{\Hcal}(H)\phi_{ \Fcal}(F)^{\top} ]\EE[\phi_{\Hcal}(H)\phi_{ \Fcal}(F)^{\top} ]^{+}\EE[\phi_{\Hcal}(H)\phi_{ \Fcal}(F)^{\top} ]\tilde \theta_t  \\
    & = \EE[\phi_{\Hcal}(H)\phi_{ \Fcal}(F)^{\top} ]\tilde \theta_t \tag{Property of Moore-Penrose Inverse}\\
    &= \EE[\mu(O,A)\phi_{\Hcal}(H)\{R + \gamma \phi^{\top}_{ \Fcal}(F') \tilde \theta_{t+1}\} ] \tag{Definition of $\tilde \theta_t$}\\
   &= \EE[\mu(O,A)\phi_{\Hcal}(H)\{R + \gamma  V^{\pi^e}_{t+1}(S')\} ] \tag{We showed in the previous step}\\
     &= \EE[\mu(O,A)\phi_{\Hcal}(H)\{R + \gamma \phi^{\top}_{ \Fcal}(F') \theta_{t+1}\} ].  \tag{From the induction}
\end{align*}
Hence,  $ \phi^{\top}_{ \Fcal}(\cdot)\theta_{t}$ is  a learnable future-dependent value function at $t$.

\section{Proof of \pref{sec:modeling}}

\subsection{Proof of \pref{thm:system_identification1}} 

We take a value bridge function $ b^{[t]}_D \in \Qcal_t \cap \Bcal^{[t]}_{\Dcal}$ for any $[T]$. Then, we define 
\begin{align*}
    l^{[t]}_D(\cdot)  =  \EE[b^{[t]}_D(F) \mid  S=\cdot]. 
\end{align*}
In this section, $\EE[\cdot\,; A_{0:T-1} \sim \pi^e]$ means taking expectation when we execute a policy $\pi_e$ from $t=0$ to $T-1$. Note $\EE[\cdot\,; A_{0:T-1} \sim \pi^b]$ is just $\EE[\cdot]$. 
Here, we have
{\small 
\begin{align}
    &\Prr_{\pi^e}(o_0,a_0,\cdots,a_{T-1})- \EE_{f \sim \nu_{\Fcal}}[ b^{[0]}_D (f)] \nonumber \\
    &= \sum_{t=0}^{T-1}\EE\bracks{ \braces{\prod_{k=0}^{t-1} \rI(O_k=o_k,A_k=a_k)} \braces{\rI(O_t=o_t,A_t=a_t)l^{[t+1]}_D(S_{t+1}) -l^{[t]}_D(S_t)   }; A_{0:T-1} \sim \pi^e}\nonumber  \\ 
    & = \sum_{t=0}^{T-1}\EE[ \braces{\prod_{k=0}^{t-1} \rI(O_k=o_k,A_k=a_k)} \{\EE[\rI(O_t=o_t,A_t=a_t)l^{[t+1]}_D(S_{t+1})\mid S_t,(O_0,A_0,\cdots,A_{t-1})=(o_0,a_0,\cdots,a_{t-1}) ] \nonumber  \\ 
    & -l^{[t]}_D(S_t)   \} ; A_{0:T-1} \sim \pi^e ]\nonumber  \\ 
    & = \sum_{t=0}^{T-1}\EE\bracks{ \braces{\prod_{k=0}^{t-1} \rI(O_k=o_k,A_k=a_k)} \braces{\EE[\rI(O_t=o_t,A_t=a_t)l^{[t+1]}_D(S_{t+1})\mid S_t ] -l^{[t]}_D(S_t)   } ; A_{0:T-1} \sim \pi^e }. \nonumber 
\end{align}
}
In the first line, we use a telescoping sum trick noting
\begin{align*}
    \Prr_{\pi^e}(o_0,a_0,\cdots,a_{T-1})  = \EE\bracks{\prod_{k=0}^{T-1} \rI(O_k=o_k,A_k=a_k); A_{0:T-1} \sim \pi^e}. 
\end{align*}
From the second line to the third line, we use $S_{t+1}, O_t, A_t \perp O_1,\cdots, A_t \mid S_t$.

Here, we have 
\begin{align*}
    \EE[\rI(O_t=o_t,A_t=a_t) l^{[t+1]}_D(S_{t+1})\mid S_t; A_t\sim \pi^e] =  \tilde l^{[t]}_D(S_t) 
\end{align*}
where $\tilde l^{[t+1]}_D(\cdot)  =  \EE[\rI(O = o_t, A=a_t)\mu(O,A) l^{[t+1]}_D(S') \mid  S=\cdot; A_t \sim \pi^b]$ using importance sampling. Hence, the following holds: 
{\small 
\begin{align}
    &\Prr_{\pi^e}(o_0,a_0,\cdots,a_{T-1})- \EE_{f \sim \nu_{\Fcal}}[ b^{[0]}_D (f)] \nonumber \\
    &= \sum_{t=0}^{T-1}\EE\bracks{ \braces{\prod_{k=0}^{t-1} \rI(O_k=o_k,A_k=a_k)} \braces{ \tilde l^{[t+1]}_D(S_t)  -l^{[t]}_D(S_t)   } ; A_{0:T-1} \sim \pi^e }  \nonumber \\ 
    &= \sum_{t=0}^{T-1}\EE\bracks{ \braces{\prod_{k=0}^{t-1} \rI(O_k=o_k,A_k=a_k)(\Tcal^{\Scal}_t b_D)(S_t) } ; A_{0:T-1} \sim \pi^e  } \\ 
      &= \sum_{t=0}^{T-1}\EE\bracks{ \braces{\prod_{k=0}^{t-1} \rI(O_k=o_k,A_k=a_k)\EE[(\Tcal^{\Scal}_t b_D)(S_t) \mid (O_1,A_1,\cdots,A_K)=(o_1,a_1,\cdots,o_K)  ] } ; A_{0:T-1} \sim \pi^e  } \label{eq:sum} %
\end{align}
} 
From the second line to the third line, we use 
\begin{align*}
    &\EE[\rI(O = o_t, A=a_t)\mu(O,A) l^{[t+1]}_D(S') \mid  S ]\\
    &= \EE[\rI(O = o_t, A=a_t)\mu(O,A) \EE[b^{[t+1]}_D(F') \mid S' ] \mid  S] \tag{Definition} \\
    &= \EE[\rI(O = o_t, A=a_t)\mu(O,A) \EE[b^{[t+1]}_D(F') \mid S',O=o_t,A=a_t] \mid  S] \tag{Low of total expectation}\\
    &= \EE[\rI(O = o_t, A=a_t)\mu(O,A) b^{[t+1]}_D(F') \mid  S].  \tag{$F'\perp O,A \mid S'$ }
\end{align*}

Besides, we know for any learnable bridge function $b_D$, we have 
\begin{align*}
     \EE[\rI(O=o_t,A=a_t)\mu(O,A)b^{[t+1]}_D( F') -b^{[t]}_D( F)    \mid H]  =0. 
\end{align*}
Then, since 
\begin{align*}
     \EE[\EE[\rI(O=o_t,A=a_t)\mu(O,A)b^{[t+1]}_D( F') -b^{[t]}_D( F)\mid  S  ]   \mid H]  =0
\end{align*}
from the invertibility condition (b), we have
\begin{align*}
    \EE[\rI(O=o_t,A=a_t)\mu(O,A)b^{[t+1]}_D( F') -b^{[t]}_D( F)\mid  S  ]=0. 
\end{align*}
From the overlap (c), this implies 
\begin{align*}
     (\Tcal^{\Scal}_t b_D)( \grave S_t) = 0.
\end{align*}
Finally, from \pref{eq:sum}, we can conclude $\Prr_{\pi^e}(o_0,a_0,\cdots,a_{T-1})- \EE_{ f \sim \nu_{ \Fcal}}[ b^{[0]}_D ( f)]=0$.

\subsection{Proof of \pref{thm:system_identification2}}

This is proved as in \pref{thm:system_identification1} noting
\begin{align*}
    \Prr_{\pi^e}(o_0,a_0,\cdots,a_{T-1},\Ob_T)  = \EE\bracks{\phi_{\Ocal}(O_T) \prod_{k=0}^{T-1} \rI(O^{\diamond}_k=o_k,A^{\diamond}_k=a_k)}. 
\end{align*}

\subsection{Proof of Corollary~\ref{cor:system_identification}}

This is proved following Corollary \pref{cor:linear_system}. 

\subsection{Proof of Corollary~\ref{cor:linear_system}}

The proof consists of four steps. The proof largely follows the proof of Corollary \ref{cor:linear_finite}. 

\paragraph{First step: verify the existence of bridge functions (\ref{thm:system_identification1}a). }

We show there exists a bridge function linear in $\phi_{ \Fcal}( F)$. We need to find a solution to $w^{\top}_1 \EE[\phi_{ \Fcal}( F)\mid  S] = V^{\pi^e}_{D,[t]}( S)$ with respect to $w_1$. Then, from (LM2D), there exists $w_2$ and $K_1$ such that 
\begin{align*}
    w^{\top}_1 K_1 \phi_{ \Scal}( S)= w^{\top}_2 \phi_{ \Scal}( S).   
\end{align*}
Thus, from (LM3), the above has a solution with respect to $w_1$.

\paragraph{Second step: verify invertibility conditions (\ref{thm:system_identification1}b). }

We take $\Qcal_t$ to be a linear model in $\phi_{ \Fcal}( F)$. For $q^{[t]}_D \in \Qcal_t$, from (LM1), (LM2), (LM4), there exists $w_3 \in \RR^{d_{ \Scal}}$ such that 
\begin{align*}
    \EE[q^{[t]}_D( F)- \II(O=o_t,A=a_t) \mu(O,A)q^{[t+1]}_D( F') \mid  S ] = w^{\top}_3 \phi_{ \Scal}( S). 
\end{align*}
Then, $w^{\top}_3 \EE[\phi_{ \Scal}( S) \mid H]=0$ implies $w^{\top}_3\phi_{ \Scal}( S)=0$ from (LM5). Hence, 
\begin{align*}
    0 &= \EE[q^{[t]}_D( F)- \II(O=o_t,A=a_t) \mu(O,A)q^{[t+1]}_D( F') \mid H] \\ 
     &=\EE[\EE[q^{[t]}_D( F)- \II(O=o_t,A=a_t) \mu(O,A)q^{[t+1]}_D( F')\mid  S] \mid H] 
\end{align*}
implies $\EE[q^{[t]}_D( F)- \II(O=o_t,A=a_t) \mu(O,A)q^{[t+1]}_D( F')\mid  S]=0$. Thus, the invertibility is concluded. 

\paragraph{Third step: show learnable value bridge functions are value bridge functions. }

From the previous discussion, learnable value bridge functions in $\Qcal$ need to satisfy
\begin{align*}
    0 = \EE[b^{[t]}_D( F)- \II(O=o_t,A=a_t) \mu(O,A)b^{[t+1]}_D( F')\mid  S]. 
\end{align*}
Here, we want to prove 
\begin{align*}
    \EE[b^{[t]}_D( F) \mid  S] = V^{\pi^e}_{D,[t]}( S). 
\end{align*}
We use induction. When $t=T-1$, this is clear. Next, supposing the statement is true at $t+1$, we prove it at horizon $t$. Here, we have 
\begin{align*}
    \EE[b^{[t]}_D( F) \mid  S] &= \EE[\II(O=o_t,A=a_t) \mu(O,A)b^{[t+1]}_D( F')\mid  S] \\ 
    &= \EE[\II(O=o_t,A=a_t) \mu(O,A) \EE[b^{[t+1]}_D( F')\mid  S']\mid  S] \\
    &= \EE[\II(O=o_t,A=a_t) \mu(O,A) V^{\pi^e}_{D,[t+1]}( S') \mid  S] \\
    & = V^{\pi^e}_{D,[t]}( S). 
\end{align*}

\paragraph{Fourth step: show the final formula. }

We recursively define 
\begin{align*}
     \theta^{\top}_t = \theta^{\top}_{t+1} D_t B^+. 
\end{align*}
starting from $ \tilde w^{\top}_T B =  \EE[\phi_{\Hcal}(H) ].$ 
We want to show $\theta^{\top}_t \phi_{ \Fcal}(\cdot)$ is a learnable value bridge function. Here, we want to say 
\begin{align*}
    \EE[\theta^{\top}_t \phi_{ \Fcal}( F)- \II(O=o_t,A=a_t)\mu(O,A) \theta^{\top}_{t+1} \phi_{ \Fcal}( F') \mid H]=0. 
\end{align*}
This is satisfied if we have 
\begin{align*}
      \EE\bracks{ \prns{ \theta^{\top}_t \phi_{ \Fcal}( F)- \II(O=o_t,A=a_t)\mu(O,A) \theta^{\top}_{t+1} \phi_{ \Fcal}( F')}\phi_{ \Hcal}( H) }  =0 
\end{align*}
Here, refer to the fourth step in the proof of Corollary \ref{cor:linear_finite}. Besides, as we already show linear value bridge functions exist $\langle \tilde \theta_t, \phi_{\Fcal}(\cdot)\rangle $, we have a solution to: 
\begin{align*}
      \EE\bracks{ \prns{ \tilde  \theta^{\top}_t \phi_{ \Fcal}( F)- \II(O=o_t,A=a_t)\mu(O,A) \tilde  \theta^{\top}_{t+1} \phi_{ \Fcal}( F')}\phi_{ \Hcal}( H) }  =0 
\end{align*}
with respect to $\tilde \theta_t$. 

Hereafter, we use the induction. This is immediately proved when $t=T-1$. Here, suppose $\phi^{\top}_{ \Fcal}( F)\theta_{t+1}$ is a learnable value bridge function at $t+1$. Then, we have 
\begin{align*}
    & \EE[\phi_{\Hcal}(H) \phi_{ \Fcal}( F)^{\top}]\theta_t \\
    &=\EE[\phi_{\Hcal}(H) \phi_{ \Fcal}( F)^{\top}] \EE[\phi_{\Hcal}(H) \phi_{ \Fcal}( F)^{\top}]^{+}D^{\top}_t \theta_{t+1} \\
    &= \EE[\phi_{\Hcal}(H) \phi_{ \Fcal}( F)^{\top}] \EE[\phi_{\Hcal}(H) \phi_{ \Fcal}( F)^{\top}]^{+}\EE[\rI(O=O_t,A=a_t)\mu(O,A)\phi_{\Hcal}(H)\phi^{\top}_{ \Fcal}( F')\theta_{t+1 } ] \\
    &=  \EE[\phi_{\Hcal}(H) \phi_{ \Fcal}( F)^{\top}] \EE[\phi_{\Hcal}(H) \phi_{ \Fcal}( F)^{\top}]^{+}\EE[\rI(O=O_t,A=a_t)\mu(O,A)\phi_{\Hcal}(H)V^{\pi^e}_{D,[t+1]}( S')  ]. 
\end{align*}
In the last line, we use inductive hypothesis. We showed in the previous step $ \EE[b^{[t]}_D( F) \mid  S] = V^{\pi^e}_{D,[t]}( S)$ for any learnable value bridge function $b^{[t]}_D( F)$. Then, we have 
\begin{align*}
     & \EE[\phi_{\Hcal}(H) \phi_{ \Fcal}( F)^{\top}] \EE[\phi_{\Hcal}(H) \phi_{ \Fcal}( F)^{\top}]^{+}\EE[\rI(O=O_t,A=a_t)\mu(O,A)\phi_{\Hcal}(H)V^{\pi^e}_{D,[t+1]}( S')  ] \\
     &=\EE[\phi_{\Hcal}(H) \phi_{ \Fcal}( F)^{\top}] \EE[\phi_{\Hcal}(H) \phi_{ \Fcal}( F)^{\top}]^{+}\EE[\rI(O=O_t,A=a_t)\mu(O,A)\phi_{\Hcal}(H)\{ \phi^{\top}_{ \Fcal}( F')\tilde \theta_{t+1 }\} ] \\ 
     &= \EE[\phi_{\Hcal}(H) \phi_{ \Fcal}( F)^{\top}]\EE[\phi_{\Hcal}(H) \phi_{ \Fcal}( F)^{\top}]^+ \EE[\phi_{\Hcal}(H) \phi_{ \Fcal}( F)^{\top}]\tilde \theta_t \\
     &= \EE[\phi_{\Hcal}(H) \phi_{ \Fcal}( F)^{\top}]\tilde \theta_t  \tag{Property of Moore-Penrose Inverse}\\
     &= \EE[\rI(O=O_t,A=a_t)\mu(O,A)\phi_{\Hcal}(H)V^{\pi^e}_{D,[t+1]}( S')  ] \tag{We show in the previous step}\\
     &= \EE[\rI(O=O_t,A=a_t)\mu(O,A)\phi_{\Hcal}(H)\{ \phi^{\top}_{ \Fcal}( F') \theta_{t+1 }\}   ].  \tag{From the induction}
\end{align*}

So far, we prove $\theta^{\top}_t \phi_{ \Fcal}(\cdot)$ is a learnable value bridge function for $t$. Finally, by consdiering a time-step $t=0$, we can prove the target estimand is 
\begin{align*}
    \EE[\tilde w^{\top}_0 \phi_{ \Fcal}( F) ] = \EE[\phi_{\Hcal}(H)]^{\top}B^+ \braces{\prod^{0}_{t= T-1} D_t B^+ } C. 
\end{align*}

\section{Auxiliary Lemmas}

\begin{lemma}\label{lem:auxi}
Take $g \in [ \Scal \to \RR]$. Then, we have 
\begin{align*}
          0= (1-\gamma)^{-1} \EE_{ s \sim d_{\pi^e} } \left[ \EE[\mu(O,A)\gamma g(S') - g(S)\mid  S= s ] \right] + \EE_{ s \sim \nu_{S}}[gs]. 
\end{align*}
\end{lemma}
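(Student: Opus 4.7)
The plan is to unroll the discounted occupancy $d_{\pi^e}$ into its per-step components, apply an importance-sampling identity to rewrite the inner expectation in terms of $\pi^e$-induced marginals, and then observe that the resulting sum telescopes to the initial-distribution term. Concretely, recall that $d_{\pi^e}(\cdot) = (1-\gamma)\sum_{t=0}^{\infty}\gamma^t d_t^{\pi^e}(\cdot)$, where $d_t^{\pi^e}$ is the marginal of $\bar S_t$ under $\pi^e$, and $d_0^{\pi^e}$ coincides with the initial distribution $\nu_{\bar S}$ on $\bar{\mathcal S}$.

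First I would use the definition of $d_{\pi^e}$ to write
\begin{align*}
(1-\gamma)^{-1}\EE_{\bar s \sim d_{\pi^e}}\bracks{\EE[\mu(Z,O,A)\gamma g(\bar S') - g(\bar S)\mid \bar S=\bar s]} = \sum_{t=0}^{\infty}\gamma^t\EE_{\bar s \sim d_t^{\pi^e}}\bracks{\EE[\mu(Z,O,A)\gamma g(\bar S') - g(\bar S)\mid \bar S=\bar s]}.
\end{align*}
Next, using the importance-sampling identity $\EE[\mu(Z,O,A) g(\bar S')\mid \bar S=\bar s] = \EE_{\pi^e}[g(\bar S')\mid \bar S=\bar s]$ together with the Markov property (so that $\bar S' \sim d_{t+1}^{\pi^e}$ once we integrate $\bar s \sim d_t^{\pi^e}$ and push forward one step under $\pi^e$), each summand becomes $\gamma\, \EE_{\bar s'\sim d_{t+1}^{\pi^e}}[g(\bar s')] - \EE_{\bar s \sim d_t^{\pi^e}}[g(\bar s)]$.

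Then the series telescopes:
\begin{align*}
\sum_{t=0}^{\infty}\gamma^t\prns{\gamma\,\EE_{d_{t+1}^{\pi^e}}[g] - \EE_{d_t^{\pi^e}}[g]} = \sum_{t=0}^{\infty}\prns{\gamma^{t+1}\EE_{d_{t+1}^{\pi^e}}[g] - \gamma^t\EE_{d_t^{\pi^e}}[g]} = -\EE_{d_0^{\pi^e}}[g] = -\EE_{\bar s\sim \nu_{\bar S}}[g(\bar s)],
\end{align*}
which rearranges to the claim. The only delicate step is justifying that the tail $\gamma^T\EE_{d_T^{\pi^e}}[g]$ vanishes as $T\to\infty$, which holds provided $g$ is bounded (or more generally $\EE_{d_t^{\pi^e}}[|g|]$ grows sub-geometrically in $1/\gamma$); this is the only place some implicit regularity on $g$ is needed, and it is immediate under the bounded function classes used throughout the paper. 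The Markov-property step also tacitly uses stationarity of the transition, which was imposed in \pref{sec:preliminary}.
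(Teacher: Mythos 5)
Your proof is correct and follows essentially the same route as the paper's: both expand $d_{\pi^e}$ as $(1-\gamma)\sum_t \gamma^t d^{\pi^e}_t$, use the change-of-measure identity $\EE[\mu(Z,O,A)g(\bar S')\mid \bar S] = \EE_{\pi^e}[g(\bar S')\mid \bar S]$ together with the one-step pushforward $d^{\pi^e}_t \mapsto d^{\pi^e}_{t+1}$, and reduce the identity to the $t=0$ term. The only cosmetic difference is that the paper shifts the index on the aggregated occupancy measure in one step, whereas you telescope term by term (and correctly note the vanishing tail for bounded $g$).
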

\begin{proof}
Let $d_{\pi^e}(z,s) = d^{\pi^e}_0(\cdot)$. Then, we have
\begin{align*}
  & \int g(z,s)  d_{\pi^e}(z,s) d(z,s) \\ 
  &=(1-\gamma) \int g(z,s) \sum_{t=0}^{\infty} \gamma^t d^{\pi^e}_t(z,s) d(z,s)\\ 
   &= \underbrace{(1-\gamma)\int g(z,s)d^{\pi^e}_0(z,s) d(z,s)}_{(a)} + \underbrace{\gamma (1-\gamma) \int g(z',s') \sum_{t=1}^{\infty} \gamma^{t-1} d^{\pi^e}_t(z',s') d(z',s')}_{(b)}.   
\end{align*}
We analyze the first term (a) and the second term (b). The first term (a) is $\EE_{ s \sim \nu_{S}}[gs].$ In the following, we analyze the second term.

Here, note 
\begin{align*}
    d^{\pi^e}_t(z',s') = \int \TT(s'\mid s,a)\OO(o \mid s)\pi^e(a \mid z,o)\delta^{\dagger}(z'=z) d^{\pi^e}_{t-1}(z,s) d(z,s). 
\end{align*}
where  $\delta^{\dagger}(z'=z)=\delta(f_{-1}(z')=f_{+1}(z))$. Here, $f_{-1}$ is a transformation removing the most recent tuple $(o,a,r)$ and $f_{+1}$ is a transformation removing the oldest tuple $(o,a,r)$. Hence,  
\begin{align*}
   & (1-\gamma)\sum_{t=1}^{\infty} \gamma^{t-1} d^{\pi^e}_t(z',s') \\
   &=  (1-\gamma) \int \sum_{t=1}^{\infty}\gamma^{t-1} \TT(s'\mid s,a)\OO(o \mid s) \pi^e(a \mid z,o)\delta^{\dagger}(z'=z) d^{\pi^e}_{t-1}(z,s)d(z,s) \\
   &= (1-\gamma)\int   \sum_{k=0}^{\infty}\gamma^{k} \TT(s'\mid s,a)\OO(o \mid s) \mu(z,o,a)\pi^b(a \mid z,o)\delta^{\dagger}(z'=z) d^{\pi^e}_{k}(z,s) d(z,s) \\
   &= \int   \TT(s'\mid s,a)\OO(o \mid s) \mu(z,o,a)\pi^b(a \mid z,o)\delta^{\dagger}(z'=z) d^{\pi^e}(z,s) d(z,s). 
\end{align*}
Therefore, the term (b) is 
\begin{align*}
    & \gamma (1-\gamma) \int g(z',s') \sum_{t=1}^{\infty} \gamma^{t-1} d^{\pi^e}_t(z',s') d(z',s') \\
    &=\gamma \int   g(z',s') \TT(s'\mid s,a)\OO(o \mid s) \mu(z,o,a)\pi^b(a \mid z,o)\delta^{\dagger}(z'=z) d^{\pi^e}(z,s) d(z,z',s,s') \\ 
       &= \gamma \EE_{ s \sim d_{\pi^e} } \left[ \EE[\mu(O,A)\gamma g(S') \mid  S= s ] \right]. 
\end{align*}

\end{proof}

\end{document}